\newcommand{\algcomment}[1]{\textcolor{blue!70!black}{\footnotesize{\texttt{\textbf{//
          #1}}}}}
\newcommand{\poly}{\mathrm{poly}}
\newcommand{\calX}{\mathcal{X}}
\numberwithin{equation}{section}
\newcommand{\fro}{\mathrm{F}}
\newcommand{\op}{\mathrm{op}}
\DeclarePairedDelimiter\ceil{\lceil}{\rceil}
 \theoremstyle{plain}
\theoremstyle{plain}
\newtheorem{thm}{Theorem}
\newtheorem{lem}{Lemma}[section]
\newtheorem{cor}{Corollary}
\theoremstyle{definition}
\newtheorem{defn}{Definition}[section]
\newtheorem{exmp}{Example}[section]
\newtheorem{rem}{Remark}[section]
\renewcommand{\Pr}{\mathbb{P}}
\newcommand{\Exp}{\mathbb{E}}
\newcommand{\KL}{\mathrm{KL}}
\newcommand{\rmd}{\mathrm{d}}
\newcommand{\dist}{\mathrm{dist}}
\newcommand{\R}{\mathbb{R}}
\newcommand{\I}{\mathbb{I}}
\DeclareMathOperator*{\argmax}{arg\,max}
\DeclareMathOperator*{\argmin}{arg\,min}
\def\ddefloop#1{\ifx\ddefloop#1\else\ddef{#1}\expandafter\ddefloop\fi}
\def\ddef#1{\expandafter\def\csname bb#1\endcsname{\ensuremath{\mathbb{#1}}}}
\def\ddefloop#1{\ifx\ddefloop#1\else\ddef{#1}\expandafter\ddefloop\fi}
\def\ddef#1{\expandafter\def\csname fr#1\endcsname{\ensuremath{\mathfrak{#1}}}}
\def\ddefloop#1{\ifx\ddefloop#1\else\ddef{#1}\expandafter\ddefloop\fi}
\def\ddef#1{\expandafter\def\csname scr#1\endcsname{\ensuremath{\mathscr{#1}}}}
\def\ddefloop#1{\ifx\ddefloop#1\else\ddef{#1}\expandafter\ddefloop\fi}
\def\ddef#1{\expandafter\def\csname b#1\endcsname{\ensuremath{\mathbf{#1}}}}
\def\ddef#1{\expandafter\def\csname c#1\endcsname{\ensuremath{\mathcal{#1}}}}
\def\ddef#1{\expandafter\def\csname h#1\endcsname{\ensuremath{\widehat{#1}}}}
\def\ddef#1{\expandafter\def\csname t#1\endcsname{\ensuremath{\widetilde{#1}}}}
\def\ddefloop#1{\ifx\ddefloop#1\else\ddef{#1}\expandafter\ddefloop\fi}
\def\ddef#1{\expandafter\def\csname mat#1\endcsname{\ensuremath{\mathbf{#1}}}}
\newcommand{\Vst}{V^\star}
\newcommand{\Vpi}{V^\pi}
\newcommand{\Qst}{Q^\star}
\newcommand{\pist}{\pi^\star}
\newcommand{\pihat}{\widehat{\pi}}
\newcommand{\what}{\widehat{w}}
\newcommand{\Qpi}{Q^\pi}
\newcommand{\simplex}{\bigtriangleup}
\newcommand{\iotaeps}{\iota_\epsilon}
\newcommand{\cOtil}{\widetilde{\cO}}
\newcommand{\Vpihat}{V^{\pihat}}
\renewcommand{\ast}{a^\star}
\newcommand{\cXtil}{\widetilde{\cX}}
\newcommand{\betatil}{\widetilde{\beta}}
\newcommand{\algname}{\textsc{Force}\xspace}
\newcommand{\inner}[2]{\left \langle #1, #2 \right \rangle}
\newcommand{\innerb}[2]{\langle #1, #2 \rangle}
\newcommand{\bx}{\bm{x}}
\newcommand{\Ball}{\mathcal{B}}
\newcommand{\calN}{\mathcal{N}}
\newcommand{\covnum}{\mathsf{N}}
\newcommand{\bLambda}{\bm{\Lambda}}
\newcommand{\bv}{\bm{v}}
\newcommand{\bphi}{\bm{\phi}}
\newcommand{\btheta}{\bm{\theta}}
\newcommand{\bw}{\bm{w}}
\newcommand{\Fclass}{\mathscr{F}}
\renewcommand{\what}{\widehat{\bm{w}}}
\newcommand{\bthetast}{\btheta_{\star}}
\newcommand{\bthetahat}{\widehat{\btheta}}
\newcommand{\bu}{\bm{u}}
\newcommand{\bmu}{\bm{\mu}}
\newcommand{\Rclass}{\mathscr{R}}
\newcommand{\fes}{\textsc{Egs}\xspace}
\newcommand{\cdalg}{\textsc{CoverTraj}\xspace}
\newcommand{\cEw}{\cE_{Q}}
\newcommand{\Kmax}{K_{\mathrm{max}}}
\newcommand{\unif}{\mathrm{unif}}
\newcommand{\pacalg}{\textsc{RFLin-Plan}\xspace}
\newcommand{\rfexp}{\textsc{RFLin-Explore}\xspace}
\newcommand{\bphihat}{\widehat{\bphi}}
\newcommand{\bphist}{\bphi^\star}
\newcommand{\be}{\bm{e}}
\newcommand{\sbar}{\bar{s}}
\newcommand{\bphitil}{\widetilde{\bphi}}
\newcommand{\bern}{\mathrm{Bernoulli}}
\newcommand{\bgamma}{\bm{\gamma}}
\newcommand{\regmin}{\textsc{RegMin}\xspace}
\newcommand{\piexp}{\pi_{\mathrm{exp}}}
\def\Ebb{\mathbb{E}}
\newcommand{\E}{\Ebb}
\newcommand{\field}[1]{\mathbb{#1}}
\newcommand{\fB}{\field{B}}
\newcommand{\fS}{\field{S}}
\newcommand{\calD}{{\mathcal{D}}}
\newcommand{\one}{\boldsymbol{1}}
\newlength\tindent
\newcommand{\rev}[1]{#1}
\providecommand\theHALG@line{\thealgorithm.\arabic{ALG@line}}
\title{Reward-Free RL is No Harder Than Reward-Aware RL in Linear Markov Decision Processes}
\author{Andrew Wagenmaker\footnote{University of Washington, Seattle. Email: \texttt{ajwagen@cs.washington.edu}} \and Yifang Chen\footnote{University of Washington, Seattle. Email: \texttt{yifangc@cs.washington.edu}} \and Max Simchowitz\footnote{CSAIL, MIT. Email: \texttt{msimchow@mit.edu}} \and Simon S. Du\footnote{University of Washington, Seattle. Email: \texttt{ssdu@cs.washington.edu}} \and Kevin Jamieson\footnote{University of Washington, Seattle. Email: \texttt{jamieson@cs.washington.edu}}}
\date{June 18, 2022}
\begin{document}

\maketitle

\begin{abstract}

Reward-free reinforcement learning (RL) considers the setting where the agent does not have access to a reward function during exploration, but must propose a near-optimal policy for an arbitrary reward function revealed only after exploring. In the the tabular setting, it is well known that this is a more difficult problem than reward-aware (PAC) RL---where the agent has access to the reward function during exploration---with optimal sample complexities in the two settings differing by a factor of $|\mathcal{S}|$, the size of the state space. We show that this separation does not exist in the setting of linear MDPs. We first develop a computationally efficient algorithm for reward-free RL in a $d$-dimensional linear MDP with sample complexity scaling as $\widetilde{\mathcal{O}}(d^2 H^5/\epsilon^2)$. We then show a lower bound with matching dimension-dependence of $\Omega(d^2 H^2/\epsilon^2)$, which holds for the reward-aware RL setting. To our knowledge, our approach is the first computationally efficient algorithm to achieve optimal $d$ dependence in linear MDPs, even in the single-reward PAC setting. Our algorithm relies on a novel procedure which efficiently traverses a linear MDP, collecting samples in any given ``feature direction'', and enjoys a sample complexity scaling optimally in the (linear MDP equivalent of the) maximal state visitation probability. We show that this exploration procedure can also be applied to solve the problem of obtaining ``well-conditioned'' covariates in linear MDPs.

\end{abstract}


\section{Introduction}
Efficiently exploring unknown, stochastic environments is a central challenge in online reinforcement learning (RL). Whether attempting to learn a near-optimal policy or achieving large online reward, virtually all provably efficient algorithms rely on some form of exploration to guarantee the state space of the Markov Decision Process (MDP) is traversed. Indeed, failure to guarantee such exploration can result in very suboptimal performance, as near-optimal actions may be under-explored, leading to missed reward.

The \emph{reward-free} RL setting highlights the role of exploration in RL by tasking the learner with exploring their environment without access to a reward function, then revealing a reward function, and asking the learner to produce a near-optimal policy for that reward function. In a sense, to solve this problem the learner must traverse the \emph{entire} MDP, as they do not know which states and actions will lead to high reward under the yet-to-be-revealed reward function. In contrast, the \emph{reward-aware} RL setting (also known as the PAC RL setting) gives the learner access to the reward function from the beginning. The learner must again explore the MDP so as to learn a near-optimal policy, but now they may direct their exploration to focus, for instance, on the regions of the environment with large reward.

In the setting of tabular MDPs, MDPs with a finite number of states and actions, it is well known that the reward-free problem is more difficult than the reward-aware problem. Indeed, for an MDP with $|\cS|$ states and $|\cA|$ actions, to find an $\epsilon$-optimal policy, the sample complexity in the reward-free setting is known to scale as \rev{(ignoring $H$ factors)} $\Theta(|\cS|^2 |\cA|/\epsilon^2)$, while in the reward-aware setting it scales as $\Theta(|\cS| |\cA|/\epsilon^2)$, a difference of $|\cS|$. 
Intuitively, this reduction in complexity results from the above observation: when given access to a reward function, the learner need not learn every transition to equal precision, but can focus on the most relevant ones, those leading to high reward.

Recently, the RL community has turned its attention to MDPs with large state-spaces, showing that efficient learning is possible with function approximation techniques. While much progress has been made, fundamental questions remain: the optimal rates are not known for either the reward-free RL or reward-aware RL  problems, and it is also not known whether the rates exhibit a gap similar to the one in the tabular setting.   

In this work we study \emph{linear function approximation}, in particular the linear MDP setting, and show that\rev{, up to $H$ factors,} \textbf{\emph{reward-free RL is no harder than reward-aware RL in linear MDPs}}. We develop a computationally efficient reward-free algorithm which, in a $d$-dimensional linear MDP, is able to learn an $\epsilon$-optimal policy for an arbitrary number of reward function using only $\cOtil(d^2 H^5/\epsilon^2)$ samples. We then show a lower bound on reward-aware RL of $\Omega(d^2 H^2/\epsilon^2)$. Our results imply that in RL with function approximation, there is no advantage to ``directing'' exploration given access to a reward function: in the worst case, all transitions must still be explored. Furthermore, to our knowledge, this is the first result that settles the optimal $d$ dependence attainable by a computationally efficient algorithm in a linear MDP in any problem setting (reward-free, reward-aware, or regret minimization).

Our results critically rely on a novel exploration strategy able to generate ``covering trajectories''. In particular, our procedure is able to traverse the MDP and collect data in each feature direction up to a given, desired tolerance. Critically, the complexity of this procedure scales with the inverse of the ``set visitation'' probability, rather than the inverse squared, which proves essential in obtaining the optimal sample complexity. As a corollary of this approach, we also show how to collect well-conditioned covariates: covariates with lower-bounded minimum eigenvalue.

\section{Related Work}
We highlight three directions in RL that our work relates to.

\paragraph{Exploration in Reinforcement Learning.}
Arguably the most common approach to exploration in RL is that of \emph{optimism}, which is used by a host of works \citep{azar2017minimax,jin2018q,zanette2019tighter,jin2020provably,zhou2020nearly,zhang2020reinforcement}, and is typically employed to balance the exploration-exploitation tradeoff and achieve low regret. The \emph{reward-free} RL setting seeks to explore an MDP without access to a reward function, in order to then determine a near-optimal policy for an arbitrary reward function. This has been studied in the tabular setting \citep{jin2020reward,menard2020fast,zhang2020nearly,wu2021gap}, where the optimal scaling is known to be $\Theta(|\cS|^2 |\cA|/\epsilon^2)$ \citep{jin2020reward}, as well as the function approximation setting \citep{zanette2020provably,wang2020reward,zhang2021reward}. 
In the linear MDP setting, \citet{wang2020reward} show a sample complexity of $\cO(\frac{d^3 H^6}{\epsilon^2})$, and \citet{zanette2020provably} show a complexity of $\cO(\frac{d^3 H^5}{\epsilon^2})$, yet neither provides lower bounds. A related line of work on ``reward-free'' RL in linear MDPs seeks to learn a good feature representation \citep{agarwal2020flambe,modi2021model}. The exploration strategy we employ is related to that proposed in \citet{zhang2020nearly} and extended by \citet{wagenmaker2021beyond}, though both of these works consider the tabular setting. A final work of note is \citet{tarbouriech2020provably}, which seeks to collect an arbitrary desired number of samples from each state in a tabular setting.

\paragraph{Reinforcement Learning with Function Approximation.}
A subject of much recent interest in the RL community is that of RL with function approximation. The majority of attention has been devoted to MDPs with linear structure \citep{yang2019sample,jin2020provably,wang2019optimism,du2019good,zanette2020frequentist,zanette2020learning,ayoub2020model,jia2020model,modi2020sample,weisz2021exponential,zhou2020nearly,zhou2021provably,zhang2021variance,wang2021exponential,wagenmaker2021first,huang2022towards}. Several different settings of MDPs with linear structure have been proposed. Most common are the linear MDP assumption \citep{jin2020provably}, which is the setting we consider in this work, and the linear mixture MDP assumption \citep{jia2020model,ayoub2020model,zhou2020nearly}. A second line of work seeks to determine more general conditions that allow for efficient learning in large state-space MDPs and general function approximation techniques \citep{jiang2017contextual,du2021bilinear,jin2021bellman,foster2021statistical}.

\paragraph{Reward-Aware Reinforcement Learning.}
Reward-aware RL (also known as PAC RL) has a long history in reinforcement learning, in particular in the tabular setting \citep{kearns2002near,kakade2003sample,dann2015sample,dann2017unifying}. The current state of the art in tabular PAC RL, which achieves an optimal scaling of $\cO(|\cS||\cA|/\epsilon^2)$, is \citep{dann2019policy,menard2020fast}. 
In the linear MDP setting, the literature has tended to focus on achieving low regret. However, any low-regret algorithm can be used to obtain an $\epsilon$-optimal policy, thereby solving the PAC problem, via an \emph{online-to-batch conversion} \citep{jin2018q,menard2020fast}. Existing low-regret algorithms in linear MDPs \citep{jin2020provably,zanette2020learning,wagenmaker2021first} can thus be used to solve PAC RL. However, computationally efficient procedures \citep{jin2020provably,wagenmaker2021first} at best achieve a sample complexity of $\cO(d^3 \cdot \poly(H)/\epsilon^2)$. While \citep{zanette2020learning,jin2021bellman} achieve complexities of $\cO(d^2 \cdot \poly(H)/\epsilon^2)$, their approaches are computationally inefficient. 
Note that it is not straightforward to apply these algorithms to the reward-free setting---they rely on access to a reward function during exploration. To our knowledge no lower bounds exist for PAC RL in linear MDPs. As such, it remains an open question what the optimal $d$ dependence is, and if it can be obtained by a computationally efficient algorithm.


\section{Preliminaries}

\newcommand{\roundtwo}{\mathsf{rnd}_2}
\paragraph{Notation.}
All logarithms $\log$ are base-$e$ unless otherwise noted. We let $[m] = \{ 1,2,\ldots,m \}$, $\Ball^d(R) := \{\bx \in \R^d:\|\bx\|\le R\}$, the ball of radius $R$ in $\R^d$, and specialize $\Ball^d := \Ball^d(1)$ to the unit ball. $\cS^{d-1}$ denotes the unit sphere in $\R^d$. 
$\cO(\cdot)$  hides absolute constants, and $\cOtil(\cdot)$  hides absolute constants and logarithmic terms. Throughout, bold characters refer to vectors and matrices and standard characters refer to scalars.

\subsection{Markov Decision Processes}
We study finite-horizon, episodic Markov Decision Processes with time-varying transition kernel. We denote an MDP by the tuple $\cM = (\cS,\cA,H,\{P_h \}_{h=1}^H, \{ r_h \}_{h=1}^H)$, where $\cS$ is the set of states, $\cA$ the set of actions, $H$ the horizon, $\{P_h \}_{h=1}^H$ the transition kernel, $P_h : \cS \times \cA \rightarrow \simplex(\cS)$, and $\{ r_h \}_{h=1}^H$ the reward, $r_h : \cS \times \cA \rightarrow [0,1]$, which we assume is deterministic\footnote{The assumption that the rewards are deterministic is for expositional convenience only---all our results could easily be modified to handle random rewards.}. We do not include an initial state distribution but instead assume the MDP always starts in state $s_1$ and encode the initial state distribution in the first transition, which is without loss of generality. We assume that $\{P_h \}_{h=1}^H$ is initially unknown.

A policy, $\pi = \{ \pi_h \}_{h=1}^H$, $\pi_h : \cS \rightarrow \simplex(\cA)$, is a mapping from states to actions. Given a policy $\pi$, an episode starts in state $s_1$, the agent takes action $a_1 \sim \pi_1(s_1)$, the MDP transitions to state $s_2 \sim P_1(\cdot | s_1,a_1)$, and the agent receives (and observes) reward $r_1(s_1,a_1)$. This process repeats for $H$ steps: at step $h$, if the agent is in state $s_h$, they take action $a_h \sim \pi_h(s_h)$, transition to state $s_{h+1} \sim P_h(\cdot | s_h, a_h)$, and receive reward $r_h(s_h,a_h)$. After $H$ steps the episode terminates and restarts at $s_1$. In the special case when the policy is deterministic---$\pi_h$ is supported on only one action for each $s$ and $h$---we will denote this action as $\pi_h(s)$. 

We let $\Exp_h[V](s,a) = \Exp_{s' \sim P_h(\cdot | s,a)}[V(s')]$, so $\Exp_h[V](s,a)$ denotes the expected next-state value of $V$ given from state $s$ after playing action $a$ at time $h$. For a fixed policy $\pi$, we let $\Exp_{\pi}[\cdot]$ denote the expectation over the trajectory $(s_1,a_1,\ldots,s_H,a_H)$ induced by $\pi$ on the MDP.

\paragraph{Value Functions.}
For a policy $\pi$, the $Q$-value function, $\Qpi_h : \cS \times \cA \rightarrow [0,H]$ is defined as the expected reward one would obtain if action $a$ is taken in state $s$ at step $h$ and then $\pi$ is followed for all subsequent steps. Precisely, 
\begin{align*}
\Qpi_h(s,a) = \Exp_\pi \left [ \sum_{h' = h}^H r_{h'}(s_{h'},a_{h'}) | s_h = s, a_h = a \right ] .
\end{align*}
The value function, $\Vpi_h : \cS \rightarrow [0,H]$, is defined as the expected reward one would obtain if policy $\pi$ is played from state $s$ at step $h$ onwards. In terms of the $Q$-value function, $\Vpi_h(s) = \Exp_{a \sim \pi_h(s)}[\Qpi_h(s,a)]$. The $Q$-value function and value function can be related by the \emph{Bellman equation}:
\begin{align*}
\Qpi_h(s,a) = r_h(s,a) + \Exp_h[\Vpi_{h+1}](s,a) .
\end{align*}
For simplicity and to ensure this relationship holds for all $h \in [H]$, we define $\Vpi_{H+1}(s) = \Qpi_{H+1}(s,a) := 0$ for all $s,a$, and $\pi$. We denote the \emph{value} of a policy by $\Vpi_0 := \Exp_{\pi}[\Vpi_1(s_1)]$, the expected reward $\pi$ obtains in an episode.
The optimal policy, $\pist$, is the policy which achieves the largest expected reward: $V^{\pist}_0 = \sup_\pi \Vpi_0$ (note that $\pist$ need not be unique). We will denote the value function and $Q$-value function associated with $\pist$ as $\Vst_h(s)$ and $\Qst_h(s,a)$, respectively. Note that $\Vst$ and $\Qst$ satisfy $\Vst_h(s) = \sup_\pi \Vpi_h(s)$ and $\Qst_h(s,a) = \sup_\pi \Qpi_h(s,a)$. 

The value function depends on both the MDP and the reward function. In cases where we want to make this dependence explicit, we denote $\Vpi_0(r)$ the value of policy $\pi$ \emph{with respect to reward function $r$}, and similarly define $\Vst_0(r)$ as the value of the optimal policy with respect to $r$.

\paragraph{Reward-Aware RL (PAC Policy Identification).}
In the reward-aware RL setting, given a reward function $r$, the goal is to identify a policy $\pihat$ such that, with probability at least $1-\delta$,
\begin{align}\label{eq:pac_defn}
\Vst_0(r) - \Vpihat_0(r) \le \epsilon
\end{align}
using as few episodes as possible. We call a policy $\pihat$ satisfying \eqref{eq:pac_defn} \emph{$\epsilon$-optimal}. Critically, in reward-aware RL the learner has access to rewards while exploring. Note that this setting is also commonly referred to as the \emph{PAC RL} (Probably Approximately Correct) setting in the literature.

\paragraph{Reward Free RL.}
The reward-free RL setting stands in contrast to the reward-aware RL setting in that the learner does not observe the rewards (or have any access to the reward function) while exploring.
The reward-free RL setting proceeds in two phases:
\begin{enumerate}
\item Given $(\epsilon,\delta)$, without observing any rewards, the learner explores an MDP for $K$ episodes, where $K$ is of the learner's choosing, collecting trajectories $\{ (s_{1,k}, a_{1,k}, s_{2,k}, a_{2,k}, \ldots, s_{H,k}, a_{H,k}) \}_{k=1}^K$. 
\item The learner outputs a map $\pihat(\cdot)$ which takes as input a reward function and outputs a policy such that $\Vst_0(r) - V_0^{\pihat(r)}(r) \le \epsilon$ for all valid reward functions $r$ simultaneously.
\end{enumerate}
As in reward-aware RL, the goal is to obtain a procedure able to provide the above guarantee with probability at least $1-\delta$, and do so using as few episodes as possible.

\subsection{Reinforcement Learning with Linear Function Approximation}
The RL theory literature has often considered the \emph{tabular} setting, where $|\cS| < \infty$, $|\cA| < \infty$. While convenient to work with theoretically, this setting is quite limited in practice, and is not a realistic model of real world settings where the state spaces may be infinite, and ``nearby'' states may behave similarly. Recently, the RL theory community has begun relaxing the tabular assumption and studying large state-space settings. In this work, we consider the linear MDP setting of \cite{jin2020provably}. Linear MDPs are defined as follows.

\begin{defn}[Linear MDPs]\label{defn:linear_mdp}
We say that an MDP is a $d$-\emph{dimensional linear MDP}, if there exists some (known) feature map $\bphi(s,a) : \cS \times \cA \rightarrow \R^d$, $H$ (unknown) signed vector-valued measures $\bmu_h \in \R^d$ over $\cS$, and $H$ (unknown) reward vectors $\btheta_h \in \R^d$, such that:
\begin{align*}
P_h(\cdot | s,a) = \innerb{\bphi(s,a)}{\bmu_h(\cdot)}, \quad r_h(s,a) = \innerb{\bphi(s,a)}{\btheta_h}. 
\end{align*}
We will assume $\| \bphi(s,a) \|_2 \le 1$ for all $s,a$; and for all $h$, $\| |\bmu_h|(\cS) \|_2 = \| \int_{s \in \cS} | \rmd \bmu_h(s) | \|_2  \le \sqrt{d}$ and $\| \btheta_h \|_2 \le \sqrt{d}$.
\end{defn}

\citet{jin2020provably} show that the linear MDP setting encompasses tabular MDPs (with $d = | \cS | | \cA |$). Critically, however, it also encompasses MDPs with infinite state-spaces, for instance, MDPs where the feature space is the $d$-dimensional simplex. In addition, the linear MDP assumptions allows for ``nearby'' states, states with similar feature vectors, to behave in similar ways, allowing us to generalize across states without visiting every state.

Finally, we introduce the concept of set visitations, which will play an important role in our analysis. 

\begin{defn}[Set Visitation]
For any $\cX \subseteq \R^d$ and policy $\pi$, we let
\begin{align*}
\omega_h^\pi(\cX) := \bbP_\pi[\bphi(s_h,a_h) \in \cX] 
\end{align*}
denote the probability of visiting $\cX$ under $\pi$ at step $h$. Furthermore, we will say that $\cX$ is \emph{$c$-unreachable} if $\sup_{\pi} w_h^\pi(\cX) \le c $.
\end{defn}


\section{Main Results} 
We first present our main upper bound on reward-free RL, and then provide a lower bound on reward-aware RL---giving a lower bound on reward-free RL as an immediate corollary.

\subsection{Upper Bounds on Reward Free RL in Linear MDPs}

We present our main algorithm, \rfexp, as \Cref{alg:rf_exp}. 

\iftoggle{arxiv}{}{\algrenewcommand\algorithmicindent{0.8em}}
\begin{algorithm}[h]
\caption{Reward Free Linear RL: Exploration (\rfexp)}
\begin{algorithmic}[1]
\State \textbf{input:} tolerance $\epsilon$, confidence $\delta$
\State $\Kmax \leftarrow \poly(1/\epsilon, d, H, \log 1/\delta)$
\State $\beta \leftarrow c H \sqrt{d \log(1 + dH\Kmax) + \log H/\delta}$
\State $\iotaeps \leftarrow \lceil \log_2(4 \beta H/\epsilon) \rceil$ 
\State $\bgamma^2 \in \R^{\iotaeps}$ with $\bgamma_i^2 \leftarrow 2^{2i}  \cdot \frac{\epsilon^2}{64 H^2 \iotaeps^2 \beta^2}$ for $i \in [\iotaeps]$.
\For{$h = 1,\ldots,H$}
	\State $\{ (\cX_{hi},\cD_{hi}, \bLambda_{hi}) \}_{i=1}^{\iotaeps} \leftarrow \cdalg(h,\tfrac{\delta}{H},\bgamma^2,\iotaeps)$ 
\EndFor
\State $\mathfrak{D} \leftarrow \{ \{ (\cX_{hi},\cD_{hi}, \bLambda_{hi}) \}_{i=1}^{\iotaeps} \}_{h=1}^H$ \\
\Return $\pacalg(\cdot \ ; \epsilon, \delta, \mathfrak{D})$
\end{algorithmic}
\label{alg:rf_exp}
\end{algorithm}

\rfexp relies on a core subroutine, \cdalg, to collect data. We provide a detailed description of \cdalg in \Cref{sec:cover} but give a brief description below. 

\paragraph{\cdalg Subroutine.}
\cdalg generates a set of \emph{covering trajectories}. For a given $h$, a call to \cdalg partitions the feature space into sets $\cX_{hi}$ such that
\begin{align*}
\sup_\pi w_h^\pi(\cX_{hi}) \le 2^{-i+1}
\end{align*}
and collects data $\cD_{hi} = \{ (s_{h,\tau}^i, a_{h,\tau}^i, s_{h+1,\tau}^i) \}_{\tau = 1}^{K_i}$ and covariates $\bLambda_{hi} = I + \sum_{\tau = 1}^{K_i} \bphi_{h,\tau}^i (\bphi_{h,\tau}^i)^\top$ for $\bphi_{h,\tau}^i := \bphi(s_{h,\tau}^i, a_{h,\tau}^i)$ such that 
\begin{align*}
\bphi^\top \bLambda_{hi}^{-1} \bphi \le \bgamma_i^2, \quad \forall \bphi \in \cX_{hi} .
\end{align*}
In words, each $\cX_{hi}$ corresponds to a set of feature vectors that are $2^{-i+1}$-unreachable, and for which we can guarantee a desired amount of data pointing in similar directions has been collected. If a particular direction is difficult to reach, then it is unlikely any policy will encounter it and, as such, to find a near-optimal policy, it suffices to use a relatively large learning tolerance in that direction. Motivated by this, we set $\bgamma_i = \cO(2^{i} \epsilon)$---as the sets become more difficult to reach, we require that less data is collected from them.

\paragraph{\pacalg Mapping.}
\rfexp returns $\pacalg(\cdot \ ; \epsilon, \delta, \mathfrak{D})$ which defines a mapping from reward functions to policies. This mapping is parameterized by the data returned by \cdalg and, given a reward function as input, outputs a policy it believes is near-optimal for this reward function. \pacalg itself runs a simple least squares value iteration procedure to compute the policy. We define \pacalg in \Cref{alg:pac}. We then have the following result.

\iftoggle{arxiv}{}{\algrenewcommand\algorithmicindent{1.5em}}
\begin{algorithm}[h]
\caption{Reward Free Linear RL: Planning (\pacalg)}
\begin{algorithmic}[1]
\State \textbf{input:} reward functions $r$
\State \textbf{parameters:} tolerance $\epsilon$, confidence $\delta$, data $\{ \{ (\cX_{hi},\cD_{hi}, \bLambda_{hi}) \}_{i=1}^{\iotaeps} \}_{h=1}^H$
\State $\Kmax \leftarrow \poly(1/\epsilon, d, H, \log 1/\delta)$
\State $\beta \leftarrow c H \sqrt{d \log(1 + d H \Kmax) + \log H/\delta}$
\For{$h = H,H-1,\ldots,1$}
\iftoggle{arxiv}{
\State $\what_h \leftarrow \argmin_{\bw} \sum_{i=1}^{\iotaeps} \sum_{\tau = 1}^{K_i} \big ( \bw^\top \bphi_{h,\tau}^i  - r_{h}(s_{h,\tau}^i,a_{h,\tau}^i)  - V_{h+1}(s_{h+1,\tau}^i) \big )^2 +  \| \bw \|_2^2$
}
{
\State $\what_h \leftarrow \argmin_{\bw} \sum_{i=1}^{\iotaeps} \sum_{\tau = 1}^{K_i} \big ( \bw^\top \bphi_{h,\tau}^i $ 
\Statex \hspace{4em} $- r_{h}(s_{h,\tau}^i,a_{h,\tau}^i)  - V_{h+1}(s_{h+1,\tau}^i) \big )^2 +  \| \bw \|_2^2$
}
\State $\bLambda_h =  I  + \sum_{i=1}^{\iotaeps} \sum_{\tau = 1}^{K_i} \bphi_{h,\tau}^i (\bphi_{h,\tau}^i)^\top$
\State $Q_h(\cdot,\cdot) \leftarrow \min \{ \innerb{\bphi(\cdot,\cdot)}{\what_h} + \beta \| \bphi(\cdot,\cdot) \|_{\bLambda_h^{-1}}, H \}$
\State $V_h(\cdot) \leftarrow \max_a Q_h(\cdot,a)$
\State $\pihat_h(\cdot) \leftarrow \argmax_{a} Q_h(\cdot,a)$
\EndFor
\Return $\{ \pihat_h \}_{h=1}^H$
\end{algorithmic}
\label{alg:pac}
\end{algorithm}


\begin{thm}\label{thm:pac}
Consider running \rfexp with tolerance $\epsilon > 0$ and confidence $\delta > 0$, and let $\pacalg(\cdot)$ denote its output. Then with probability at least $1-\delta$, for an arbitrary number of reward functions $r$ satisfying \Cref{defn:linear_mdp}, $\pacalg(r)$ returns a policy $\pihat$ which is $\epsilon$-optimal with respect to $r$. Furthermore, this procedure collects at most
\begin{align*}
\cOtil \left ( \frac{d H^5 (d + \log 1/\delta) }{\epsilon^2} + \frac{d^{9/2} H^6 \log^{4}(1/\delta)}{\epsilon} \right ) 
\end{align*}
episodes, where $\cOtil(\cdot)$ hides absolute constants and terms $\poly\log(d,H,1/\epsilon)$ and $\poly \log \log( 1/\delta)$. 
\end{thm}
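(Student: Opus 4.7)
The plan is to decompose the argument into a correctness guarantee for \pacalg on the good event where \cdalg produces its advertised covering, a suboptimality analysis via optimistic value iteration, and a sample-complexity accounting of \cdalg. I would assume the guarantees of \cdalg stated in \Cref{sec:cover}: namely, with probability at least $1-\delta/H$, each call returns a partition of the feature space into sets $\{\cX_{hi}\}_{i=1}^{\iotaeps}$ with $\sup_\pi \omega_h^\pi(\cX_{hi}) \le 2^{-i+1}$, together with data $\cD_{hi}$ whose empirical covariance $\bLambda_{hi}$ satisfies $\bphi^\top \bLambda_{hi}^{-1} \bphi \le \bgamma_i^2$ for all $\bphi \in \cX_{hi}$. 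Union bounding over $h \in [H]$ leaves an error budget of $\delta$, after which all arguments proceed deterministically.

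Next I would prove correctness of \pacalg on this good event. Using a standard self-normalized concentration argument (Hoeffding-type for linear MDPs, as in Jin et al.~2020) for the ridge regression in Line 6, with probability at least $1-\delta$ the learned $\what_h$ satisfies, for any bounded value function $V_{h+1}$ in a suitable cover,
\begin{equation*}
  \bigl| \inner{\bphi(s,a)}{\what_h} - r_h(s,a) - \Exp_h[V_{h+1}](s,a) \bigr| \le \beta \|\bphi(s,a)\|_{\bLambda_h^{-1}},
\end{equation*}
with $\beta = \widetilde{\BigOm}(H\sqrt{d + \log 1/\delta})$ as chosen in the algorithm. By induction on $h$ this gives optimism $V_h(s) \ge \Vst_h(s)$ and the suboptimality bound
\begin{equation*}
  \Vst_0(r) - V_0^{\pihat}(r) \le 2 \sum_{h=1}^H \Exp_{\pihat}\bigl[ \beta \|\bphi(s_h,a_h)\|_{\bLambda_h^{-1}} \bigr].
\end{equation*}

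The key step is then to bound each inner expectation using the $\cX_{hi}$ partition. Splitting by the set $\cX_{hi}$ containing $\bphi(s_h,a_h)$,
\begin{equation*}
  \Exp_{\pihat}\bigl[ \beta \|\bphi(s_h,a_h)\|_{\bLambda_h^{-1}} \bigr]
  \le \sum_{i=1}^{\iotaeps} \omega_h^{\pihat}(\cX_{hi}) \cdot \beta \bgamma_i
  \le \sum_{i=1}^{\iotaeps} 2^{-i+1} \cdot \beta \cdot 2^i \cdot \frac{\epsilon}{8 H \iotaeps \beta}
  = \frac{\epsilon}{4H},
\end{equation*}
using $\omega_h^{\pihat}(\cX_{hi}) \le 2^{-i+1}$ and the choice of $\bgamma_i$ in Line 5 of \rfexp. (A minor truncation argument at $\|\bphi\|_{\bLambda_h^{-1}} = 1$ handles features not in any set, contributing at most $\epsilon$-level error once the $\Kmax$ choice is large enough.) Summing over $h$ yields the $\epsilon$ suboptimality guarantee, and since \pacalg never queries the exploratory data in a reward-dependent way beyond the final regression, the bound holds simultaneously for every reward satisfying \Cref{defn:linear_mdp}.

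Finally, the sample complexity follows by invoking the complexity bound of \cdalg at each $(h,i)$, which I expect to scale roughly as $\widetilde{\BigOm}(d/\bgamma_i^2 + d^{7/2}/\bgamma_i \cdot \mathrm{polylog}(1/\delta))$ since \cdalg collects samples in proportion to the inverse set-visitation probability rather than its square. Substituting $\bgamma_i^2 = 2^{2i}\epsilon^2/(64 H^2 \iotaeps^2 \beta^2)$, using $\beta^2 = \widetilde{\BigOm}(H^2(d + \log 1/\delta))$, and summing the geometric series $\sum_i 2^{-2i}$ bounds the dominant term by $\widetilde{\BigOm}(d H^5 (d + \log 1/\delta)/\epsilon^2)$, while the lower-order term contributes $\widetilde{\BigOm}(d^{9/2} H^6 \log^4(1/\delta)/\epsilon)$ after summing $\sum_i 2^{-i}$ and plugging in $\beta$. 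The main obstacle is the tight sample-complexity accounting of \cdalg: making sure that the cost of covering each $\cX_{hi}$ truly scales as $1/\sup_\pi \omega_h^\pi(\cX_{hi})$ rather than the naive $1/(\sup_\pi \omega_h^\pi(\cX_{hi}))^2$, since it is this linear (not quadratic) scaling that enables the geometric sum over $i$ to telescope into a dimension-optimal bound.
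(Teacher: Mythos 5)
Your overall route is the same as the paper's: condition on the \cdalg{} covering event, run optimistic least-squares value iteration, bound the suboptimality by $2\beta\sum_h \Exp_{\pihat}[\|\bphi(s_h,a_h)\|_{\bLambda_h^{-1}}]$, decompose that expectation over the partition $\{\cX_{hi}\}$ using $\sup_\pi \omega_h^\pi(\cX_{hi})\le 2^{-i+1}$ and $\sup_{\bphi\in\cX_{hi}}\|\bphi\|_{\bLambda_h^{-1}}\le\bgamma_i$, handle the residual set $\Ball^d\setminus\cup_i\cX_{hi}$ with the trivial bound $\|\bphi\|_{\bLambda_h^{-1}}\le 1$, and then sum the $K_i$'s. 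That skeleton is correct and the arithmetic of the suboptimality cancellation checks out.

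There is one genuine gap, and it sits exactly where the headline $d^2$ (rather than $d^3$) dependence is won: you assert $\beta=\widetilde{\cO}(H\sqrt{d+\log 1/\delta})$ ``as in Jin et al.~2020,'' but a verbatim application of that argument does \emph{not} give this $\beta$. In the standard analysis the self-normalized bound must hold uniformly over the class of possible value functions $V_{h+1}(\cdot)=\min\{\max_a\innerb{\bw}{\bphi(\cdot,a)}+\beta\|\bphi(\cdot,a)\|_{\bLambda^{-1}},H\}$, which is parameterized by the pair $(\bw,\bLambda)$; covering over the matrix $\bLambda$ costs $d^2\log(\cdot)$ in the log-covering number and forces $\beta=\widetilde{\cO}(dH)$, which would propagate to a $d^3H^5/\epsilon^2$ final bound. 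The paper's proof (\Cref{lem:self_norm}) escapes this because \rfexp{} collects the step-$h$ data in a phase separate from the step-$(h+1)$ data, so the bonus matrix $\bLambda_{h+1}$ appearing in $V_{h+1}$ need not be union-bounded over: the function class can be taken as $\Fclass(\bLambda_{h+1},\alpha)$ with $\bLambda_{h+1}$ fixed, whose covering number is only $e^{\cO(d\log(\cdot))}$ (\Cref{lem:V_cov_num}). Your proof needs this observation stated explicitly; without it the claimed $\beta$ is unjustified. Two smaller bookkeeping points: your two failure events ($\delta$ for the covering, $\delta$ for concentration) total $2\delta$ rather than $\delta$, and in the sample-complexity sum the burn-in term of $K_i$ scales as $2^{i}\cdot\cC_1$, so the relevant geometric sum is $\sum_i 2^{i}\approx 2^{\iotaeps}=\cO(\beta H/\epsilon)$ (this is where the extra $\sqrt{d}H$ in the $1/\epsilon$ term comes from), not $\sum_i 2^{-i}$ as written — you land on the right expression, but the stated sum is the wrong one.
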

For $\epsilon$ sufficiently small, exploring via \rfexp and planning via \pacalg learns an $\epsilon$-optimal policy for an arbitrarily large number of reward functions using only $\cOtil(\frac{d^2 H^5}{\epsilon^2})$ episodes. This improves on both existing reward-free algorithms for linear MDPs by a factor of $d$ \citep{zanette2020provably,wang2020reward}.

\begin{rem}[Computational Efficiency]
In MDPs with a finite number of actions, both \rfexp and \pacalg are computationally efficient, with computational cost scaling polynomially in $d,H,1/\epsilon,\log 1/\delta$, and $|\cA|$. The primary computational cost of \rfexp is due to \cdalg, while the computational cost of \pacalg is due primarily to solving a least-squares problem. The computational cost of \cdalg is described in more detail in \Cref{sec:cover}, but is dominated by calls to a computationally efficient regret minimization algorithm.
In the case when an infinite number of actions is available, the $|\cA|$ dependence can be replaced with $2^{\cO(d)}$. As several existing works have noted, this dependence seems unavoidable \citep{jin2020provably,wagenmaker2021first}. 
\end{rem}

\begin{rem}[Reward-Aware RL]

When only a single reward function is given, we recover the standard reward-aware RL setting. Hence,  \rfexp and \pacalg provide a computationally efficient reward-aware RL algorithm that learns an $\epsilon$-optimal policy after collecting $\cOtil(\frac{d H^5 ( d + \log 1/\delta)}{\epsilon^2})$ episodes. Our more general \emph{reward-free} result is sharper than the less-general reward-aware results derived from an online-to-batch conversion of prior computationally efficient low-regret  algorithms \citep{jin2020provably,wagenmaker2021first} by a factor of $d$. Our result also improves on the computationally-inefficient $\cO(\frac{d^2 H^4}{\epsilon^2})$ reward-aware bound due to \cite{zanette2020learning} in that (a) it is computationally efficient, and (b) it extends to the reward-free setting. 
\end{rem}

\begin{rem}[Nonlinear Rewards]
Our procedure is able to handle nonlinear reward functions with little modification. The primary difference in the setting of nonlinear rewards is that we must have query access to the reward function for \emph{all} $s,a,h$. In contrast, if the reward functions are linear, our procedure only needs to know the reward function values for the trajectories observed during exploration. 
\end{rem}


\subsection{Lower Bounds on Reward-Aware RL in Linear MDPs}
To our knowledge, \Cref{thm:pac} is the first result to show that computationally efficient $d^2$ complexity is possible in linear MDPs for any of the reward-free, reward-aware, or regret minimization settings. We next show a somewhat surprising result: reward-free RL is no harder than reward-aware RL in linear MDPs, up to horizon factors. To this end, we show a lower bound on reward-aware RL in linear MDPs. As a warm-up, we first provide a lower bound for the simpler linear bandit setting.

\begin{defn}[Linear Bandits and $\epsilon$-Optimal Arms]\label{def:lin_bandit}
Consider the linear bandit setting parameterized by some $\btheta \in \R^d$ and $\Phi \subseteq \R^d$, where at every step $k$ the learner chooses $\bphi_k \in \Phi$ and observes
\begin{align*}
y_k \sim \bern(\innerb{\btheta}{\bphi_k} + 1/2).
\end{align*}
We say an arm $\bphi \in \Phi$ is $\epsilon$-optimal if
\begin{align*}
\innerb{\btheta}{\bphi} + 1/2 \ge \sup_{\bphi' \in \Phi} \innerb{\btheta}{\bphi'} + 1/2 - \epsilon 
\end{align*}
and a policy $\pi \in \simplex_{\Phi}$ is $\epsilon$-optimal if
\begin{align*}
\Exp_{\bphi \sim \pi}[\innerb{\btheta}{\bphi}] + 1/2 \ge \sup_{\bphi' \in \Phi} \innerb{\btheta}{\bphi'} + 1/2 - \epsilon .
\end{align*}
\end{defn}

\begin{thm}\label{thm:pac_lb}
Fix $\epsilon > 0$, $d > 1$, and $K \ge d^2$. Let $\btheta \in \Theta = \{ -\sqrt{d/700K}, \sqrt{d/700K} \}^d$ and $\Phi = \cS^{d-1}$. Consider running a (possibly adaptive) algorithm in the linear bandit setting of \Cref{def:lin_bandit} which stops at (a possibly random stopping time) $\tau$ and outputs a guess at an $\epsilon$-optimal policy, $\pihat \in \simplex_{\Phi}$. Let $\cE$ be the event
\begin{align*}
\cE := \{ \tau \le K \text{ and } \pihat \text{ is $\epsilon$-optimal} \} .
\end{align*}
Then unless $K \ge  c \cdot \frac{d^2}{\epsilon^2}$ for a universal $c > 0$, there exists $\btheta \in \Theta$ for which $\Pr_{\btheta}[\cE^c] \ge 1/10$; i.e., with constant probability either $\pihat$ is not $\epsilon$-optimal or more than $K$ samples are collected. 
\end{thm}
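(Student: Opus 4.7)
The plan is to apply Assouad's lemma on the sign-hypercube family. Writing $\alpha := \sqrt{d/(700K)}$, I reparameterize $\btheta = \alpha \bm{\sigma}$ for $\bm{\sigma} \in \{-1,+1\}^d$ and place the uniform prior on $\bm{\sigma}$. The best arm on $\cS^{d-1}$ under $\btheta_{\bm{\sigma}}$ is $\bm{\sigma}/\sqrt{d}$ with reward $\alpha\sqrt{d} + 1/2$, so setting $\bar{\bphi} := \Exp_{\bphi \sim \pihat}[\bphi] \in \Ball^d$, the event ``$\pihat$ is $\epsilon$-optimal for $\btheta_{\bm{\sigma}}$'' is exactly $\innerb{\bm{\sigma}}{\bar{\bphi}} \ge \sqrt{d} - \epsilon/\alpha$. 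The first step would reduce this to a Hamming-type event: defining $\hat{\bm{\sigma}}_i := \sign(\bar{\bphi}_i)$ and $D := \{i : \hat{\bm{\sigma}}_i \ne \bm{\sigma}_i\}$, decompose $\innerb{\bm{\sigma}}{\bar{\bphi}} = \sum_{i \notin D}|\bar{\bphi}_i| - \sum_{i \in D}|\bar{\bphi}_i|$ and apply Cauchy-Schwarz: $\sum_{i \notin D}|\bar{\bphi}_i| \le \sqrt{d-|D|}\,\|\bar{\bphi}\|_2 \le \sqrt{d-|D|}$. Rearranging gives $|D| \le 2\sqrt{d}\,\epsilon/\alpha$, which under the working hypothesis $K \le c_0 d^2/\epsilon^2$ reads $|D| \le 2d\sqrt{700 c_0}$ on the success event $\cE$.

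Next I would invoke the averaged Assouad inequality: for any $\{-1,+1\}^d$-valued estimator $\hat{\bm{\sigma}}$ built from the observations,
\[
\sup_{\bm{\sigma}}\Exp_{\bm{\sigma}}[d_H(\hat{\bm{\sigma}},\bm{\sigma})] \ge \frac{d}{2} - \frac{1}{2}\sum_{i=1}^{d}\Exp_{\bm{\sigma}_{-i}}\!\left[\TV(\P_{(\bm{\sigma}_{-i},+1)},\P_{(\bm{\sigma}_{-i},-1)})\right].
\]
The core calculation bounds the sum on the right. Since $K \ge d^2$ forces $|\innerb{\btheta_{\bm{\sigma}}}{\bphi}| \le 1/\sqrt{700}$, all Bernoulli means lie in $[1/4, 3/4]$, and a standard inequality gives per-sample $\KL(\bern(p_{\bm{\sigma}}) \| \bern(p_{\bm{\sigma}^{(i)}})) \le 24\alpha^2 \bphi_{k,i}^2$. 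Applying the chain rule for KL on the adaptive data (and WLOG padding the algorithm to exactly $K$ rounds, since $\{\tau > K\} \subset \cE^c$) and exploiting $\sum_{i=1}^d \bphi_{k,i}^2 = \|\bphi_k\|_2^2 \le 1$, one obtains $\sum_i \KL(\P_{\bm{\sigma}} \| \P_{\bm{\sigma}^{(i)}}) \le 24\alpha^2 K = 24 d/700$. Averaging over $\bm{\sigma}$ using the identity $\Exp_{\bm{\sigma}_{-i}}[f((\bm{\sigma}_{-i},+1))] = 2\Exp_{\bm{\sigma}}[f(\bm{\sigma})\mathbf{1}_{\bm{\sigma}_i = +1}]$, then applying Pinsker and Cauchy-Schwarz, yields $\sum_i \Exp_{\bm{\sigma}_{-i}}[\TV_i] \le d\sqrt{24/700} < 0.2\,d$, so $\sup_{\bm{\sigma}}\Exp_{\bm{\sigma}}[d_H] \ge 0.4\,d$.

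To conclude, I suppose for contradiction that $\max_{\bm{\sigma}} \Pr_{\bm{\sigma}}(\cE^c) < 1/10$. At the Assouad-maximizing $\bm{\sigma}^*$, splitting on $\cE$ versus $\cE^c$ with $|D| \le 2d\sqrt{700 c_0}$ on $\cE$ and the trivial $|D| \le d$ always,
\[
\Exp_{\bm{\sigma}^*}[d_H] \le \tfrac{9}{10}\cdot 2d\sqrt{700 c_0} + \tfrac{d}{10},
\]
which is strictly less than $0.4\,d$ once $c_0$ is below a universal constant. This contradicts the Assouad bound, so some $\bm{\sigma}$ must attain $\Pr_{\bm{\sigma}}(\cE^c) \ge 1/10$ whenever $K < (1/c_0)\,d^2/\epsilon^2$. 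Taking $c := 1/c_0$ proves the claim.

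The main obstacle is the adaptivity of arm selection combined with the random stopping time. A naive bound $\innerb{\btheta_{\bm{\sigma}} - \btheta_{\bm{\sigma}^{(i)}}}{\bphi_k}^2 \le 4\alpha^2 d$ per coordinate would give total KL of order $\alpha^2 d^2 K = d^3/700$, which is far too large for Assouad to bite. The key observation that recovers the tight $d^2/\epsilon^2$ rate is that the \emph{sum} over $i$ of the per-coordinate squared perturbations telescopes to $4\alpha^2 \|\bphi_k\|_2^2 \le 4\alpha^2$, independent of $d$; aggregating across the $d$ Hamming neighbors thus produces $\sum_i \KL = O(\alpha^2 K) = O(d)$ rather than $O(d^2)$. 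Once this aggregation step is in place, the remaining Pinsker/Jensen/Cauchy-Schwarz manipulations are routine.
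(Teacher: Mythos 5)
Your proposal is correct and is essentially the paper's own argument: the paper also reduces $\epsilon$-optimality of $\pihat$ to recovering the signs of $\btheta$'s coordinates (obtaining the same bound $d_H \le 2\sqrt{d}\epsilon/\mu$, phrased as an $\ell_2$-estimation guarantee), and then invokes Lemma~4 of \cite{shamir2013complexity}, which is precisely the averaged Assouad inequality you state, with the identical key step of summing the per-coordinate KL bounds $\lesssim \mu^2\bphi_{t,i}^2$ over $i$ to exploit $\|\bphi_t\|_2 \le 1$. The only difference is that you prove the Assouad step directly rather than citing it (and your final ``$c := 1/c_0$'' should read $c := c_0$, a bookkeeping slip that does not affect the argument).
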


Setting $K = \frac{c}{2} \cdot \frac{d^2}{\epsilon^2}$, \Cref{thm:pac_lb} implies that with constant probability, any algorithm will gather either more than $\frac{c}{2} \cdot \frac{d^2}{\epsilon^2}$ samples, or will output a policy which is not $\epsilon$-optimal. While this shows that $\Omega(d^2/\epsilon^2)$ samples is necessary to find an $\epsilon$-optimal policy in linear bandits, there is a slight discrepancy between the model class in \Cref{def:lin_bandit} and the linear MDP setting, \Cref{defn:linear_mdp}. In the former, the rewards are assumed to be random, while in the latter, the rewards are deterministic. 
Nevertheless, we show in \Cref{sec:lin_bandit_lin_mdp} that it is possible to encode the linear bandit structure of \Cref{def:lin_bandit} in a linear MDP with deterministic, known rewards.  This yields  the following corollary.

\begin{cor}\label{cor:lin_mdp_lb}
Fix $\epsilon > 0$, $d > 1$, $H > 1$, and $K \ge d^2$. Consider running a (possibly adaptive) algorithm for $K$ episodes in a $(d+1)$-dimensional linear MDP with horizon $H$, which stops at (a possibly random stopping time) $\tau$ and outputs a guess at an $\epsilon$-optimal policy, $\pihat$. Let $\cE$ be the event
\begin{align*}
\cE := \{ \tau \le K \text{ and } \pihat \text{ is $\epsilon$-optimal} \} .
\end{align*} 
Then  there is a universal constant $c > 0$ such that unless
\begin{align*}
K \ge c \cdot \frac{d^2 H^2}{\epsilon^2},
\end{align*}
there exists a linear MDP $\cM$ for which $\Pr_{\cM}[\cE^c] \ge 1/10$. 
\end{cor}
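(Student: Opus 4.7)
The plan is to embed the hard linear bandit instance from Theorem \ref{thm:pac_lb} into a $(d+1)$-dimensional linear MDP with deterministic rewards whose value gap is amplified by a factor of $\Theta(H)$, then invoke the bandit lower bound with a rescaled tolerance.

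First, I would construct an MDP $\cM$ with states $\cS = \{s_0, s_+, s_-\}$, action set $\cA = \Phi = \cS^{d-1}$, and horizon $H$. The agent starts at $s_0$; from $s_0$, action $\bphi$ transitions to $s_+$ with probability $p_\bphi := \innerb{\btheta}{\bphi} + 1/2$ and to $s_-$ with probability $1-p_\bphi$; both $s_+$ and $s_-$ are absorbing under every action. Rewards are deterministic: $r_h(s_+, \cdot) = 1$ and $r_h(s_-, \cdot) = r_1(s_0, \cdot) = 0$. To fit Definition \ref{defn:linear_mdp}, I would use the extra $(d{+}1)$-st coordinate to absorb the $1/2$ offset: set $\bphi(s_0, \bphi) = \tfrac{1}{\sqrt{2}}[\bphi^\top,\, 1]^\top \in \R^{d+1}$, choose orthogonal unit features for $s_\pm$, and define $\bmu_h(s_+)$ proportional to $[\btheta^\top,\, 1/2]^\top$ so that $\innerb{\bphi(s_0,\bphi)}{\bmu_h(s_+)} = p_\bphi$; the self-loop dynamics at $s_\pm$ and the reward vector $\btheta_h^{\mathrm{rew}}$ are encoded analogously. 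The norm conditions of Definition \ref{defn:linear_mdp} then follow from $\|\btheta\|_2 \le \sqrt{d^2/(700K)} \le 1$, which holds whenever $K \ge d^2$.

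Second, I would compute values. Since $s_\pm$ are absorbing and reward $1$ is collected only in $s_+$, any policy $\pi$ with $\nu := \pi_1(s_0)$ satisfies
\begin{align*}
V_0^\pi \;=\; (H-1)\cdot \Exp_{\bphi \sim \nu}\bigl[p_\bphi\bigr] \;=\; (H-1)\cdot\bigl(\Exp_{\bphi \sim \nu}[\innerb{\btheta}{\bphi}] + \tfrac{1}{2}\bigr).
\end{align*}
Thus $\pihat$ is $\epsilon$-optimal in $\cM$ if and only if the induced arm distribution $\nu$ is $\epsilon/(H-1)$-optimal for the linear bandit of Definition \ref{def:lin_bandit}. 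Moreover, one episode of $\cM$ reveals exactly one Bernoulli sample informative about $\btheta$ (the indicator $\I\{s_2 = s_+\}$), so any algorithm interacting with $\cM$ for $K$ episodes can be simulated as a linear bandit algorithm using $K$ samples.

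Finally, I would apply Theorem \ref{thm:pac_lb} to the embedded bandit with tolerance $\epsilon' = \epsilon/(H-1)$: the theorem then requires $K \ge c\, d^2/(\epsilon')^2 \ge c'\, d^2 H^2/\epsilon^2$ for a universal constant $c' > 0$ (assuming $H \ge 2$), matching the bound stated in the corollary. The only genuinely delicate step is the first one: realizing the transition kernel, signed measures, and reward vectors so that simultaneously everything is linear, rewards are deterministic and known, and all norm bounds of Definition \ref{defn:linear_mdp} are respected. Beyond this bookkeeping, the reduction is immediate.
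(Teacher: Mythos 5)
Your reduction is correct and is essentially the paper's own argument: the paper likewise realizes the Bernoulli bandit pull as the random first transition of a $(d+1)$-dimensional linear MDP with deterministic rewards, uses absorbing good/bad states to amplify the value gap by a factor of $\Theta(H)$, and then invokes \Cref{thm:pac_lb} with tolerance $\epsilon/H$. The only cosmetic difference is that the paper splits your single sink state $s_-$ into $d$ states $\sbar_2,\ldots,\sbar_{d+1}$ and carries an extra dummy action, neither of which changes the substance of the reduction.
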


As \Cref{cor:lin_mdp_lb} shows, a $\Omega(d^2 H^2/\epsilon^2)$ dependence is necessary for learning an $\epsilon$-optimal policy in linear MDPs. Note that this lower bound holds for learning a \emph{single} policy given access to the reward function during exploration (indeed, it holds in the case when the learner has oracle access to the rewards)---the reward-aware RL setting. 

In contrast, \Cref{thm:pac} shows that we can learn $\epsilon$-optimal policies for \emph{all valid reward functions simultaneously}, without having access to the reward functions during exploration, using only $\cOtil(d^2 H^5/\epsilon^2)$ episodes. In other words, \rev{up to $H$ factors}, \textbf{\emph{reward-free RL is no harder than reward-aware RL in linear MDPs}}. This is in contrast to the tabular setting, where it is well known that the optimal rate for reward-aware RL is $\Theta(|\cS||\cA|/\epsilon^2)$ while the optimal rate for reward-free RL is $\Theta(|\cS|^2 |\cA|/\epsilon^2)$.

\begin{rem}[$H$ Dependence]
\rev{Our claim that ``reward-free RL is no harder than reward-aware RL in linear MDPs'' only holds up to $H$ factors---as \Cref{thm:pac} and \Cref{cor:lin_mdp_lb} show, there is still a discrepancy in the $H$ dependence for our reward-free upper bound and reward-aware lower bound. In the tabular setting, the difference in hardness between the reward-free and reward-aware setting lies in the ``dimensionality'' factors (that is, $|\cS|$)---the optimal $H$ dependence for each is identical (compare the reward-free rate of \cite{zhang2020nearly} with the reward-aware rate of \cite{zhang2020reinforcement}). Thus, our result shows that the difference in hardness between the reward-free and reward-aware problems which is present in the tabular setting is not present in the linear setting, motivating our claim.}

\rev{We do not believe the $H$ dependence of \Cref{thm:pac} is optimal, and also conjecture that the $H$ dependence in the lower bound of \Cref{cor:lin_mdp_lb} can be increased, to obtain matching $H$ dependence in our upper and lower bounds. }
As the goal of this paper is optimizing $d$ factors and not $H$ factors, we did not focus on improving the $H$ dependence, and leave this for future work. 
\end{rem}

\section{Efficient Exploration in Linear MDPs}\label{sec:cover}
Our main algorithmic technique is an exploration procedure able to efficiently generate ``covering trajectories'' in linear MDPs, which we believe may be of independent interest. Before presenting our MDP covering algorithm, \cdalg, we describe its key subroutine, \fes.

\paragraph{Explore Goal Set (\fes) Subroutine.}

\begin{algorithm}[h]
\caption{Explore Goal Set (\fes)}
\begin{algorithmic}[1]
\State \textbf{input:} goal set $\cX \subseteq \R^d$, step $h$, number of episodes $K$, collection tolerance $\gamma^2$, regret minimization algorithm \regmin
\State Run \regmin for $K$ episodes, using reward $r_h^k(s,a) := r(\bphi(s,a); \bLambda_{h,k-1})$ at episode $k$, for $r$ defined as in \eqref{eq:exp_reward}, $\bLambda_{h,k-1} =  I + \sum_{\tau = 1}^{k-1} \bphi_{h,\tau} \bphi_{h,\tau}^\top $, and $r_{h'}^k(s,a) = 0$ for $h' \neq h$
\State Collect transitions observed at step $h$ 
$$\cD \leftarrow \{ (s_{h,\tau}, a_{h,\tau}, s_{h+1,\tau}) \}_{\tau = 1}^K$$
\State \textbf{return} $\{ \bphi \in \cX \ : \ \bphi^\top \bLambda_{h,K}^{-1} \bphi \le \gamma^2 \}$, $\cD$, $\bLambda_{h,K}$
\end{algorithmic}
\label{alg:find_exp_set}
\end{algorithm}

\fes takes as input a target set $\cX$ to be explored; we fix $\cX$ in what follows. It 
then creates an ``exploration reward function''---which places a high reward on regions of $\cX$ for which we have large uncertainty---and then runs a regret minimization algorithm on this reward function to direct exploration to these regions. More specifically, we define the reward function
\begin{align}\label{eq:exp_reward}
r(\bphi;\bLambda) \leftarrow \left \{ \begin{matrix} 1 & \| \bphi \|_{\bLambda^{-1}}^2 > \gamma^2, \bphi \in \cX \\
\frac{1}{\gamma^2}  \| \bphi \|_{\bLambda^{-1}}^2  & \| \bphi \|_{\bLambda^{-1}}^2 \le \gamma^2, \bphi \in \cX \\
0 & \bphi \not\in \cX \end{matrix} \right . .
\end{align} 
At the $k$th episode, we instantiate our reward as $r_h^k(s,a) := r(\bphi(s,a); \bLambda_{h,k-1})$. This captures the fact that we have collected covariates $\bLambda_{h,k-1} = I + \sum_{\tau = 1}^{k-1} \bphi_{h,\tau} \bphi_{h,\tau}^\top$ over the first $k-1$ episodes, so our uncertainty in direction $\bphi$ scales as $\| \bphi \|_{\bLambda_{h,k-1}^{-1}}$. 
By choosing our reward function to increase as $\| \bphi \|_{\bLambda_{h,k-1}^{-1}}$ increases (for $\bphi \in \cX$), we incentive exploring directions in $\bphi \in \cX$ with large uncertainty. \fes also takes as input a scalar tolerance $\gamma^2$ and, after running for $K$ episodes, returns the set of $\bphi \in \cX$ which have been explored up to tolerance $\gamma^2$.

In order to efficiently collect data, 
we require a regret-minimization algorithm which achieves low regret with respect to a \emph{time-varying} reward function, $r^k$. We define regret with respect to such a reward function as
\begin{align*}
\cR_K := \textstyle \sum_{k=1}^K [ \Vst_0(r^k) - V_0^{\pi_k}(r^k)] .
\end{align*}
To achieve the optimal scaling in $d$, \regmin must attain \emph{first order} regret in the following sense.

\begin{defn}[First-Order Regret Minimization Algorithm]\label{asm:regret_alg}
Consider a time-varying reward function $r^k$ that is $\cF_{k-1}$-measurable, satisfies $r_h^k(s,a) \in [0,1]$, and is non-increasing in $k$ (that is, $r_h^k(s,a) \le r_h^{k-1}(s,a)$ for all $s,a,h,k$). Then we call a regret minimization algorithm \regmin a \emph{first-order regret minimization algorithm} if it achieves regret bounded as, with probability at least $1-\delta$,
\begin{align*}
\cR_K \le \sqrt{\cC_1 \Vst_0(r^1) K \cdot \log^{p_1}(HK/\delta) } + \cC_2 \log^{p_2}(HK/\delta)
\end{align*}
for constants $\cC_1, \cC_2, p_1, p_2$ which do not depend on $K$. 
\end{defn}

In general, we can think of $\cC_1$ and $\cC_2$ as $\poly(d,H)$ and $p_1$ and $p_2$ as absolute constants. We are now ready to present our main exploration algorithm, \cdalg, in \Cref{alg:find_exp_set_levels}.

\begin{algorithm}[h]
\caption{Collect Covering Trajectories (\cdalg)}
\begin{algorithmic}[1]
\State \textbf{input:} step $h$, confidence $\delta$, number of epochs $m$, tolerance vector $\bgamma^2 \in [0,1]^m$, regret minimization algorithm \regmin (default: \algname, see \Cref{sec:covtraj_force})
\State $\cX \leftarrow \cB^{d}$
\For{$i = 1,2,\ldots, m$}
	\State Set $K_i$ as in \eqref{eq:Ki_defn}
	\State $\cX_i, \cD_i, \bLambda_i \leftarrow \fes(\cX,h,K_i,\bgamma^2_i,\regmin$)
	\State $\cX \leftarrow \cX \backslash \cX_i$
\EndFor
\State \textbf{return} $\{ (\cX_i,\cD_i, \bLambda_i) \}_{i=1}^m$
\end{algorithmic}
\label{alg:find_exp_set_levels}
\end{algorithm}

\paragraph{\cdalg Algorithm.} \cdalg partitions the feature space by repeatedly calling \fes while exponentially increasing the number of episodes \fes is run for. As the number of episodes \fes is run for increases, \fes is able to reach harder and harder to reach feature directions, while ensuring easier to reach directions have been explored up to their desired tolerance $\bgamma^2_i$. Specifically, at each epoch $i$, \fes runs for 
\iftoggle{arxiv}{
\begin{align}
	K_i \leftarrow \cOtil \bigg ( 2^i & \cdot \max \bigg \{ \frac{d}{\bgamma^2_i}  \log \tfrac{2^i}{\bgamma^2_i},  \cC_1 (m p_1)^{p_1} \log^{p_1} \tfrac{1}{\delta},  \cC_2 (m p_2)^{p_2} \log^{p_2} \tfrac{1}{\delta} \bigg \} \bigg ) \label{eq:Ki_defn}
\end{align}
}
{
\begin{align}
	K_i \leftarrow \cOtil \bigg ( 2^i & \cdot \max \bigg \{ \frac{d}{\bgamma^2_i}  \log \tfrac{2^i}{\bgamma^2_i},  \cC_1 (m p_1)^{p_1} \log^{p_1} \tfrac{1}{\delta}, \nonumber \\
& \quad \cC_2 (m p_2)^{p_2} \log^{p_2} \tfrac{1}{\delta} \bigg \} \bigg ) \label{eq:Ki_defn}
\end{align}
}
episodes.  \cdalg enjoys the following guarantee.

\begin{thm}\label{thm:policy_cover}
Fix $h \in [H]$, $\delta > 0$, $m \ge 1$, and set $\bgamma^2 \in [0,1]^m$ to desired tolerances. Consider running \cdalg with a regret minimization algorithm \regmin satisfying \Cref{asm:regret_alg}, and let $\{ (\cX_i,\cD_i, \bLambda_i) \}_{i=1}^m$ denote the arguments returned. 
Then, with probability at least $1-\delta$, for each $i \in [m]$ simultaneously:
\begin{align*}
&\sup_\pi w_h^\pi(\cX_i) \le 2^{-i+1} \quad \text{and} \quad \bphi^\top \bLambda_{i}^{-1} \bphi \le \bgamma^2_i, \forall \bphi \in \cX_i
\end{align*}
and, on the same event, it also holds that
\begin{align*}
\sup_\pi w_h^\pi (\Ball^{d} \backslash \cup_{i=1}^m \cX_i) \le 2^{-m} . 
\end{align*}
Furthermore, \cdalg terminates after at most $\sum_{i=1}^m K_i$ episodes, for $K_i$ as in \eqref{eq:Ki_defn}. 
\end{thm}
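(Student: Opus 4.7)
The plan is to prove all three conclusions by a joint induction on $i$, with almost all the effort concentrated in a single coverage lemma for \fes. Claim (b) is immediate from the definition of $\cX_i$ as the subset of the current residual returned by \fes. For claims (a) and (c), let $\cX^{(i)} := \Ball^d \setminus \bigcup_{j<i} \cX_j$ denote the residual at the start of epoch $i$, and note that \cdalg enforces $\cX_i \subseteq \cX^{(i)}$ by construction. I will establish the stronger inductive bound $\sup_\pi w_h^\pi(\cX^{(i)}) \le 2^{-i+1}$ for $i = 1, \ldots, m+1$; this gives claim (a) via $\cX_i \subseteq \cX^{(i)}$, and claim (c) is exactly the bound at $i = m+1$. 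The base case $i = 1$ is trivial since $\cX^{(1)} = \Ball^d$.

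For the inductive step, the core technical lemma about \fes is: if $\sup_\pi w_h^\pi(\cX) \le W$, then after \fes runs for $K$ episodes on $\cX$ at tolerance $\gamma^2$, the returned $\cX' = \{\bphi \in \cX : \bphi^\top \bLambda_{h,K}^{-1} \bphi \le \gamma^2\}$ satisfies, with high probability,
\begin{align*}
\sup_\pi w_h^\pi(\cX \setminus \cX') \;\le\; \frac{2d \log(1 + K/d)}{\gamma^2 K} \;+\; \frac{\cR_K + C\sqrt{WK \log(1/\delta)} + C \log(1/\delta)}{K}.
\end{align*}
The proof chains four steps. First, monotonicity $\bLambda_{h,K} \succeq \bLambda_{h,K-1}$ gives the inclusion $\cX \setminus \cX' \subseteq \{\bphi \in \cX : \|\bphi\|_{\bLambda_{h,K-1}^{-1}}^2 > \gamma^2\}$, on which $r^K \equiv 1$, so $\sup_\pi w_h^\pi(\cX \setminus \cX') \le \Vst_0(r^K)$. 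Second, pointwise monotonicity $r^{k+1} \le r^k$ (inherited from $\bLambda_{h,k+1} \succeq \bLambda_{h,k}$) yields $\Vst_0(r^K) \le \tfrac{1}{K} \sum_{k=1}^K \Vst_0(r^k)$. Third, \Cref{asm:regret_alg} applies---$r^k$ is $\mathcal{F}_{k-1}$-measurable, non-increasing, and has $\Vst_0(r^1) \le W$ because $r^1 \le \mathbf{1}[\bphi \in \cX]$---yielding $\sum_k \Vst_0(r^k) \le \sum_k V^{\pi_k}_0(r^k) + \cR_K$; a Freedman-type martingale concentration then replaces $\sum_k V^{\pi_k}_0(r^k)$ with $\sum_k r^k_h(s_{h,k}, a_{h,k})$ at cost $\sqrt{WK \log(1/\delta)}$. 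Fourth, since $r(\bphi;\bLambda) \le \|\bphi\|_{\bLambda^{-1}}^2/\gamma^2$ pointwise, the standard elliptical potential lemma bounds $\sum_k r^k_h(s_{h,k}, a_{h,k}) \le 2d\log(1+K/d)/\gamma^2$.

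Substituting $W = 2^{-i+1}$ and the choice $K_i$ from \eqref{eq:Ki_defn} into this lemma makes each of the three right-hand terms at most $2^{-i}/3$, so $\sup_\pi w_h^\pi(\cX^{(i+1)}) \le \Vst_0(r^{K_i}) \le 2^{-i}$, closing the induction. A union bound over the $m$ epochs (at confidence $\delta/m$ for each of the regret and concentration failure events) is absorbed into the $(m p_1)^{p_1}$ and $(m p_2)^{p_2}$ factors in \eqref{eq:Ki_defn}, and the episode-count bound follows trivially from $\sum_i K_i$. The main obstacle is obtaining the $\sqrt{WK}$ (rather than $\sqrt{K}$) scaling in the concentration step: a naive Azuma bound would give $\sqrt{K\log(1/\delta)}$ regardless of $W$, forcing $K_i$ to scale like $2^{2i}$ rather than $2^i$ and costing a factor of $d$ in the final sample complexity. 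One must therefore use a Freedman-type inequality with variance proxy $\sum_k \mathbb{E}[(r^k_h)^2] \le \sum_k \mathbb{E}[r^k_h] \le WK + \cR_K$ (the first inequality since $r^k_h \in [0,1]$) to recover the desired first-order scaling.
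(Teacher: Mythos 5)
Your proposal is correct and follows essentially the same route as the paper: the same induction over epochs reducing to a single per-epoch coverage lemma, proved with the same four ingredients (the set-indicator bound $\Vst_0(r^1)\le \sup_\pi w_h^\pi(\cX)$, the first-order regret assumption, a Freedman-type concentration with variance proxy $\cO(W K)$ to get the crucial $\sqrt{WK}$ rather than $\sqrt{K}$ scaling, and the elliptic potential bound on the realized reward). The only cosmetic difference is that you bound $\Vst_0(r^K)$ directly via the average $\tfrac1K\sum_k \Vst_0(r^k)$ using monotonicity of $r^k$, whereas the paper runs the same chain of inequalities as a proof by contradiction; these are equivalent rearrangements.
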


\Cref{thm:policy_cover} shows that \cdalg partitions the feature space into sets $\cX_i$ such that $\cX_i$ is $2^{-i+1}$-unreachable, and where we have learned every $\bphi \in \cX_i$ up to tolerance $\bgamma^2_i$: $\| \bphi \|_{\bLambda_i^{-1}}^2 \le \bgamma^2_i$. Furthermore, it takes roughly $2^i \cdot \frac{d}{\bgamma^2_i}$ episodes of exploration to accomplish this, which is the intuitively correct rate, as the following example illustrates.

\begin{exmp}[Tabular MDPs]\label{ex:tabular}
Consider a tabular MDP with $H = 2$, state space $\cS$ with $|\cS|<\infty$, and $A$ actions (we think of $A \ll |\cS|$ as an absolute constant). Assume we always start in state $s_0$ and can break the state space into sets $\cS_1,\ldots,\cS_A$ such that $|\cS_1| = \cO(|\cS|/A)$ and where, if we take action $a_i$, we will end up in $\cS_i$ with probability $2^{-i}$ (with equal probability of being in any particular state within $\cS_i$), and $s_0$ with probability $1 - 2^{-i}$. Representing this as a linear MDP, we  have $d = A | \cS|$ and $\bphi(s,a) = \be_{sa}$, a standard basis vector. As such, $\bLambda_K$, the covariates collected over $K$ episodes, is diagonal with $[\bLambda_K]_{sa} = N(s,a)$ the number of visits to $s,a$. 

Now assume we want to ensure that $\bphi(s,a)^\top \bLambda_K^{-1} \bphi(s,a) \le \bgamma_i^2$ for some $\bgamma_i^2$, each $s \in \cS_i$, and all $a \in [A]$. For any given $s \in \cS_i$, if we take action $a_i$ in state $s_0$, we will arrive in state $s$ with probability $2^{-i}/|\cS_i|$. Thus, in expectation, to collect $N$ samples from $s$, we must run for at least
\begin{align*}
\tfrac{N}{2^{-i}/|\cS_i|} = 2^i | \cS_i | N
\end{align*} 
episodes. It follows that to collect $N$ samples from each $s \in \cS_i$ and all $a \in [A]$, it will take at least $2^i |\cS_i| A N$ episodes. Furthermore, by the above observation that $[\bLambda_K]_{sa} = N(s,a)$, achieving $\bphi(s,a)^\top \bLambda_K^{-1} \bphi(s,a) \le \bgamma_i^2$ is equivalent to setting $N = 1/\bgamma_i^2$. Since each $\cS_i$ can only be reached independently of the other $\cS_j$, $j \neq i$, this implies that to meet our objective we must run for at least
\begin{align*}
\textstyle\sum_{i = 1}^A 2^i \cdot \tfrac{ | \cS_i| A}{\bgamma_i^2} = \cO \Big ( \sum_{i=1}^A 2^i \cdot \tfrac{d}{\bgamma_i^2} \Big )
\end{align*}
episodes. This recovers the complexity \cdalg gets as given in \Cref{thm:policy_cover} (for $\bgamma_i^2$ sufficiently small). 
\end{exmp}

\subsection{Instantiating \cdalg with \algname}\label{sec:covtraj_force}
A recent work, \cite{wagenmaker2021first} provides a first-order regret minimization algorithm for linear MDPs, \algname. The following result shows the complexity of \cdalg when instantiated with \algname.

\begin{cor}
\label{cor: policy cover}
When running \cdalg with \regmin set to the computationally efficient version of \algname \citep{wagenmaker2021first}, all the guarantees of \Cref{thm:policy_cover} hold, and the sample complexity can be bounded as
\begin{align*}
\cOtil \bigg ( \sum_{i=1}^m 2^i & \cdot \max \bigg \{ \frac{d}{\bgamma^2_i}  \log \frac{2^i}{\bgamma^2_i},  d^4 H^3 m^3 \log^{7/2} \frac{1}{\delta}  \bigg \} \bigg )
\end{align*}
where the $\cOtil(\cdot)$ hides absolute constants and logarithmic terms. Furthermore, when run with \algname, \cdalg is computationally efficient with computational cost scaling polynomially in problem parameters.
\end{cor}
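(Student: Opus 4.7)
The plan is to verify that \algname of \cite{wagenmaker2021first} satisfies \Cref{asm:regret_alg} with explicit constants $(\cC_1, p_1, \cC_2, p_2)$ for linear MDPs, then substitute these constants into the per-epoch sample count \eqref{eq:Ki_defn} from \Cref{thm:policy_cover} and sum over $i \in [m]$. All of the statistical guarantees on $(\cX_i, \cD_i, \bLambda_i)$ will then follow immediately from \Cref{thm:policy_cover} itself, so the only real content of the corollary is (a) checking the hypothesis of that theorem and (b) the arithmetic of combining the bounds.

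First, I would recall the regret guarantee of \algname specialized to $d$-dimensional linear MDPs with horizon $H$: it achieves first-order regret of the form $\cR_K \le \sqrt{\cC_1 \Vst_0 K \cdot \logterm} + \cC_2 \cdot \logterm$ with $\cC_1 = \cOtil(d^3 H^3)$, $\cC_2 = \cOtil(d^4 H^3)$, and absolute-constant log-exponents $p_1, p_2$, with $p_2 \le 7/2$ being the one that surfaces in the final bound. The non-trivial point is that the bound in \cite{wagenmaker2021first} is stated for a fixed reward, while \fes feeds \algname the time-varying reward $r_h^k(s,a) = r(\bphi(s,a); \bLambda_{h,k-1})$. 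However, since $\bLambda_{h,k}$ grows in the PSD order as $k$ increases, the uncertainty $\|\bphi\|_{\bLambda_{h,k-1}^{-1}}$, and hence $r_h^k$, is non-increasing in $k$. This matches the monotonicity condition in \Cref{asm:regret_alg}, and the per-episode optimistic value-function analysis of \algname carries over to this monotone time-varying setting with $\Vst_0(r^1)$ playing the role of the reference value (using $r_h^k \le r_h^1$ pointwise and standard telescoping).

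Second, plugging the above constants into \eqref{eq:Ki_defn}, the $\cC_1 (m p_1)^{p_1} \log^{p_1}(1/\delta)$ and $\cC_2 (m p_2)^{p_2} \log^{p_2}(1/\delta)$ terms combine, with the second dominating, to give $\cOtil(d^4 H^3 m^3 \log^{7/2}(1/\delta))$. Therefore
\begin{align*}
K_i \;=\; \cOtil\!\left( 2^i \cdot \max\!\left\{ \frac{d}{\bgamma_i^2} \log \frac{2^i}{\bgamma_i^2},\; d^4 H^3 m^3 \log^{7/2}\tfrac{1}{\delta} \right\} \right),
\end{align*}
and summing over $i \in [m]$ yields the sample-complexity expression in the statement.

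Finally, computational efficiency is essentially inherited: \cite{wagenmaker2021first} provide a computationally efficient version of \algname; the reward $r(\bphi;\bLambda)$ can be evaluated in $\poly(d)$ time given $\bLambda$ and $\bphi$; and membership in $\cX = \Ball^d \setminus \bigcup_{j<i} \cX_j$ reduces to $O(i)$ quadratic-form tests of the form $\bphi^\top \bLambda_j^{-1} \bphi \le \bgamma_j^2$, each of which is $\poly(d)$. Consequently every operation invoked by \cdalg runs in time polynomial in $d, H, 1/\epsilon, \log 1/\delta$, and $|\cA|$. The main obstacle I anticipate is the first step: verifying cleanly that \algname's first-order regret bound transfers to the \fes-generated monotone time-varying reward sequence with the right dependence on $\Vst_0(r^1)$, so that the constants $\cC_1$, $\cC_2$ quoted above are actually the ones that come out of the analysis and no hidden $K$- or $k$-dependence creeps in.
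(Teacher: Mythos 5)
Your overall architecture matches the paper's: verify that \algname satisfies \Cref{asm:regret_alg}, substitute the resulting constants into \eqref{eq:Ki_defn}, sum over $i$, and note that reward evaluation reduces to polynomially many ellipsoid membership tests. The summation arithmetic and the computational-efficiency discussion are fine.

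However, the step you yourself flag as the main obstacle is a genuine gap, and your proposed fix would not close it. You suggest transferring \algname's fixed-reward first-order bound to the \fes reward sequence by exploiting monotonicity ($r_h^k \le r_h^1$ pointwise) and ``standard telescoping.'' The difficulty is not the monotonicity but the \emph{adaptivity}: $r_h^k$ is $\cF_{k-1}$-measurable because it is built from $\bLambda_{h,k-1}$, which depends on the very trajectories the algorithm has collected. The optimism and self-normalized concentration arguments inside \algname must therefore hold \emph{uniformly} over all rewards the adversary (here, the data) could produce, which requires controlling the covering number of the reward class. This is exactly what the paper supplies: \Cref{lem:Rfun_cover} shows that the class $\Rclass$ of \fes rewards (parameterized by $\bLambda \succeq I$) has $\log \covnum(\Rclass,\dist_\infty,\epsilon) \le d^2 \log(1 + 2\sqrt{d}/(\gamma^2\epsilon))$, and then \Cref{lem:explore_regret} invokes Theorem 8 of \cite{wagenmaker2021first} --- which is already stated for $\cF_{k-1}$-measurable, non-increasing rewards lying in a class of bounded covering number --- to obtain $\cC_1 = d^4H^3\log(e+\sqrt{d}/\gamma^2)$, $p_1 = 3$, $\cC_2 = d^4H^3\log^{3/2}(e+\sqrt{d}/\gamma^2)$, $p_2 = 7/2$. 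Your guessed $\cC_1 = \cOtil(d^3H^3)$ misses the extra factor of $d$ contributed by this $d^2$ covering exponent (harmless for the final bound only because the $\cC_2$ term dominates the max), and without the covering-number lemma there is no route from the cited regret theorem to the hypothesis of \Cref{thm:policy_cover}. To repair the proof, replace the telescoping argument with: (i) observe $r_h^k \in \Rclass$ for all $k$, (ii) bound $\covnum(\Rclass,\dist_\infty,\epsilon)$ via a Frobenius-norm net over the matrices $\bLambda^{-1}$, and (iii) apply the time-varying-reward regret theorem with that covering number.
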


Thus, \cdalg may be instantiated in a computationally efficient way. We make several additional comments on the computational complexity of this approach. 

\paragraph{Computational Complexity.} 
Note that the primary computational cost of \cdalg is incurred in running \regmin, and in evaluating the reward function. As the sets $\cX_i$ can be parameterized as ellipsoids, it can be efficiently checked if a given $\bphi \in \R^d$ is in $\cX_i$, which makes evaluating the reward function efficient. Furthermore, \algname only needs access to the reward function at points encountered on each trajectory, so it only need evaluate the reward at polynomially many points. As noted, a computationally efficient version of \algname exists (assuming $|\cA|$ is not too large), which is what we employ here, making the entire procedure computationally efficient.

\subsection{Deriving \Cref{thm:pac} from \cdalg}
We briefly sketch out how one may apply \cdalg to obtain \Cref{thm:pac} and defer the full proof to \Cref{app:minimax}. Consider running \rfexp and then calling \pacalg. Let $V_h$ denote the estimates maintained by \pacalg. We first apply the following self-normalized bound.

\begin{lem}\label{lem:self_norm_informal}
With high probability,
\iftoggle{arxiv}{
\begin{align*}
& \bigg \| \sum_{i=1}^{\iotaeps} \sum_{\tau = 1}^{K_i} \bphi_{h,\tau}^i [ V_{h+1}(s_{h+1,\tau}^i) - \Exp_h[V_{h+1}](s_{h,\tau}^i,a_{h,\tau}^i)] \bigg \|_{\bLambda_h^{-1}}  \le c H \sqrt{d \log(1+dH\Kmax) + \log H/\delta} = \beta.
\end{align*}
}{
\begin{align*}
& \bigg \| \sum_{i=1}^{\iotaeps} \sum_{\tau = 1}^{K_i} \bphi_{h,\tau}^i [ V_{h+1}(s_{h+1,\tau}^i) - \Exp_h[V_{h+1}](s_{h,\tau}^i,a_{h,\tau}^i)] \bigg \|_{\bLambda_h^{-1}} \\
& \qquad \le c H \sqrt{d \log(1+dH\Kmax) + \log H/\delta} = \beta.
\end{align*}
}
\end{lem}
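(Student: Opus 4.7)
\medskip
\noindent\textbf{Proof plan.} This is a standard self-normalized concentration bound for least-squares value iteration in a linear MDP, following the template of \cite{jin2020provably}. My plan is: (i) establish the bound for a fixed deterministic value function $V$ via the Abbasi-Yadkori self-normalized martingale inequality; (ii) cover the class of value functions $V_{h+1}$ that \pacalg can produce; (iii) union-bound over the cover and over $h\in[H]$; (iv) control the discretization error.

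For (i), reindex the exploration episodes across all $\iotaeps$ epochs of \cdalg into a single sequence $t=1,\ldots,T$ with $T\le \Kmax$, and write $\bphi_{h,t}$, $s_{h,t}$, $a_{h,t}$, $s_{h+1,t}$ accordingly. Let $\mathcal{F}_{t-1}$ be the $\sigma$-algebra generated by all exploration data observed up through $(s_{h,t},a_{h,t})$ (but before $s_{h+1,t}$). For any deterministic $V:\mathcal{S}\to[0,H]$, the scalars $\eta_t := V(s_{h+1,t}) - \mathbb{E}_h[V](s_{h,t},a_{h,t})$ form an $\mathcal{F}_t$-adapted, $H$-sub-Gaussian martingale difference sequence, while $\bphi_{h,t}$ is $\mathcal{F}_{t-1}$-measurable. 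Theorem 1 of Abbasi-Yadkori et al. then yields, with probability at least $1-\delta'$,
\[
\Bigl\|\textstyle\sum_{t=1}^{T} \bphi_{h,t}\,\eta_t\Bigr\|_{\bLambda_h^{-1}}^{2}
\;\le\; 2H^{2}\log\!\bigl(\det(\bLambda_h)^{1/2}/\delta'\bigr).
\]
Using $\|\bphi_{h,t}\|\le 1$ and $\det(\bLambda_h)\le (1+\Kmax/d)^{d}$, this is at most $cH^{2}\bigl(d\log(1+\Kmax/d)+\log(1/\delta')\bigr)$.

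For (ii), every $V_{h+1}$ produced by \pacalg lies in the class
\[
\mathcal{V} \;=\; \Bigl\{\, V:\ V(s)=\max_{a}\min\bigl\{\bw^{\top}\bphi(s,a) + \beta\sqrt{\bphi(s,a)^{\top} M \bphi(s,a)},\ H\bigr\} \,\Bigr\},
\]
for $\bw\in\R^{d}$ with $\|\bw\|\le L$ and PSD $M$ with $\|M\|_{\op}\le 1$, where $L$ is $\poly(d,H,\beta,\Kmax)$. I would invoke a sup-norm covering of $\mathcal{V}$ by a net $\mathcal{V}_{\epsilon_0}$ satisfying $\log|\mathcal{V}_{\epsilon_0}| \le C\,d\log(dH\Kmax/\epsilon_0)$, following the refined covering developed for \algname in \cite{wagenmaker2021first}. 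Union-bounding the fixed-$V$ bound over $V\in\mathcal{V}_{\epsilon_0}$ with $\delta' = \delta/(H|\mathcal{V}_{\epsilon_0}|)$ and choosing $\epsilon_0 = 1/\Kmax$ gives, with probability at least $1-\delta/H$, a uniform bound of the form $\beta = cH\sqrt{d\log(1+dH\Kmax) + \log(H/\delta)}$ for every $V\in\mathcal{V}_{\epsilon_0}$. The discretization is handled by noting that if $V\in\mathcal{V}$ has $\|V-V'\|_\infty\le\epsilon_0$ for some $V'\in\mathcal{V}_{\epsilon_0}$, then $\|\sum_t\bphi_{h,t}[(V-V')(s_{h+1,t})-\mathbb{E}_h(V-V')(s_{h,t},a_{h,t})]\|_{\bLambda_h^{-1}}\le 2\epsilon_0\sqrt{T}$, which is absorbed into $\beta$ for $\epsilon_0=1/\Kmax$. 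A final union bound over $h\in[H]$ completes the proof.

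\medskip
\noindent\textbf{Main obstacle.} The crux of the argument is the covering step (ii): obtaining $\log|\mathcal{V}_{\epsilon_0}| = O(d\log(\cdot))$ rather than the naive $O(d^{2}\log(\cdot))$ one gets by covering $\bw\in\R^{d}$ and the $d\times d$ PSD matrix $M$ separately. If one only achieved the naive $d^{2}\log$ bound, $\beta^{2}$ would scale as $d^{2}H^{2}$ instead of $dH^{2}$, ultimately yielding a sample complexity of $\widetilde{O}(d^{3}H^{5}/\epsilon^{2})$ rather than the $\widetilde{O}(d^{2}H^{5}/\epsilon^{2})$ claimed in \Cref{thm:pac}. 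The refined covering exploits structural properties of the bonus term $\beta\sqrt{\bphi^{\top}M\bphi}$ (essentially, that for the purposes of value-function sup-norm approximation, $M$ only matters through the scalar-valued quadratic form $\bphi^{\top}M\bphi$ over a bounded range) to avoid paying for a full $d\times d$ matrix cover.
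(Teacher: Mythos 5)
Your overall scaffolding (self-normalized martingale for a fixed $V$, sup-norm cover of the value class, union bound over the cover and over $h$, discretization error of order $\epsilon_0\sqrt{T}$) matches the paper's proof, which combines Lemma D.4 of \cite{jin2020provably} with a covering-number bound. However, the step you yourself flag as the crux is resolved by the wrong mechanism, and the mechanism you propose would not work. You want a sup-norm $\epsilon_0$-net of the class of functions $s\mapsto\max_a\min\{\bw^\top\bphi(s,a)+\beta\sqrt{\bphi(s,a)^\top M\bphi(s,a)},H\}$ with $M$ ranging over PSD matrices, of log-cardinality $O(d\log(\cdot))$. No such net exists: already the class of bonus functions $\bphi\mapsto\sqrt{\bphi^\top M\bphi}$ on $\Ball^d$ is parameterized by a $d\times d$ matrix, and approximating the quadratic form uniformly over $\bphi\in\Ball^d$ to accuracy $\epsilon_0$ forces an operator-norm approximation of $M$, so the metric entropy is $\Theta(d^2\log(1/\epsilon_0))$. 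The observation that ``$M$ only enters through the scalar $\bphi^\top M\bphi$'' does not help, because the bound must hold uniformly over all $\bphi(s,a)$; this is exactly why \cite{jin2020provably} pay $d^2$ in the log-covering number and end up with $\beta=\cOtil(dH)$. The appeal to a ``refined covering'' from \cite{wagenmaker2021first} does not supply the missing ingredient.

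The paper's actual fix is structural, not a better cover: \rfexp calls \cdalg separately for each step, so the episodes generating the step-$h$ regression data $\{(s_{h,\tau}^i,a_{h,\tau}^i,s_{h+1,\tau}^i)\}$ are disjoint from (and independent of) the episodes that determine $\bLambda_{h+1}$. One can therefore condition on the step-$(h{+}1)$ data, treat $\bLambda_{h+1}$ as a \emph{fixed} matrix, and only cover the class $\Fclass(\bLambda_{h+1},2H\sqrt{d K})$ over the weight vector $\bw$ (needed because the reward function, and hence $\what_{h+1}$, may be chosen adversarially after seeing all data). Covering a $d$-dimensional ball of $\bw$'s costs only $d\log(1+2\alpha/\epsilon_0)$ (the paper's Lemma~\ref{lem:V_cov_num}), which is what yields $\beta=\cOtil(H\sqrt{d})$ and ultimately the $d^2$ sample complexity. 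In contrast, in the standard LSVI-UCB setup the step-$h$ and step-$(h{+}1)$ data come from the same episodes, $\bLambda_{h+1}$ is correlated with the step-$h$ noise, and the matrix cover is unavoidable. So the gap in your argument is precisely the failure to use the per-step independence of the data collection; without it, your proof gives $\beta=\cOtil(dH)$ and a $d^3$ rate, as you correctly anticipate.
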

Critically, in contrast to the self-normalized bound used in \cite{jin2020provably}, ours scales as $\cOtil(H\sqrt{d})$ instead of $\cOtil(H d)$. As our exploration procedure explores each $h \in [H]$ separately, $V_{h+1}$ is \emph{uncorrelated} with $\{ \{ (s_{h,\tau}^i, a_{h,\tau}^i, s_{h+1,\tau}^i) \}_{\tau = 1}^{K_i} \}_{i =1}^{\iotaeps}$, and we can avoid the union bound over $\bLambda_{h+1}$ that is necessary in \cite{jin2020provably}, saving us a factor of $\sqrt{d}$. 

Given \Cref{lem:self_norm_informal}, using an argument similar to \cite{jin2020provably}, one can show that
\iftoggle{arxiv}{
\begin{align*}
V_h(s_h) & \le r_h(s_h,\pihat_h(s_h)) + \Exp_h[V_{h+1}](s_h,\pihat_h(s_h))  + 2\beta \| \bphi(s_h,\pihat_h(s_h)) \|_{\bLambda_h^{-1}} 
\end{align*}
}{
\begin{align*}
V_h(s_h) & \le r_h(s_h,\pihat_h(s_h)) + \Exp_h[V_{h+1}](s_h,\pihat_h(s_h)) \\
& \qquad + 2\beta \| \bphi(s_h,\pihat_h(s_h)) \|_{\bLambda_h^{-1}} 
\end{align*}
}
and, furthermore, that $V_0 \ge \Vst_0$. Some algebra shows that we can then bound the suboptimality of $\pihat$, the policy returned by \pacalg, as
\begin{align*}
\Vst_0 & - V_0^{\pihat}  \le V_0 - V_0^{\pihat}  \le 2 \beta \sum_{h=1}^H \Exp_{\pihat}[ \| \bphi(s_h,\pihat_h(s_h)) \|_{\bLambda_h^{-1}}] .
\end{align*}
However, it is easy to see that
\iftoggle{arxiv}{
\begin{align*}
  \sum_{h=1}^H \Exp_{\pihat}[ \| \bphi(s_h,\pihat_h(s_h)) \|_{\bLambda_h^{-1}}] & \le  \sum_{h=1}^H \sup_\pi \Exp_{\pi}[ \| \bphi(s_h,a_h) \|_{\bLambda_h^{-1}}] \\
& \le  \sum_{h=1}^H  \sum_{i=1}^{\iotaeps+1} \sup_{\bphi \in \cX_{hi}} \| \bphi \|_{\bLambda_h^{-1}} \cdot  \sup_\pi \Exp_{\pi}[ \I \{ \bphi(s_h,a_h) \in \cX_{hi} \}] .
\end{align*}
}{
\begin{align*}
&  \sum_{h=1}^H \Exp_{\pihat}[ \| \bphi(s_h,\pihat_h(s_h)) \|_{\bLambda_h^{-1}}] \\
 & \quad \le  \sum_{h=1}^H \sup_\pi \Exp_{\pi}[ \| \bphi(s_h,a_h) \|_{\bLambda_h^{-1}}] \\
& \quad \le  \sum_{h=1}^H  \sum_{i=1}^{\iotaeps+1} \sup_{\bphi \in \cX_{hi}} \| \bphi \|_{\bLambda_h^{-1}} \cdot  \sup_\pi \Exp_{\pi}[ \I \{ \bphi(s_h,a_h) \in \cX_{hi} \}] .
\end{align*}
}
Recall that \rfexp calls \cdalg for each $h \in [H]$ with tolerances $\bgamma_i^2 = \cO(\frac{2^{2i} \epsilon^2}{d H^4})$. Applying \Cref{thm:policy_cover}, we then have that $\sup_\pi \Exp_{\pi}[ \I \{ \bphi(s_h,a_h) \in \cX_{hi} \}] = \sup_\pi w_h^\pi(\cX_{hi}) \le 2^{-i + 1}$ and $\sup_{\bphi \in \cX_{hi}} \| \bphi \|_{\bLambda_h^{-1}} \le \sqrt{\bgamma_i^2} = \cO(2^i \epsilon / \sqrt{d} H^2)$. Thus, since $\beta = \cOtil(\sqrt{d} H)$, we can bound the total suboptimality as
\iftoggle{arxiv}{
\begin{align*}
\Vst_0  - V_0^{\pihat} &  \le 2 \beta \sum_{h=1}^H \Exp_{\pihat}[ \| \bphi(s_h,\pihat_h(s_h)) \|_{\bLambda_h^{-1}}] \\
& \le \cOtil(\sqrt{d} H) \cdot \sum_{h=1}^H \sum_{i=1}^{\iotaeps+1} \cO(2^i \epsilon / \sqrt{d} H^2) \cdot 2^{-i+1} \\
& = \cOtil(\epsilon)
\end{align*}
}{
\begin{align*}
\Vst_0 & - V_0^{\pihat}   \le 2 \beta \sum_{h=1}^H \Exp_{\pihat}[ \| \bphi(s_h,\pihat_h(s_h)) \|_{\bLambda_h^{-1}}] \\
& \le \cOtil(\sqrt{d} H) \cdot \sum_{h=1}^H \sum_{i=1}^{\iotaeps+1} \cO(2^i \epsilon / \sqrt{d} H^2) \cdot 2^{-i+1} \\
& = \cOtil(\epsilon)
\end{align*}
}
so it follows that $\pihat$ is $\epsilon$-optimal. 

\rev{We remark briefly on our choice of $\bgamma_i^2$. Note that as $i$ increases, the probability of reaching $\cX_{hi}$ decreases exponentially in $i$, $\sup_\pi \Exp_{\pi}[ \I \{ \bphi(s_h,a_h) \in \cX_{hi} \}] = \sup_\pi w_h^\pi(\cX_{hi}) \le 2^{-i + 1}$. Thus, if our ``uncertainty'' over $\cX_{hi}$ scales as $\cO(2^i)$, the net contribution of $\cX_{hi}$ to the suboptimality scales as $\cO(1)$. As we can bound our uncertainty over $\cX_{hi}$ as $\sup_{\bphi \in \cX_{hi}} \| \bphi \|_{\bLambda_h^{-1}} \le \sqrt{\bgamma_i^2}$, our choice of $\bgamma_i^2 = \cO(\frac{2^{2i} \epsilon^2}{d H^4})$ results in a contribution to the suboptimality of $\cO(\epsilon/\sqrt{d} H^2)$ for $\cX_{hi}$---our choice of $\bgamma_i^2$ cancels the contribution of how easily $\cX_{hi}$ can be reached. In other words, we only need to decrease uncertainty for a given $\cX_{hi}$ in proportion to how easily that $\cX_{hi}$ can be reached.

Furthermore, our choice of $\bgamma_i^2$ allows us to efficiently collect the samples necessary to reduce uncertainty. The leading-order term in the complexity given in \Cref{cor: policy cover} scales as, given our choice of $\bgamma_i^2$:
\begin{align*}
\cOtil \bigg ( \sum_{i=1}^m \frac{2^i d}{\bgamma_i^2} \bigg ) = \cOtil \bigg ( \sum_{i=1}^m \frac{d^2 H^4}{2^i \cdot \epsilon^2} \bigg ) = \cOtil \bigg ( \frac{d^2 H^4}{\epsilon^2} \bigg ),
\end{align*}
Repeated for each $h$ this yields the complexity given in \Cref{thm:pac}. 
The key property we exploit here is that \cdalg collects samples from a given $\cX_{hi}$ at a rate inversely proportional to how easily $\cX_{hi}$ can be reached---it takes on order $\cO(2^i)$ episodes for \cdalg to collect a sample from $\cX_{hi}$, while the probability of reaching $\cX_{hi}$ (for any algorithm) is at most on order $2^{-i}$. 
Thus, as our choice of $\bgamma_i^2$ only guarantees that we reduce uncertainty for $\cX_{hi}$ in proportion with how easily $\cX_{hi}$ can be reached, we have that the complexity of collecting the necessary samples is not prohibitively large, and in particular does not scale with the difficulty of reaching $\cX_{hi}$.}

\begin{rem}[Necessity of First-Order Regret]
Suppose that we instantiate \cdalg with a regret minimization algorithm which only achieves minimax regret, $\cR_K \le \cOtil(\sqrt{\cC_1 K})$, rather than first-order regret. In order to guarantee $\sup_{\pi} w_h^\pi(\cX_{i+1}) \le 2^{-i+2}$, we must show that $\cO(2^{-i} K_i) \ge \cR_{K_i}$, which ensures we have reached the ``difficult to reach'' states. Using a minimax regret algorithm we therefore need $K_i \ge \cOtil(2^{2i} \cC_1)$, while for a first-order algorithm, assuming $\sup_{\pi} w_h^\pi(\cX_{i}) \le 2^{-i+1}$, our choice of reward function gives $\Vst_0(r^1) \le \cO(2^{-i})$, so we need $\cO(2^{-i} K_i) \ge \cOtil(\sqrt{\cC_1 2^{-i} K_i}) \iff K_i \ge \cOtil(2^i \cC_1)$. 
This makes a critical difference when applying \cdalg to reward-free RL in \rfexp, since in \rfexp we set $m = \iotaeps = \cO(\log ( \sqrt{d}H^2/\epsilon))$. With this choice of $m$, a non-first-order algorithm would have complexity $\cOtil(\sum_{i=1}^m 2^{2i} \cdot \cC_1) = \cOtil(2^{2m} \cdot \cC_1)  = \cOtil(\cC_1 \frac{d^2 H^4}{\epsilon^2})$. As the best known minimax regret algorithm has $\cC_1 = d^2 H^4$, we obtain a (very suboptimal) final complexity of $\cOtil(\frac{d^4 H^8}{\epsilon^2})$.
On the other hand, a first-order algorithm attains $\cOtil(\sum_{i=1}^m 2^{i} \cdot \cC_1) = \cOtil(2^{m} \cdot \cC_1)  = \cOtil(\cC_1 \frac{d H^2}{\epsilon})$.

\end{rem}

\subsection{Well-Conditioned Covariates}

We conclude with an additional application of \cdalg to the problem of obtaining well-conditioned covariates. Several existing works \citep{hao2021online,agarwal2021online} assume access to a policy $\piexp$ able to collect covariates with minimum eigenvalue bounded away from 0, in order to ensure learning in every direction. However, to our knowledge, without access to such an oracle policy, there does not exist an algorithm able to provably collect such ``full-rank'' data. In the following result, we show that \cdalg can be used to collect such data, assuming it is possible.

\newcommand{\cDexp}{\cD_{\mathrm{exp}}}
\begin{thm}\label{thm:well_cond_cov}
Fix $h \in [H]$, $\gamma \in [0,1]$, and suppose $\sup_{\pi} \lambda_{\min}(\bbE_\pi [\bphi(s_h,a_h) \bphi(s_h,a_h)^\top]) \geq \epsilon $. 
Then there exists an algorithm which collects observations $\cDexp = \{ (s_{h,\tau},a_{h,\tau}) \}_{\tau=1}^K$ such that, with probability at least $1-\delta$:
\begin{align*}
     \lambda_{\min} \bigg ( \sum_{(s,a) \in \cDexp} \bphi(s,a)\bphi(s,a)^\top \bigg ) \geq \frac{\epsilon}{\gamma^2}
\end{align*}
after running for at most
\begin{align*}
K \le \cOtil \bigg ( \frac{1}{\epsilon} \cdot \max\bigg \{ \frac{d}{\gamma^2} , d^4 H^3  \log^{3} \frac{1}{\delta}  \bigg \} \bigg )
\end{align*}
episodes.
\end{thm}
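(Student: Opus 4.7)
}

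The plan is to invoke \cdalg at step $h$ with a uniform tolerance $\bgamma^2_i \equiv \gamma^2/c$ (for a small absolute constant $c$) across $m := \lceil \log_2(8/\epsilon) \rceil$ epochs, and then to show that the aggregate covariance of the collected data satisfies the desired minimum eigenvalue bound. Let $\{(\cX_i,\cD_i,\bLambda_i)\}_{i=1}^m$ be the output of \cdalg, define $\cDexp := \cup_i \{(s,a) : (s,a,s') \in \cD_i\}$, and set
\[
\cM := \sum_{(s,a)\in \cDexp} \bphi(s,a)\bphi(s,a)^\top, \qquad \cX_{\mathrm{bad}} := \Ball^{d} \setminus \cup_{i=1}^m \cX_i.
\]
Applying \Cref{thm:policy_cover}, on an event of probability at least $1-\delta$ we have $\sup_\pi w_h^\pi(\cX_i) \le 2^{-i+1}$, $\sup_\pi w_h^\pi(\cX_{\mathrm{bad}}) \le 2^{-m} \le \epsilon/8$, and $\bphi^\top \bLambda_i^{-1}\bphi \le \gamma^2/c$ for every $\bphi \in \cX_i$. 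Condition on this event for the rest of the argument.

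The central step is to convert these level-set covering guarantees into a lower bound on $\lambda_{\min}(\cM)$. Fix any unit vector $\bu \in \cS^{d-1}$ and let $\pi^\star$ be the policy witnessing the assumption, so that $\bbE_{\pi^\star}[\inner{\bu}{\bphi(s_h,a_h)}^2] \ge \epsilon$. Split the expectation by the partition $\{\cX_i\} \cup \{\cX_{\mathrm{bad}}\}$. On $\cX_{\mathrm{bad}}$, we have $\inner{\bu}{\bphi}^2 \le 1$, and the contribution is at most $w_h^{\pi^\star}(\cX_{\mathrm{bad}}) \le \epsilon/8$. On $\cX_i$, Cauchy--Schwarz in the $\bLambda_i$ inner product gives
\[
\inner{\bu}{\bphi}^2 \le \bu^\top \bLambda_i \bu \cdot \bphi^\top \bLambda_i^{-1}\bphi \le \frac{\gamma^2}{c}\cdot \bu^\top \bLambda_i \bu \le \frac{\gamma^2}{c}(1 + \bu^\top \cM \bu),
\]
since $\bLambda_i = I + \sum_{(s,a)\in\cD_i}\bphi\bphi^\top \preceq I + \cM$. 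Multiplying by $w_h^{\pi^\star}(\cX_i) \le 2^{-i+1}$ and summing over $i$ yields
\[
\epsilon \le \frac{\epsilon}{8} + \frac{\gamma^2}{c}(1 + \bu^\top \cM \bu)\sum_{i=1}^m 2^{-i+1} \le \frac{\epsilon}{8} + \frac{4\gamma^2}{c}(1 + \bu^\top \cM \bu).
\]
Choosing $c$ small enough, this rearranges to $\bu^\top \cM \bu \ge \epsilon/\gamma^2$, and since $\bu$ was arbitrary this gives $\lambda_{\min}(\cM) \ge \epsilon/\gamma^2$.

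Finally, the sample complexity comes from plugging $\bgamma_i^2 \equiv \gamma^2/c$ and $m = \cO(\log(1/\epsilon))$ into \Cref{cor: policy cover}. The leading term contributes $\sum_{i=1}^m 2^i \cdot d/\gamma^2 = \cOtil(d/(\epsilon\gamma^2))$, while the lower-order term contributes $\sum_{i=1}^m 2^i \cdot d^4 H^3 m^3 \log^{7/2}(1/\delta) = \cOtil(d^4 H^3 \log^3(1/\delta)/\epsilon)$, matching the claimed bound after combining logs. The main obstacle I foresee is exactly the level-by-level Cauchy--Schwarz step above: the fact that $\|\bphi\|_{\bLambda_i^{-1}}^2 \le \gamma^2/c$ only controls uncertainty within the data collected during epoch $i$, so the trick is to upper-bound $\bu^\top\bLambda_i\bu$ by $1 + \bu^\top\cM\bu$ (using $\bLambda_i \preceq I + \cM$) so that the $2^{-i+1}$ factors from the set-visitation bound telescope before we invoke the assumption on $\pi^\star$.
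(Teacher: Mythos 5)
Your overall strategy is sound and takes a genuinely different route from the paper's in the key step. The paper first shows (its Step 1) that for every direction $\bv$ there exists a \emph{single} witness feature $\bphi$ in the covered region $\cup_i\cX_i$ with $\innerb{\bv}{\bphi}^2\ge\epsilon/2$, and then (its Step 2) lower-bounds each eigenvalue of $\sum_i\bLambda_i$ by testing its eigenvector against that one witness, using $\bphi^\top(\sum_i\bLambda_i)^{-1}\bphi\le\bphi^\top\bLambda_{i^*}^{-1}\bphi\le\gamma^2$ for the level $i^*$ containing $\bphi$. You instead keep the whole partition in play: for an arbitrary unit $\bu$ you split $\Exp_{\pi^\star}[\innerb{\bu}{\bphi}^2]$ across the levels, apply Cauchy--Schwarz in the $\bLambda_i$ inner product on each level, and let the visitation bounds $2^{-i+1}$ telescope against the uniform tolerance. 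This is arguably cleaner---it works for arbitrary $\bu$ with no eigendecomposition trick, and it makes transparent why uniform per-level tolerances suffice here (in contrast to the geometric tolerances used for \Cref{thm:pac})---at the price of an extra constant $c$ in the tolerance to absorb $\sum_i 2^{-i+1}\le 2$ and the $\epsilon/8$ leakage from $\cX_{\mathrm{bad}}$. Both arguments are structurally valid, and your sample-complexity accounting matches the paper's.

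There is one genuine gap, and it is exactly the issue the paper's proof goes out of its way to handle. Your bound $\bu^\top\bLambda_i\bu\le 1+\bu^\top\cM\bu$ carries the ridge term of $\bLambda_i=I+\sum_{\cD_i}\bphi\bphi^\top$ into the final inequality, which rearranges to $\bu^\top\cM\bu\ge \Omega(c\epsilon/\gamma^2)-1$. To conclude $\bu^\top\cM\bu\ge\epsilon/\gamma^2$ you then need $c\gtrsim 1+\gamma^2/\epsilon$, which is \emph{not} an absolute constant when $\gamma^2\gg\epsilon$ (a regime the theorem permits: the hypothesis forces $\epsilon\le 1/d$, while $\gamma$ may be order one), and taking $c$ that large inflates the leading term $cd/(\epsilon\gamma^2)$ of the sample complexity beyond the claimed bound. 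The paper avoids this by running \fes with a shrunken ridge $\lambda I$, $\lambda=\min\{1,\epsilon/(4m\gamma^2)\}$ (the appendix proves \Cref{thm:policy_cover} for general $\lambda$ precisely for this purpose), so that the regularization it must subtract at the end is at most $\epsilon/(4\gamma^2)$. Your argument is repaired the same way: replace $I$ by $\lambda I$ with $\lambda=\cO(\epsilon/\gamma^2)$, so the additive term becomes $\lambda+\bu^\top\cM\bu$ and an absolute constant $c$ suffices. Separately, note that you need $c$ \emph{large} (equivalently the tolerance $\gamma^2/c$ small), not ``small'' as written: the inequality $\epsilon\le\epsilon/8+\frac{4\gamma^2}{c}(1+\bu^\top\cM\bu)$ only forces $\bu^\top\cM\bu$ to be large when $\gamma^2/c$ is small.
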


\section{Conclusion}
In this work we have shown that in linear MDPs, reward-free RL is no harder than reward-aware RL. Along the way, we have developed a novel sample collection strategy that allows for efficient traversal of linear MDPs. Several questions remain open for future work. While this work establishes that $d^2$ is the optimal dimension-dependence for reward-aware (and reward-free) RL, our techniques do not directly provide a regret minimization algorithm. Developing a computationally efficient regret minimization algorithm with regret scaling as $\cO(\sqrt{d^2 \cdot \poly(H) \cdot K})$ would be an interesting direction to pursue. In addition, resolving the optimal $H$ dependence remains an open question. A second interesting direction would be to extend the work of \cite{tarbouriech2020provably} to the linear MDP setting. \cite{tarbouriech2020provably} provides an algorithm that allows the learner to collect an arbitrary number of samples for each state-action pair individually. In contrast, \cdalg only lets one specify the tolerance for each partition set as whole. The direct generalization of \cite{tarbouriech2020provably} to the linear setting would be to allow the learner to specify the tolerance in each direction individually. We believe \cdalg could be used as a basic building block in such an approach, but leave this extension for future work.

\subsection*{Acknowledgements}
The work of AW is supported by an NSF GFRP Fellowship DGE-1762114. The work of SSD is in part supported by grants NSF IIS-2110170. The work of KJ was funded in part by the AFRL and NSF TRIPODS 2023166.

\newpage
\bibliographystyle{icml2022}
\bibliography{bibliography.bib}

\newpage
\appendix


\section{Technical Results}\label{app:technical}
\begin{defn}[Covering Number]\label{defn:cov_num} Let $\calX$  be a set with metric $\dist(\cdot,\cdot)$. Given $\epsilon > 0$, the $\epsilon$-covering number of $\calX$ in $\dist$, $\covnum(\calX,\dist,\epsilon)$, is defined as the minimal cardinality of a set $\mathcal{N} \subset \calX$ such that, for all $x \in \calX$, there exists an $x' \in \calN$ with $\dist(x,x') \le \epsilon$. 
\end{defn}

\begin{lem}[\cite{vershynin2010introduction}]\label{lem:euc_ball_cover}
For any $\epsilon > 0$, the $\epsilon$-covering number of the Euclidean ball $\mathcal{B}^d(R) := \{\bx \in \R^d: \|\bx\|_2 =1\}$ with radius $R > 0$ in the Euclidean metric is upper bounded by $(1 + 2R/\epsilon)^d$. 
\end{lem}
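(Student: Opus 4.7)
The plan is to use the standard volumetric packing-covering argument. First I would let $\mathcal{N} \subset \mathcal{B}^d(R)$ be a maximal $\epsilon$-separated subset, meaning a maximal (with respect to inclusion) collection of points such that $\|x - x'\|_2 > \epsilon$ for every distinct $x, x' \in \mathcal{N}$; existence follows by a standard Zorn/greedy argument since $\mathcal{B}^d(R)$ is compact. By maximality, for every $y \in \mathcal{B}^d(R)$ there must exist some $x \in \mathcal{N}$ with $\|y - x\|_2 \le \epsilon$ (otherwise $\mathcal{N} \cup \{y\}$ would still be $\epsilon$-separated, contradicting maximality). Hence $\mathcal{N}$ is an $\epsilon$-cover, so $\covnum(\mathcal{B}^d(R), \|\cdot\|_2, \epsilon) \le |\mathcal{N}|$, and it suffices to upper bound $|\mathcal{N}|$.

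Next I would set up the volume comparison. Since the points in $\mathcal{N}$ are pairwise more than $\epsilon$ apart, the open Euclidean balls of radius $\epsilon/2$ centered at the points of $\mathcal{N}$ are pairwise disjoint. Moreover, for each $x \in \mathcal{N} \subset \mathcal{B}^d(R)$, the ball of radius $\epsilon/2$ around $x$ is contained in $\mathcal{B}^d(R + \epsilon/2)$ by the triangle inequality. Writing $V_d := \mathrm{vol}(\mathcal{B}^d(1))$ so that $\mathrm{vol}(\mathcal{B}^d(r)) = V_d \, r^d$ for every $r > 0$, disjointness and containment give
\begin{equation*}
|\mathcal{N}| \cdot V_d \, (\epsilon/2)^d \;=\; \sum_{x \in \mathcal{N}} \mathrm{vol}(\mathcal{B}^d(\epsilon/2)) \;\le\; \mathrm{vol}(\mathcal{B}^d(R + \epsilon/2)) \;=\; V_d \, (R + \epsilon/2)^d.
\end{equation*}

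Finally, dividing through by $V_d \, (\epsilon/2)^d$ yields
\begin{equation*}
|\mathcal{N}| \;\le\; \left(\frac{R + \epsilon/2}{\epsilon/2}\right)^{\!d} \;=\; \left(1 + \frac{2R}{\epsilon}\right)^{\!d},
\end{equation*}
which is exactly the claimed bound. There is no real obstacle here: the proof is a textbook packing-covering argument, and the only subtlety is the standard observation that a maximal $\epsilon$-separated set is automatically an $\epsilon$-cover. One could alternatively tighten constants by noting that the packing radius balls actually fit in $\mathcal{B}^d(R + \epsilon/2)$ rather than $\mathcal{B}^d(R + \epsilon)$, but the stated form $(1 + 2R/\epsilon)^d$ already uses this tighter containment.
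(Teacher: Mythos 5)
Your proof is correct and is exactly the standard volumetric packing--covering argument that the cited reference (Vershynin) uses for this bound; the paper itself states the lemma by citation and gives no proof, so there is nothing to diverge from. The one step worth making explicit---that a maximal $\epsilon$-separated set is automatically an $\epsilon$-cover, and that the disjoint $\epsilon/2$-balls sit inside $\mathcal{B}^d(R+\epsilon/2)$---you handle correctly, yielding the stated constant $(1+2R/\epsilon)^d$.
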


\begin{lem}[Elliptic Potential Lemma, Lemma 11 of \cite{abbasi2011improved}]\label{lem:elip_pot}
Consider a sequence of vectors $( \bx_t)_{t=1}^T, \bx_t \in \bbR^d$, and assume that $\| \bx_t \|_2 \le a$ for all $t$. Let $\bV_t = \lambda I + \sum_{s=1}^t \bx_s \bx_s^\top$ for some $\lambda > 0$. Then we will have that
\begin{align*}
\sum_{t=1}^T \min \{ 1, \| \bx_t \|_{\bV_{t-1}^{-1}}^2 \} \le 2 d \log ( 1 + a^2 T / (d \lambda)) .
\end{align*}
Furthermore, if $\lambda \ge \max \{ 1, a^2 \}$,
\begin{align*}
\sum_{t=1}^T \| \bx_t \|_{\bV_{t-1}^{-1}}^2 \le 2 d \log ( 1 + a^2 T / (d \lambda)) .
\end{align*}
\end{lem}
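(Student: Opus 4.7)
The plan is to use the standard determinant-telescoping argument. First I would apply the matrix determinant lemma to write
\[
\det(\bV_t) = \det\bigl(\bV_{t-1} + \bx_t \bx_t^\top\bigr) = \det(\bV_{t-1}) \cdot \bigl(1 + \|\bx_t\|_{\bV_{t-1}^{-1}}^2\bigr),
\]
so that taking logarithms and telescoping across $t = 1,\dots,T$ yields
\[
\sum_{t=1}^T \log\bigl(1 + \|\bx_t\|_{\bV_{t-1}^{-1}}^2\bigr) = \log\det(\bV_T) - \log\det(\lambda I).
\]

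Next I would upper-bound $\det(\bV_T)$ using the AM-GM inequality applied to the eigenvalues: since $\tr(\bV_T) \le d\lambda + \sum_{t=1}^T \|\bx_t\|_2^2 \le d\lambda + a^2 T$, we have $\det(\bV_T) \le (\tr(\bV_T)/d)^d \le (\lambda + a^2 T/d)^d$. Combined with $\det(\lambda I) = \lambda^d$, this gives
\[
\sum_{t=1}^T \log\bigl(1 + \|\bx_t\|_{\bV_{t-1}^{-1}}^2\bigr) \le d \log\bigl(1 + a^2 T/(d\lambda)\bigr).
\]

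For the first claim, I would use the elementary inequality $\min\{1, y\} \le 2\log(1+y)$ valid for all $y \ge 0$ (checked separately on $[0,1]$ using $\log(1+y) \ge y \log 2 \ge y/2$, and on $[1,\infty)$ using $\log(1+y) \ge \log 2 \ge 1/2$). Applying this termwise with $y = \|\bx_t\|_{\bV_{t-1}^{-1}}^2 \ge 0$ converts the telescoped sum into the desired bound. For the second claim, the extra hypothesis $\lambda \ge \max\{1, a^2\}$ implies $\|\bx_t\|_{\bV_{t-1}^{-1}}^2 \le \|\bx_t\|_2^2/\lambda_{\min}(\bV_{t-1}) \le a^2/\lambda \le 1$, so the $\min$ truncation is vacuous and the same bound transfers directly to $\sum_t \|\bx_t\|_{\bV_{t-1}^{-1}}^2$.

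No step here presents a real obstacle; this is a classical argument. The only mild subtlety is verifying the scalar inequality $\min\{1,y\} \le 2\log(1+y)$ carefully so that the factor of $2$ matches the constant in the stated bound, and confirming that the AM-GM step uses the correct normalization (dividing the trace by $d$ before raising to the $d$-th power). Everything else is a direct calculation, and since the result is quoted verbatim from \cite{abbasi2011improved} the proof can be kept very short or simply cited.
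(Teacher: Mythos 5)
Your proof is correct and is precisely the standard determinant--telescoping argument from \cite{abbasi2011improved}, which the paper itself cites for this lemma rather than reproving it. All the steps check out, including the scalar inequality $\min\{1,y\}\le 2\log(1+y)$ and the observation that $\lambda \ge \max\{1,a^2\}$ makes the truncation vacuous via $\|\bx_t\|_{\bV_{t-1}^{-1}}^2 \le a^2/\lambda \le 1$.
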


\begin{lem}[Freedman's Inequality \citep{freedman1975tail}]\label{lem:freedman}
$\cF_0 \subset \cF_1 \subset \ldots \subset \cF_T$ be a filtration and let $X_1,X_2,\ldots,X_T$ be real random variables such that $X_t$ is $\cF_t$-measurable, $\Exp[X_t | \cF_{t-1}] = 0$, $| X_t | \le b$ almost surely, and $\sum_{t=1}^T \Exp[X_t^2 | \cF_{t-1}] \le V$ for some fixed $V > 0$ and $b > 0$. Then for any $\delta \in (0,1)$, we have with probability at least $1-\delta$,
\begin{align*}
\sum_{t=1}^T X_t \le 2\sqrt{V \log(1/\delta)} + b \log(1/\delta) .
\end{align*}
\end{lem}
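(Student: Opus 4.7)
The plan is to apply the standard exponential (Chernoff--Cram\'er) method for martingales. First, I would establish a single-step moment generating function bound: for every $\lambda \in [0, 3/b)$ and every $t$,
\[
\Exp[e^{\lambda X_t} \mid \cF_{t-1}] \;\le\; \exp\!\left(\frac{\lambda^2\, \Exp[X_t^2 \mid \cF_{t-1}]/2}{1 - \lambda b/3}\right),
\]
by invoking the elementary calculus inequality $e^u - 1 - u \le \tfrac{u^2/2}{1-u/3}$, valid for $u<3$, applied pointwise with $u = \lambda X_t$ (here we use $|\lambda X_t| \le \lambda b < 3$), then taking conditional expectation and exploiting $\Exp[X_t \mid \cF_{t-1}]=0$ together with $1+y \le e^y$.

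Next, I would define the process
\[
M_t \;:=\; \exp\!\left(\lambda S_t \;-\; \frac{\lambda^2/2}{1 - \lambda b/3} \sum_{s=1}^t \Exp[X_s^2 \mid \cF_{s-1}]\right), \qquad S_t := \sum_{s=1}^t X_s, \quad M_0 := 1.
\]
The single-step MGF bound immediately gives $\Exp[M_t \mid \cF_{t-1}] \le M_{t-1}$, so $(M_t)$ is a nonnegative supermartingale with $\Exp[M_T] \le 1$. Markov's inequality then yields $\Pr[M_T \ge 1/\delta] \le \delta$, and on the complementary event, combined with the deterministic bound $\sum_{s=1}^T \Exp[X_s^2 \mid \cF_{s-1}] \le V$, one obtains
\[
S_T \;\le\; \frac{\lambda V/2}{1 - \lambda b/3} \;+\; \frac{\log(1/\delta)}{\lambda}
\]
with probability at least $1-\delta$, uniformly in admissible $\lambda$.

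To finish, I would optimize $\lambda$ by splitting into two regimes. In the variance-dominated case $V \ge b^2 \log(1/\delta)$, choose $\lambda = \sqrt{\log(1/\delta)/V}$, which forces $\lambda b/3 \le 1/3$ so that $\tfrac{1}{1 - \lambda b/3} \le 3/2$; the right-hand side is then at most $\tfrac{3}{2}\sqrt{V\log(1/\delta)} + \sqrt{V\log(1/\delta)}$, absorbed into $2\sqrt{V\log(1/\delta)}$. In the bias-dominated case $V < b^2 \log(1/\delta)$, choose $\lambda$ proportional to $1/b$ (e.g., $\lambda = 1/(2b)$), whence the right-hand side becomes $O(V/b) + O(b\log(1/\delta)) = O(b\log(1/\delta))$. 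Combining the two regimes and tuning numerical constants produces the stated bound $2\sqrt{V\log(1/\delta)} + b\log(1/\delta)$.

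The main obstacle is not conceptual---this is the classical proof of Freedman's inequality---but rather bookkeeping: verifying the elementary inequality $e^u - 1 - u \le \tfrac{u^2/2}{1-u/3}$ cleanly, and carrying out the two-regime optimization so that the numerical constants match the (somewhat loose) form $2\sqrt{V\log(1/\delta)} + b\log(1/\delta)$ stated in the lemma. A single-$\lambda$ argument via $\lambda = \min\{\sqrt{\log(1/\delta)/V},\,1/b\}$ could also be used to avoid the casework if preferred.
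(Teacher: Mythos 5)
This lemma is imported by the paper as a black-box citation to Freedman (1975); the paper contains no proof of it, so there is nothing internal to compare against. Your argument is the standard exponential-supermartingale (Chernoff--Cram\'er) proof of Freedman/martingale-Bernstein, and it is correct in structure: the single-step bound via $e^u \le 1 + u + \frac{u^2/2}{1-u/3}$ for $u<3$, the supermartingale $M_t$, Markov's inequality, and the use of the deterministic bound $\sum_t \Exp[X_t^2\mid\cF_{t-1}]\le V$ are all exactly what is needed. The only place requiring care is the final optimization, where your numbers as written do not quite close: in the variance-dominated regime the first term is $\frac{\lambda V/2}{1-\lambda b/3} \le \frac{3}{4}\sqrt{V\log(1/\delta)}$ (not $\frac{3}{2}\sqrt{V\log(1/\delta)}$), giving a total of $\frac{7}{4}\sqrt{V\log(1/\delta)} \le 2\sqrt{V\log(1/\delta)}$, which is fine; but in the bias-dominated regime the choice $\lambda = 1/(2b)$ yields roughly $\frac{3V}{10b} + 2b\log(1/\delta)$, and crudely bounding $V < b^2\log(1/\delta)$ leaves you with about $2.3\,b\log(1/\delta)$, which exceeds the available budget of $b\log(1/\delta)$ when $V$ is small. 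The cleanest repair is the single choice $\lambda = \bigl(b/3 + \sqrt{V/(2\log(1/\delta))}\bigr)^{-1}$, which is admissible ($\lambda b < 3$) and gives
\begin{align*}
S_T \;\le\; \sqrt{2V\log(1/\delta)} + \tfrac{b}{3}\log(1/\delta),
\end{align*}
strictly sharper than the stated $2\sqrt{V\log(1/\delta)} + b\log(1/\delta)$. With that adjustment your proof is complete; since you also invoke the inequality $e^u - 1 - u \le \frac{u^2/2}{1-u/3}$ for possibly negative $u=\lambda X_t$, you should note it holds for all $u<3$ (the series argument you give covers $u\ge 0$; for $u<0$ it follows from monotonicity of $(e^u-1-u)/u^2$ or a direct check).
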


\begin{lem}\label{claim:log_lin_burnin}
If $x \ge C (2n)^n \log^n(2n CB)$ for $n, C,B \ge 1$, then $x \ge C \log^n(B x)$.
\end{lem}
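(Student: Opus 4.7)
The plan is to reduce the claim to an exponential-versus-polynomial inequality via the substitution $y = (x/C)^{1/n}$, then verify it at the boundary and extend by monotonicity. Writing $x = Cy^n$, the hypothesis $x \ge C(2n)^n \log^n(2nCB)$ becomes $y \ge y_0 := 2n \log(2nCB)$, and the conclusion $x \ge C \log^n(Bx)$ becomes $y \ge \log(BCy^n)$, equivalently $e^y \ge BCy^n$. So the claim is equivalent to: $y \ge y_0 \Rightarrow e^y \ge BCy^n$.

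First I would verify the boundary case $y = y_0$. Direct computation gives $e^{y_0} = (2nCB)^{2n}$ and $BC y_0^n = BC(2n)^n \log^n(2nCB)$, so
\[
\frac{e^{y_0}}{BC y_0^n} \;=\; \frac{(2nCB)^{2n}}{BC (2n)^n \log^n(2nCB)} \;=\; (CB)^{n-1} \cdot \left(\frac{2nCB}{\log(2nCB)}\right)^{\!n}.
\]
The first factor satisfies $(CB)^{n-1} \ge 1$ since $C,B \ge 1$ and $n \ge 1$, while the second factor is at least $1$ because $t \ge \log t$ for all $t > 0$ and $2nCB \ge 2 > 1$. Hence $e^{y_0} \ge BC y_0^n$.

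Next I would extend the inequality to all $y \ge y_0$ by monotonicity. Consider $g(y) := e^y / y^n$; a quick computation gives $g'(y) = e^y (y - n)/y^{n+1}$, so $g$ is strictly increasing on $(n, \infty)$. The point $y_0 = 2n \log(2nCB) \ge 2n \log 2 > n$ (using $2\log 2 > 1$), so $g$ is increasing on $[y_0, \infty)$. Combined with the boundary bound $g(y_0) \ge BC$, this yields $e^y/y^n \ge BC$ for every $y \ge y_0$, i.e.\ $e^y \ge BCy^n$, which is precisely the target inequality. Translating back to $x$, this gives $x \ge C\log^n(Bx)$ as desired.

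The only subtle step is the boundary computation: one must arrange the factors so that the extra powers of $CB$ produced by $(2nCB)^{2n}$ are absorbed by the $(CB)^{n-1} \ge 1$ factor using the hypothesis $C, B \ge 1$, while the $(2n)^n$ factors exactly cancel against the $(2n)^n$ from $y_0^n$. After that, the monotonicity argument is routine, since the required threshold $y_0 > n$ is guaranteed by the $2n \log(2nCB)$ form of the hypothesis.
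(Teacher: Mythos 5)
Your proof is correct and follows essentially the same route as the paper's: verify the inequality at the boundary value $x = C(2n)^n\log^n(2nCB)$ using $\log t \le t$, then extend to larger $x$ by a monotonicity argument. Your substitution $y=(x/C)^{1/n}$ and the explicit check that $e^y/y^n$ is increasing for $y>n$ actually makes the tail step more rigorous than the paper's one-line remark that ``$x$ increases more quickly than $C\log^n(Bx)$,'' but the underlying argument is the same.
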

\begin{proof}
If $x = C (2n)^n \log^n(2n CB)$, then
\begin{align*}
C \log^n (Bx) & = C \log^n \big [ C (2n)^n \log^n(2n CB) \big ] \\
& \le C \log^n \big [ C^{1+n} (2n)^{2n} B^n  \big ] \\
& \le C \log^n \big [ C^{2n} (2n)^{2n} B^{2n}  \big ] \\
& \le C (2n)^n \log^n \big [ 2 n C B \big ] \\
& = x .
\end{align*} 
The result then follows since $x$ increases more quickly than $C \log^n (Bx)$. 
\end{proof}

\section{Collecting Covering Trajectories}\label{app:mdp_cover}

We prove a slightly more general version of \Cref{thm:policy_cover}. In particular, instead of setting $\bLambda_{h,k-1} = I + \sum_{\tau = 1}^{k-1} \bphi_{h,\tau} \bphi_{h,\tau}^\top$ as in \fes, we prove the result for the setting of $\bLambda_{h,k-1} = \lambda I + \sum_{\tau = 1}^{k-1} \bphi_{h,\tau} \bphi_{h,\tau}^\top$ in \fes, for some $\lambda > 0$ (for convenience we also assume $\lambda \le \max \{ \cC_1, \cC_2, 1 \}$, though this can be relaxed if desired). For the proof of \Cref{thm:pac} we simply set $\lambda = 1$.

In \cdalg, the precise setting of $K_i$ is
\begin{align*}
	K_i \leftarrow \bigg \lceil 2^i & \cdot \max \Big \{  2^{10+p_1} \cC_1 p_1^{p_1} \log^{p_1} ( \frac{2^{i+12} p_1 m \cC_1 H}{\delta}), 2^{4+p_2}  \cC_2 p_2^{p_2} \log^{p_2} ( \frac{2^{i+6} p_2 m \cC_2 H}{\delta} ),  \frac{24 d}{\bgamma_i^2} \log \frac{48 \cdot 2^i d/\lambda}{\bgamma_i^2} \Big \} \bigg \rceil .
\end{align*}

\begin{proof}[Proof of \Cref{thm:policy_cover}]
\Cref{thm:policy_cover} is a direct consequence of \Cref{lem:mdp_cover}. For $i = 1$, it is clearly the case that 
\begin{align*}
\sup_\pi w_h^\pi(\cX) \le 2^{-i + 1} = 1,
\end{align*}
so \Cref{lem:mdp_cover} and our choice of $K_1$ gives that with probability at least $1-\delta/m$, 
\begin{align*}
\sup_\pi w_h^\pi(\cX \backslash \cX_1) \le 2^{-i } \quad \text{and} \quad \bphi^\top \bLambda_1^{-1} \bphi \le \bgamma^2_1, \forall \bphi \in \cX_1 .
\end{align*}
Now assume that for some $i$, 
\begin{align*}
\sup_\pi w_h^\pi(\cX) \le 2^{-i + 1} ,
\end{align*}
then again \Cref{lem:mdp_cover} and our choice of $K_i$ gives that with probability at least $1-\delta/m$, 
\begin{align*} 
\sup_\pi w_h^\pi(\cX \backslash \cX_i) \le 2^{-i } \quad \text{and} \quad \bphi^\top \bLambda_i^{-1} \bphi \le \bgamma^2_i, \forall \bphi \in \cX_i .
\end{align*}
The result follows by union bounding over the success event of \Cref{lem:mdp_cover} holding for each $i \in [m]$. The final conclusion holds since after epoch $m$, $\cX = \cB^{d} \backslash \cup_{i=1}^m \cX_i$. 
\end{proof}

\begin{lem}\label{lem:mdp_cover}
Consider running \Cref{alg:find_exp_set} with $\gamma \in (0,1]$ and some regret-minimization algorithm \regmin satisfying \Cref{asm:regret_alg}, and with input set $\cX$ satisfying
\begin{align*}
\sup_\pi  \omega_h^\pi(\cX) \le 2^{-i} .
\end{align*}
Assume also that $K$ is chosen to satisfy 
\begin{align}\label{eq:cover_K_burnin}
\begin{split}
K \ge \bigg \lceil 2^i \cdot \max \Big \{ & 1024  \cC_1 \cdot (2p_1)^{p_1} \log^{p_1} \cdot [ 4096 \cdot 2^i p_1 \cC_1 H /\delta] ,  \\
& 16  \cC_2 \cdot (2p_2)^{p_2} \log^{p_2}  \cdot[ 64 \cdot 2^i p_2 \cC_2 H /\delta ] , \frac{24 d}{\gamma^2} \log \frac{48 \cdot 2^i d}{\gamma^2} \Big \} \bigg \rceil .
\end{split}
\end{align}
Let $\cXtil \subseteq \bbR^d$ denote the set returned by \Cref{alg:find_exp_set} defined as
\begin{align*}
\cXtil = \{ \bphi \in \cX \ : \ \bphi^\top \bLambda_{h,K}^{-1} \bphi \le \gamma^2 \} .
\end{align*}
Then, with probability at least $1-\delta$, 
\begin{align*}
\sup_\pi \omega_h^\pi(\cX \backslash \cXtil) \le 2^{-i-1} .
\end{align*}
\end{lem}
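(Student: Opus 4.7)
The plan is to argue by contradiction: suppose $p^\star := \sup_\pi \omega_h^\pi(\cX \setminus \cXtil) > 2^{-i-1}$, and exhibit a conflict between the first-order regret guarantee of \regmin and an elliptic-potential bound on the reward actually accrued by \regmin, provided $K$ satisfies \eqref{eq:cover_K_burnin}. The whole argument pivots on the identity
\[
K\, p^\star \;\le\; \sum_{k=1}^K V_0^\star(r^k) \;=\; \cR_K \;+\; \sum_{k=1}^K V_0^{\pi_k}(r^k),
\]
which I will push from both sides.

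I will first verify two monotonicity facts. Since $\bLambda_{h,k}$ grows in Loewner order in $k$, $\|\bphi\|_{\bLambda_{h,k-1}^{-1}}^2$ is pointwise non-increasing in $k$; together with the piecewise form \eqref{eq:exp_reward}, this shows $r^k$ is itself non-increasing in $k$, so \Cref{asm:regret_alg} applies. Moreover, for any $\bphi \in \cX \setminus \cXtil$ we have $\|\bphi\|_{\bLambda_{h,k-1}^{-1}}^2 \ge \|\bphi\|_{\bLambda_{h,K}^{-1}}^2 > \gamma^2$, so $r^k(s_h,a_h) \ge \I\{\bphi(s_h,a_h) \in \cX \setminus \cXtil\}$ for every $k \le K$, giving $V_0^\star(r^k) \ge p^\star$ and hence the lower bound $\sum_k V_0^\star(r^k) \ge K p^\star$.

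Next, I will bound $\sum_k V_0^{\pi_k}(r^k)$ from above. Using $r^k(s_h,a_h) \le \gamma^{-2}\min\{1, \|\bphi(s_h,a_h)\|_{\bLambda_{h,k-1}^{-1}}^2\}$, I will apply Freedman's inequality (\Cref{lem:freedman}) to the bounded martingale difference sequence between $V_0^{\pi_k}(r^k)=\E_{\pi_k}[r^k(s_h,a_h)]$ and the realized $r^k(s_{h,k},a_{h,k})$, then invoke the elliptic-potential lemma (\Cref{lem:elip_pot}) --- using $\lambda \ge 1$, as assumed above the proof of \Cref{thm:policy_cover} --- to control $\sum_k \|\bphi_{h,k}\|_{\bLambda_{h,k-1}^{-1}}^2 \le 2d\log(1+K/(d\lambda))$. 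This yields $\sum_k V_0^{\pi_k}(r^k) \le \cOtil(d/\gamma^2)$ with probability at least $1-\delta/2$.

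Finally, I will plug in the first-order regret bound $\cR_K \le \sqrt{\cC_1 V_0^\star(r^1) K \log^{p_1}(HK/\delta)} + \cC_2 \log^{p_2}(HK/\delta)$ together with $V_0^\star(r^1) \le \sup_\pi \omega_h^\pi(\cX) \le 2^{-i}$, which holds because $r^1(\cdot) \le \I\{\cdot \in \cX\}$. Summing the three upper bounds and comparing to $K p^\star > K\cdot 2^{-i-1}$, if the target conclusion fails then at least one of (i) the $\sqrt{\cC_1\cdot 2^{-i} K \cdot \mathrm{polylog}}$ term, (ii) the $\cC_2 \log^{p_2}(HK/\delta)$ term, or (iii) the $d/\gamma^2$ elliptic-potential term must exceed $K 2^{-i-1}/3$. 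Inverting each of the three inequalities produces exactly one of the three arguments of the max in \eqref{eq:cover_K_burnin} (after invoking \Cref{claim:log_lin_burnin} to absorb $\log K$ factors into $K$ itself), contradicting the hypothesis on $K$. The main obstacle will be this final bookkeeping: calibrating the absolute constants and polylog powers so that each of the three burn-in terms in \eqref{eq:cover_K_burnin} matches precisely one right-hand-side contribution, and checking that \Cref{claim:log_lin_burnin} applies with the correct exponents $p_1, p_2$. A secondary care-point lies in Step 2, where the Freedman variance proxy must itself be bounded by the elliptic potential (not just its first moment) to avoid losing the factor of $d$.
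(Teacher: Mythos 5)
Your proposal is correct and follows essentially the same route as the paper's proof: a contradiction argument combining the first-order regret bound (with $\Vst_0(r^1) \le 2^{-i}$), Freedman's inequality to pass to realized rewards, the elliptic potential lemma to cap the total realized reward at $\cOtil(d/\gamma^2)$, and \Cref{claim:log_lin_burnin} to verify the burn-in on $K$. The only cosmetic difference is that you place the contradiction hypothesis directly on $\sup_\pi\omega_h^\pi(\cX\setminus\cXtil)$ and use monotonicity of $\bLambda_{h,k}^{-1}$ to get $\Vst_0(r^k)\ge p^\star$ for all $k$, whereas the paper assumes $\Vst_0(r^K)>2^{-i-1}$ and relates this to $\sup_\pi\omega_h^\pi(\cX\setminus\cXtil)$ at the end; these are equivalent.
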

\begin{proof}
First note that the reward sequence used in \Cref{alg:find_exp_set} satisfies the conditions of \Cref{asm:regret_alg}. Thus, we have that, with probability at least $1-\delta$,
\begin{align}\label{eq:mdp_cover_regret1}
\sum_{k=1}^K [ \Vst_0(r^k) - V_0^{\pi_k}(r^k)] \le  \sqrt{\cC_1 \Vst_0(r^1) K \cdot \log^{p_1}(HK/\delta)} + \cC_2 \log^{p_2}(HK/\delta) .
\end{align}
For simplicity we will assume that $\cC_1, \cC_2, p_1, p_2 \ge 1$ (since if this is not true, for example if $\cC_1 < 1$, \eqref{eq:mdp_cover_regret1} still holds with $\cC_1$ replaced by $\max \{ \cC_1, 1 \}$).

\paragraph{Relating $\sum_{k=1}^K V_0^{\pi_k}(r^k)$ to random reward.}
Note that $r_h^k(s,a) \le 1$ for all $s,a$, and since the reward is non-zero only at step $h$, $\Exp_{\pi_k}[r_h^k(s_h^k,a_h^k)] = V_0^{\pi_k}(r^k)$ and $V_0^{\pi_k}(r_k) \le 1$. Then, since the reward is non-increasing,
\begin{align*}
\Exp[(r_h^k(s_h^k,a_h^k) - V_0^{\pi_k}(r^k))^2 | \cF_{k-1}] \le 2 V_0^{\pi_k}(r^k) \le 2\Vst_0(r^1).
\end{align*}
By Freedman's inequality, \Cref{lem:freedman}, it follows that with probability at least $1-\delta$,
\begin{align*}
\left | \sum_{k=1}^K [r_h^k(s_h^k,a_h^k) - V_0^{\pi_k}(r^k)] \right | \le \sqrt{8 \Vst_0(r^1) K \log 1/\delta} + \log 1/\delta .
\end{align*}
Thus, union bounding over this event and the event of \eqref{eq:mdp_cover_regret1}, we have  with probability $1-\delta$ that
\begin{align}
\sum_{k=1}^K r_h^k(s_h^k,a_h^k) & \ge \sum_{k=1}^K V_0^{\pi_k}(r^k) - \sqrt{8 \Vst_0(r^1) K \log 2/\delta} - \log 2/\delta \nonumber \\
& \ge \sum_{k=1}^K \Vst_0(r^k) - \sqrt{16 \cC_1 \Vst_0(r^1) K \cdot \log^{p_1}(2HK/\delta)} - 2 \cC_2 \cdot \log^{p_2}(2HK/\delta) \label{eq:reg_exp_in1}
\end{align}
where we have used that $\cC_1, \cC_2, p_1, p_2 \ge 1$ to group terms.

\paragraph{Proof by contradiction.} Our goal is use \eqref{eq:reg_exp_in2} to reach a contradiction and show that, for our choice of $K$, $\Vst_0(r^K) \le 2^{-i-1}$. To set up the argument, assume for the sake of contradiction that $\Vst_0(r^K) > 2^{-i-1}$. Since $r^k$ is non-increasing, this implies that $\Vst_0(r^k) > 2^{-i-1}$ for all $k \in [K]$. Then we can lower bound \eqref{eq:reg_exp_in1} as
\begin{align}\label{eq:reg_exp_in2}
\eqref{eq:reg_exp_in1} & > K 2^{-i-1}  - \sqrt{16 \cC_1 \Vst_0(r^1) K \cdot \log^{p_1}(2HK/\delta)} - 2 \cC_2 \cdot \log^{p_2}(2HK/\delta) .
\end{align}

\paragraph{Upper Bounding the Reward.} 
We next upper bound the total reward that can be obtained:
\begin{equation}\label{eq:rew:upper_bound}
\begin{aligned}
\sum_{k=1}^K r_h^k(s_h^k,a_h^k) & = \sum_{k=1}^K \min \{ 1, \gamma^{-2} \| \bphi(s_h^k,a_h^k) \|_{\bLambda_{h,k-1}^{-1}}^2 \} \cdot \I \{ \bphi(s_h^k,a_h^k) \in \cX \} \\
& \le \frac{1}{\gamma^2} \cdot \sum_{k=1}^K \min \{ 1,  \| \bphi(s_h^k,a_h^k) \|_{\bLambda_{h,k-1}^{-1}}^2 \} \cdot \I \{ \bphi(s_h^k,a_h^k) \in \cX \} \\
& \le \frac{1}{\gamma^2} \cdot \sum_{k=1}^K \min \{ 1,  \| \bphi(s_h^k,a_h^k) \|_{\bLambda_{h,k-1}^{-1}}^2 \} \\
& \le \frac{1}{\gamma^2}  \cdot 2 d \log(1 + K/d\lambda)
\end{aligned}
\end{equation}
where we have used that $\gamma \le 1$, and where the last inequality follows by the Elliptic Potential Lemma, \Cref{lem:elip_pot}, since $\| \bphi(s_h^k,a_h^k) \|_2 \le 1$ by assumption, and we normalize $\bLambda_{h,k}$ by $\lambda I$.

\paragraph{Lower Bounding the Reward.}
By assumption, we have that $\sup_\pi \omega_h^\pi(\cX) \le 2^{-i}$. This implies that
\begin{align*}
\Vst_0(r^1) & = \sup_\pi \Exp_{\pi}[ r_h^1(s_h,a_h)]  \le  \sup_\pi \Exp_\pi[ \I \{ \bphi(s_h,a_h) \in \cX \} ]  \le 2^{-i}
\end{align*}
where the last inequality follows since $\sup_\pi \Exp_\pi[ \I \{ \bphi(s_h,a_h) \in \cX \} ] = \sup_\pi \omega_h^\pi(\cX)$ by definition. Then,
\begin{align}\label{eq:reg_exp_in3}
\eqref{eq:reg_exp_in2} \ge  K 2^{-i-1} - \sqrt{2^{-i} \cdot 16 \cC_1 K \cdot \log^{p_1}(2HK/\delta)} - 2 \cC_2 \cdot \log^{p_2}(2HK/\delta) .
\end{align}
Assume that $K$ is chosen such that
\begin{align}\label{eq:cover_K_min1}
K \ge 1024 \cdot 2^i \cC_1 (2p_1)^{p_1} \log^{p_1} [ 4096 \cdot 2^i p_1 \cC_1 H /\delta]
\end{align}
then by \Cref{claim:log_lin_burnin} we will have
\begin{align*}
\frac{1}{4} K 2^{-i-1} - \sqrt{2^{-i} \cdot 16 \cC_1 K \cdot \log^{p_1}(2HK/\delta)} \ge 0.
\end{align*}
Similarly, if
\begin{align}\label{eq:cover_K_min2}
K \ge 16 \cdot 2^i \cC_2 (2p_2)^{p_2} \log^{p_2} [ 64 \cdot 2^i p_2 \cC_2 H /\delta ] 
\end{align}
then 
\begin{align*}
\frac{1}{4} K 2^{-i-1} - 2 \cC_2 \cdot \log^{p_2}(2HK/\delta) \ge 0.
\end{align*}
As \eqref{eq:cover_K_burnin} requires that $K$ is chosen so as to satisfy both \eqref{eq:cover_K_min1} and \eqref{eq:cover_K_min2}, it follows that
\begin{align*}
\eqref{eq:reg_exp_in3} \ge \frac{1}{2} K 2^{-i-1} .
\end{align*}
However, \eqref{eq:cover_K_burnin} also gives that $K \ge \frac{2^i \cdot 24 d}{ \gamma^2  } \log \frac{2^i \cdot 48 d/\lambda}{ \gamma^2  }$. \Cref{claim:log_lin_burnin} then implies that
\begin{align*}
K \ge \frac{2^i \cdot 12 d}{\gamma^2} \log (2K/\lambda)
\end{align*}
so that, stringing together the above inequalities,
\begin{align}
\sum_{k=1}^K r_h^k(s_h^k,a_h^k) \ge \eqref{eq:reg_exp_in1} > \eqref{eq:reg_exp_in2} \ge \eqref{eq:reg_exp_in3} \ge \frac{1}{2} K 2^{-i-1} \ge \frac{3 d}{ \gamma^2  } \log (2K/\lambda) . \label{eq:reg_exp_in_3a}
\end{align}

\paragraph{Concluding the proof.}
Combining \Cref{eq:reg_exp_in_3a,eq:rew:upper_bound}, we have shown that
\begin{align*}
\frac{2d}{\gamma^2} \log(1+K/d \lambda) \ge \sum_{k=1}^K r_h^k(s_h^k,a_h^k) > \eqref{eq:reg_exp_in3}  \ge \frac{3 d}{\gamma^2} \log (2K/\lambda).
\end{align*}
This is a contradiction, since $\frac{2d}{\gamma^2} \log(1+K/d\lambda) \le \frac{3 d}{\gamma^2} \log (2K/\lambda)$. Thus, with probability $1-\delta$, we must have that $\Vst_1(r^K) \le 2^{-i-1}$. The conclusion that $\sup_\pi \omega_h^\pi(\cX \backslash \cXtil) \le 2^{-i-1}$ follows on this event since
\begin{align*}
\Vst_0(r^K) & = \sup_\pi \Exp_\pi [ \I \{ \bphi(s_h,a_h)^\top \bLambda_{h,K-1}^{-1} \bphi(s_h,a_h) > \gamma^2, \bphi(s_h,a_h) \in \cX \} ] \\
& \qquad + \Exp_{\pi}[ \gamma^{-2} \bphi(s_h,a_h)^\top \bLambda_{h,K-1}^{-1} \bphi(s_h,a_h) \cdot \I \{\bphi(s_h,a_h)^\top \bLambda_{h,K-1}^{-1} \bphi(s_h,a_h) \le \gamma^2, \bphi(s_h,a_h) \in \cX\}  ]\\
& \ge \sup_\pi \Exp_\pi [ \I \{ \bphi(s_h,a_h)^\top \bLambda_{h,K}^{-1} \bphi(s_h,a_h) > \gamma^2, \bphi(s_h,a_h) \in \cX \} ] \\
& = \sup_\pi \Exp_\pi [ \I \{ \bphi(s_h,a_h) \in \cX \backslash \cXtil \}]
\end{align*}
where the final equality follows by definition of $\cXtil$.

\end{proof}

\subsection{\algname satisfies \Cref{asm:regret_alg}}
We invoke Theorem 8 of \cite{wagenmaker2021first}. To do so, we need to control an appropriate covering number.  Recall the ball $\cB^d := \{\bphi \in \R^d:\|\bphi\| \le 1\}$. Given a class of functions $\Fclass$ denote a set of functions $f:\ \cB^d \to \R$, we define the distance on $f_1,f_2 \in \Fclass$
\begin{align*}
\dist_{\infty}(f_1,f_2) := \sup_{\bphi \in \cB^d}|f_1(\bphi) - f_2(\bphi)|. 
\end{align*}

\begin{lem}\label{lem:Rfun_cover}
Consider the class of functions
\begin{align*}
\Rclass := \left \{ r : \Ball^d \rightarrow \R \ : \ r(\bphi) = \left \{ \begin{matrix} 1 & \| \bphi \|_{\bLambda^{-1}}^2 > \gamma^2, \bphi \in \cX \\
\gamma^{-2} \| \bphi \|_{\bLambda^{-1}}^2 & \| \bphi \|_{\bLambda^{-1}}^2 \le \gamma^2, \bphi \in \cX \\
0 & \bphi \not\in \cX \end{matrix} \right . , \bLambda \succeq I \right \} 
\end{align*}
for some $\gamma^2 > 0$ and $\cX \subseteq \R^d$. Then
\begin{align*}
\covnum(\Rclass,\dist_{\infty},\epsilon) \le d^2 \log \left ( 1 + \frac{2 \sqrt{d}}{\gamma^2 \epsilon} \right ).
\end{align*}
\end{lem}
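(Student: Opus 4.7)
}

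The plan is to reduce covering $\Rclass$ in $\dist_\infty$ to covering the parameter matrices $\bLambda^{-1}$ in Frobenius norm, and then apply the standard Euclidean ball covering bound (\Cref{lem:euc_ball_cover}). First I would reparameterize: write each $r \in \Rclass$ as
\[
r(\bphi) = \min\bigl\{1,\ \gamma^{-2}\,\bphi^\top \bM \bphi\bigr\} \cdot \I\{\bphi \in \cX\},
\]
where $\bM := \bLambda^{-1}$ satisfies $\bM = \bM^\top$ and $0 \preceq \bM \preceq I$ (since $\bLambda \succeq I$). The set $\bphi \mapsto \gamma^{-2} \bphi^\top \bM \bphi$ is a Lipschitz function of $\bM$ in the sense that matters for us, and the outer $\min\{1,\cdot\}$ is $1$-Lipschitz, so the reduction to covering $\bM$ will be routine.

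Concretely, for two parameters $\bM_1,\bM_2$ corresponding to $r_1,r_2 \in \Rclass$ and any $\bphi \in \Ball^d$,
\[
\bigl|\bphi^\top \bM_1 \bphi - \bphi^\top \bM_2 \bphi\bigr|
\;\le\; \|\bphi\|_2^2 \cdot \|\bM_1 - \bM_2\|_{\op}
\;\le\; \|\bM_1 - \bM_2\|_F .
\]
Since $x \mapsto \min\{1,x\}$ is $1$-Lipschitz and the factor $\I\{\bphi \in \cX\}$ is shared by $r_1$ and $r_2$, we obtain $\dist_\infty(r_1,r_2) \le \gamma^{-2}\|\bM_1 - \bM_2\|_F$. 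Hence it suffices to find a $(\gamma^2\epsilon)$-net for the parameter set
\[
\cM := \{\bM \in \R^{d\times d} : \bM = \bM^\top,\ 0 \preceq \bM \preceq I\}
\]
in Frobenius norm, and lift that net to a net of $\Rclass$.

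Since each $\bM \in \cM$ has eigenvalues in $[0,1]$, we have $\|\bM\|_F \le \sqrt{d}$, so $\cM$ sits inside the Frobenius ball of radius $\sqrt{d}$ in the ambient space $\R^{d\times d}$, which is a Euclidean space of dimension $d^2$. Applying \Cref{lem:euc_ball_cover} with $R=\sqrt{d}$ and tolerance $\gamma^2\epsilon$ yields a net of size at most $(1 + 2\sqrt{d}/(\gamma^2\epsilon))^{d^2}$; intersecting the net with (or projecting onto) $\cM$ keeps the net in the valid parameter set at the cost of at most doubling the radius, which is absorbed into constants. Taking $\log$ gives the claimed bound $d^2 \log(1 + 2\sqrt{d}/(\gamma^2\epsilon))$ (the statement in the lemma is really a bound on $\log \covnum$).

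The main subtlety, rather than obstacle, is verifying that the Lipschitz composition with $\min\{1,\cdot\}$ and the indicator $\I\{\bphi \in \cX\}$ does not degrade the bound: the $\min$ only contracts differences, and the indicator is identical across $r_1,r_2$ and hence contributes nothing. Everything else is a direct application of \Cref{lem:euc_ball_cover}.
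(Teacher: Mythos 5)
Your proposal is correct and follows essentially the same route as the paper: reduce $\dist_\infty$ on $\Rclass$ to Frobenius distance on $\bLambda^{-1}$ via the pointwise bound $|r_1(\bphi)-r_2(\bphi)| \le \gamma^{-2}\,|\bphi^\top(\bLambda_1^{-1}-\bLambda_2^{-1})\bphi| \le \gamma^{-2}\|\bLambda_1^{-1}-\bLambda_2^{-1}\|_{\fro}$, then cover the Frobenius ball of radius $\sqrt{d}$ in $\R^{d\times d}$ at scale $\gamma^2\epsilon$ using \Cref{lem:euc_ball_cover}; your $\min\{1,\cdot\}$ reparameterization is just a cleaner packaging of the paper's case check, and your observation that the stated bound is really on $\log\covnum$ (and that an external net must be converted to an internal one) matches the paper's own level of bookkeeping.
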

\begin{proof}
Consider $r_1,r_2 \in \Rclass$ parameterized by $\bLambda_1,\bLambda_2$. Then,
\begin{align*}
\dist_{\infty}(r_1,r_2) & = \sup_{\bphi \in \cB^d} | r_1(\bphi) - r_2(\bphi) |  \le \sup_{\bphi \in \cB^d} \frac{1}{\gamma^2} \left | \| \bphi \|_{\bLambda_1^{-1}}^2 - \| \bphi \|_{\bLambda_2^{-1}}^2 \right | .
\end{align*}
The inequality follows easily by noting that in every possible case, we can bound
\begin{align*}
| r_1(\bphi) - r_2(\bphi) |  \le  \frac{1}{\gamma^2} \left | \| \bphi \|_{\bLambda_1^{-1}}^2 - \| \bphi \|_{\bLambda_2^{-1}}^2 \right | .
\end{align*}
Now note that
\begin{align*}
\left | \| \bphi \|_{\bLambda_1^{-1}}^2 - \| \bphi \|_{\bLambda_2^{-1}}^2 \right | = \left | \bphi^\top (\bLambda_1^{-1} - \bLambda_2^{-1} ) \bphi  \right | \le \| \bLambda_1^{-1} - \bLambda_2^{-1} \|_\op \le \| \bLambda_1^{-1} - \bLambda_2^{-1} \|_{\fro} . 
\end{align*}
Let $\cN$ be an $\gamma^2 \epsilon$ cover of $\{ \bA \in \R^{d \times d} \ : \ \| \bA \|_{\fro} \le \sqrt{d} \}$. Then by \Cref{lem:euc_ball_cover}, $\log | \cN | \le d^2 \log ( 1 + 2 \sqrt{d} / (\gamma^2 \epsilon))$. Furthermore, for any $\bLambda \succeq I$, we can find some $\bA \in \cN$ such that $\| \bLambda^{-1} - \bA \|_{\fro} \le \gamma^2 \epsilon$. Let
\begin{align*}
\mathscr{V} := \left \{ r(\cdot) \ : \ r(\bphi) = \left \{ \begin{matrix} 1 & \| \bphi \|_{\bA^{-1}}^2 > \gamma^2, \bphi \in \cX \\
\gamma^{-2} \| \bphi \|_{\bA}^2 & \| \bphi \|_{\bA}^2 \le \gamma^2, \bphi \in \cX \\
0 & \bphi \not\in \cX \end{matrix} \right . , \bA \in \cN  \right \} ,
\end{align*}
then it follows that $\mathscr{V}$ is an $\epsilon$-net of $\Rclass$ in the $\dist_{\infty}$ norm, and that $\log | \mathscr{V} | \le d^2 \log ( 1 + 2 \sqrt{d} / (\gamma^2 \epsilon))$. The result follows.
\end{proof}

\begin{lem}\label{lem:explore_regret}
Let $r^k$ be as defined in \fes. Then \algname satisfies \Cref{asm:regret_alg} with
\begin{align*}
& \cC_1 = d^4 H^3 \log(e + \sqrt{d} / \gamma^2), \quad p_1 = 3 \\
& \cC_2 = d^4 H^3 \log^{3/2}(e + \sqrt{d} / \gamma^2), \quad p_2 = 7/2
\end{align*}
\end{lem}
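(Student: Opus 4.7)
The plan is to invoke Theorem 8 of \cite{wagenmaker2021first} essentially as a black box. That theorem provides a first-order regret bound for \algname when the reward at episode $k$ lies in a function class $\mathscr{R}$ whose covering number in the sup-norm is controlled; the resulting $\cC_1, \cC_2, p_1, p_2$ in \Cref{asm:regret_alg} are polynomial in $d$, $H$ and in $\log \covnum(\mathscr{R}, \dist_\infty, \cdot)$. So the bulk of the work is (i) verifying that the rewards $r^k$ generated inside \fes satisfy the hypotheses of \Cref{asm:regret_alg} and of the cited theorem, and (ii) plugging in the covering number from \Cref{lem:Rfun_cover}.

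First, I would verify the structural hypotheses on $r^k$. By construction $r_h^k(s,a) = r(\bphi(s,a); \bLambda_{h,k-1})$ with $r(\cdot;\bLambda)$ as in \eqref{eq:exp_reward} and $r_{h'}^k \equiv 0$ for $h' \neq h$, so each $r^k$ lies in the class $\mathscr{R}$ of \Cref{lem:Rfun_cover} (padded with zeros at other steps). Since $\bLambda_{h,k-1}$ is a measurable function of the first $k-1$ episodes, $r^k$ is $\cF_{k-1}$-measurable. The piecewise definition ensures $r^k_h(s,a) \in [0,1]$. Finally, because $\bLambda_{h,k} \succeq \bLambda_{h,k-1}$, we have $\| \bphi \|_{\bLambda_{h,k}^{-1}}^2 \le \| \bphi \|_{\bLambda_{h,k-1}^{-1}}^2$, so $r^k$ is pointwise non-increasing in $k$, as required.

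Second, I would apply Theorem 8 of \cite{wagenmaker2021first}. That result, specialized to a linear MDP of dimension $d$ with horizon $H$ over $K$ episodes against rewards in a class $\mathscr{R}$, yields (roughly)
\begin{align*}
\cR_K \;\le\; \sqrt{C_1 \cdot d^3 H^3 \cdot \log \covnum(\mathscr{R}, \dist_\infty, 1/K) \cdot \Vst_0(r^1)\, K \cdot \log^{p_1}(HK/\delta)} \;+\; C_2 \cdot d^4 H^3 \log^{p_2}(HK/\delta) \cdot \log^{3/2} \covnum(\mathscr{R}, \dist_\infty, 1/K),
\end{align*}
(up to absolute constants and precisely the $p_1, p_2$ exponents appearing in our statement). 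Substituting the covering-number bound $\log \covnum(\mathscr{R}, \dist_\infty, 1/K) \le d^2 \log(1 + 2 \sqrt{d} K / \gamma^2)$ from \Cref{lem:Rfun_cover}, and absorbing the $\log K$ term into the $\log^{p_1}(HK/\delta)$, $\log^{p_2}(HK/\delta)$ factors that already appear, gives
\[
\cC_1 \;=\; d^4 H^3 \log(e + \sqrt{d}/\gamma^2), \quad p_1 = 3, \qquad \cC_2 \;=\; d^4 H^3 \log^{3/2}(e + \sqrt{d}/\gamma^2), \quad p_2 = 7/2,
\]
matching the claimed constants.

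The main obstacle is purely bookkeeping: I need to track precisely which logarithmic factors in the statement of Theorem 8 of \cite{wagenmaker2021first} depend on $K$ versus on the covering scale $\epsilon$, and make sure that absorbing the $K$-dependent $\log(1 + 2\sqrt{d}K/\gamma^2)$ factor into $\log^{p_1}(HK/\delta)$ and $\log^{p_2}(HK/\delta)$ only costs constants and the specific exponents $p_1 = 3$, $p_2 = 7/2$. The non-increasing structure of $r^k$ is precisely what allows the cited theorem to give a \emph{first-order} rather than a minimax bound, so the verification of that monotonicity (above) is the one place where the exploration reward's structure is essential; everything else is a direct plug-in.
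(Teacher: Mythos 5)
Your proposal is correct and takes essentially the same route as the paper: the paper's proof is a one-line invocation of Theorem 8 of \cite{wagenmaker2021first}, justified by noting that $r_h^k$ is non-increasing in $k$, $\cF_{k-1}$-measurable, lies in $\Rclass$, and by plugging in the covering number from \Cref{lem:Rfun_cover}. Your additional verification of monotonicity via $\bLambda_{h,k} \succeq \bLambda_{h,k-1}$ and your bookkeeping of where the covering-number logarithm enters are exactly the details the paper leaves implicit.
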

\begin{proof}
This follows directly from Theorem 8 of \cite{wagenmaker2021first} by noting that $r_h^k$ is non-increasing in $k$, $\cF_{k-1}$-measurable, $r_h^k \in \Rclass$, and using the covering number for $\Rclass$ given in \Cref{lem:Rfun_cover}. 
\end{proof}

\section{Minimax Optimal Reward-Free RL}\label{app:minimax}

Let us first establish notation. Recall the episode magnitudes $K_i$ from \Cref{alg:find_exp_set_levels}. We define
\newcommand{\Ktot}{K_{\mathrm{tot}}}
\begin{align*}
\Ktot :=  \sum_{i=1}^{\iotaeps} K_i, \quad \text{ where }\iotaeps := \ceil{\log_2(\frac{\epsilon}{4 \beta H})}
\end{align*}
We assume that our parameters $\Kmax$ and $\beta$ are chosen such that
\begin{align*}
\beta \ge \betatil := c H \sqrt{d \log(1 + dH\Ktot) + \log H/\delta}, \quad \Kmax \ge \Ktot.
\end{align*}
Throughout this section we will consider an arbitrary reward function satisfying \Cref{defn:linear_mdp}, and will let $\Vst,\Vpi,\Qst,\Qpi$ denote the value functions for $\pist$ and $\pi$, respectively, with respect to $r$. Similarly, we will let $V$ and $Q$ refer to the value function estimates maintained by \pacalg when run with $r$ as an input.

\begin{proof}[Proof of \Cref{thm:pac}]
First we establish $\epsilon$-suboptimality, then we address sample complexity.

\paragraph{$\epsilon$-suboptimality.} 
Fix some reward function $r$ satisfying \Cref{defn:linear_mdp}. Let $\pihat$ denote the policy returned by $\pacalg(r)$. As noted above, let $\Vst,\Vpi,V,\Qst,\Qpi,Q$ refer to the value functions with respect to $r$. 

Let $\cE_Q$ denote the high-probability even from \Cref{lem:self_norm}, which holds with probability at least $1-\delta$. 
By \Cref{lem:optimism} and the choice of parameter $\beta \ge \betatil $, the following holds on $\cE_Q$:
\begin{align*}
\Vst_0 - V_0^{\pihat} \le V_0 - V_0^{\pihat} .
\end{align*}

Moreover, by \Cref{lem:approx_exp} and, again using $\beta \ge \betatil$, we have
\begin{align*}\left | \innerb{\bphi(s,a)}{\what_h} - r_h(s,a) - \Exp_h[V_{h+1}](s,a) \right | \le \betatil \| \bphi(s,a) \|_{\bLambda_h^{-1}}  \le \beta \| \bphi(s,a) \|_{\bLambda_h^{-1}}.
\end{align*}
Thus, by definition of $Q$, we have
\begin{align*}
V_h(s_h) = Q_h(s_h,\pihat_h(s_h)) & \le \innerb{\bphi(s_h,\pihat_h(s_h))}{\what_h} + \beta \| \bphi(s_h,\pihat_h(s_h)) \|_{\bLambda_h^{-1}} \\
& \le r_h(s_h,\pihat_h(s_h)) + \Exp_h[V_{h+1}](s_h,\pihat_h(s_h)) + 2\beta \| \bphi(s_h,\pihat_h(s_h)) \|_{\bLambda_h^{-1}} .
\end{align*}
Similarly, by the Bellman Equation, we have
\begin{align*}
V_h^{\pihat}(s_h) = Q_h^{\pihat}(s_h,\pihat_h(s_h)) & = r_h(s_h,\pihat_h(s_h)) + \Exp_h[V_{h+1}^{\pihat}](s_h,\pihat_h(s_h)) .
\end{align*}
It follows that
\begin{align*}
V_h(s_h) - V_h^{\pihat}(s_h) \le \Exp_h[V_{h+1} - V_{h+1}^{\pihat}](s_h,\pihat_h(s_h)) + 2\beta \| \bphi(s_h,\pihat_h(s_h)) \|_{\bLambda_h^{-1}} .
\end{align*}
Thus, unrolling this backwards gives (since $\pihat$ is deterministic):
\begin{align*}
V_0 - V_0^{\pihat} & \le \Exp_1[V_2 - V_2^{\pihat}](s_1,\pihat_1(s_1)) + 2\beta \| \bphi(s_1,\pihat_1(s_1)) \|_{\bLambda_1^{-1}} \\
& \le \Exp_1[\Exp_2[V_3 - V_3^{\pihat}](s_2,\pihat_2(s_2)](s_1,\pihat_1(s_1)) + 2\beta \Exp_1[\| \bphi(s_2,\pihat_2(s_2)) \|_{\bLambda_2^{-1}}](s_1,\pihat_1(s_1)) \\
& \qquad \qquad + 2\beta \| \bphi(s_1,\pihat_1(s_1)) \|_{\bLambda_1^{-1}} \\
& = \Exp_{\pihat}[V_3(s_3) - V_3^{\pihat}(s_3)] + 2\beta \sum_{h=1}^2 \Exp_{\pihat}[\| \bphi(s_h,\pihat_h(s_h)) \|_{\bLambda_h^{-1}}] \\
& \vdots \\
& \le 2 \beta \sum_{h=1}^H \Exp_{\pihat}[ \| \bphi(s_h,\pihat_h(s_h)) \|_{\bLambda_h^{-1}}] .
\end{align*}
We then upper bound
\begin{align*}
2 \beta \sum_{h=1}^H \Exp_{\pihat}[ \| \bphi(s_h,\pihat_h(s_h)) \|_{\bLambda_h^{-1}}]  & \le 2 \beta \sum_{h=1}^H \sup_\pi \Exp_{\pi}[ \| \bphi(s_h,a_h) \|_{\bLambda_h^{-1}}] \\
&\le 2 \beta \sum_{h=1}^H  \sum_{i=1}^{\iotaeps+1} \sup_\pi \Exp_{\pi}[ \| \bphi(s_h,a_h) \|_{\bLambda_h^{-1}} \cdot \I \{ \bphi(s_h,a_h) \in \cX_{h,i} \}] \\
& \le 2 \beta \sum_{h=1}^H  \sum_{i=1}^{\iotaeps+1} \sup_{\bphi \in \cX_{h,i}} \| \bphi \|_{\bLambda_h^{-1}} \cdot  \sup_\pi \Exp_{\pi}[ \I \{ \bphi(s_h,a_h) \in \cX_{h,i} \}] .
\end{align*}
By \Cref{thm:policy_cover} and a union bound over $H$, we will have that, with probability at least $1-\delta$, for all $h \in [H]$ and $i \in [\iotaeps]$ simultaneously,
\begin{align*}
\sup_\pi \Exp_{\pi}[ \I \{ \bphi(s_h,a_h) \in \cX_{h,i} \}] & = \sup_\pi \omega_h^{\pi}(\cX_{h,i}) \le 2^{-i+1} . 
\end{align*}
Furthermore, \Cref{thm:policy_cover} also gives
\begin{align*}
 \sup_{\bphi \in \cX_{h,i}} \| \bphi \|_{\bLambda_h^{-1}}  \le  \sup_{\bphi \in \cX_{h,i}} \| \bphi \|_{\bLambda_{h,i}^{-1}} \le \sqrt{\bgamma^2_i } = \frac{2^i \epsilon}{8 H \iotaeps \beta}
\end{align*}
where the final equality follows by our setting of $\bgamma^2_i = \frac{2^{2i} \epsilon^2}{64 H^2 \iotaeps^2 \beta^2}$. Finally, one last invocation of \Cref{thm:policy_cover}, followed by the choice of $\iotaeps$, gives that
\begin{align*}
\sup_\pi \Exp_{\pi}[ \I \{ \bphi(s_h,a_h) \in \cX_{h,\iotaeps +1} \}] & = \sup_\pi \omega_h^{\pi}(\cX_{h,\iotaeps+1}) \le 2^{-\iotaeps} \le \frac{\epsilon}{4 \beta H}
\end{align*}
Lastly, observe that $\sup_{\bphi \in \cX_{h,\iotaeps +1}} \| \bphi \|_{\bLambda_h^{-1}} \le 1$ always holds, since $\bLambda_h \succeq I$ and $\| \bphi \|_2 \le 1$. This gives that
\begin{align*}
 2 \beta \sum_{h=1}^H  \sum_{i=1}^{\iotaeps+1} \sup_{\bphi \in \cX_{h,i}} \| \bphi \|_{\bLambda_h^{-1}} \cdot  \sup_\pi \Exp_{\pi}[ \I \{ \bphi(s_h,a_h) \in \cX_{h,i} \}] & \le 2  \beta \sum_{h=1}^H  \sum_{i=1}^{\iotaeps} 2^{-i+1} \frac{2^i \epsilon}{8 H \iotaeps \beta} + \frac{\epsilon}{2} \\
 & = \frac{\epsilon}{2} + \frac{\epsilon}{2}  = \epsilon
\end{align*}
so our policy $\pihat$ is $\epsilon$-optimal for the reward function $r$. However, since $r$ was arbitrary, the above holds for all $r$ satisfying \Cref{defn:linear_mdp}. 

\paragraph{Sample complexity.} It remains to bound the sample complexity. We note that we ran for a total of $\sum_{h=1}^{H} \sum_{i=1}^{\iotaeps} K_{i} = H\Ktot$ episodes, where again we recall
\begin{align*}
	K_{i} = \bigg \lceil 2^i \cdot \max \Big \{ & 1024 \cC_1 (2p_1)^{p_1} \log^{p_1} [ 4096 \cdot 2^i p_1 \iotaeps \cC_1 H^2 /\delta], \\
	&16  \cC_2 (2p_2)^{p_2} \log^{p_2} [ 64 \cdot 2^i p_2 \iotaeps \cC_2 H^2 /\delta ], \frac{24 d}{\bgamma^2_i} \log \frac{48 \cdot 2^i d}{\bgamma^2_i} \Big \} \bigg \rceil .
\end{align*}
As we use \algname as our regret minimization algorithm, using the values of $\cC_1,\cC_2,p_1,p_2$ given in \Cref{lem:explore_regret} we can bound (for a universal constant $c$)
\begin{align*}
K_{i} & \le c 2^i d^4 H^3 \log^{3/2}(\sqrt{d}/\bgamma^2_i) i^3 \log^{7/2} \left ( \iotaeps d H \log^{3/2}(\sqrt{d}/\bgamma^2_i) / \delta \right ) + 2^i \frac{24 d}{\bgamma^2_i} \log \frac{48 \cdot 2^i d}{\bgamma^2_i} \\
& \le 2^i d^4 H^3 \log^{7/2}(1/\delta) \cdot \poly \log(d,H,1/\epsilon, \log 1/\delta) + \frac{2^i d}{\bgamma^2_i} \cdot \poly \log(d,H,1/\epsilon, \log 1/\delta)
\end{align*}
where the last inequality follows by our choice of $\bgamma^2_i =   
 2^{2i}  \cdot \frac{\epsilon^2}{64 H^2 \iotaeps^2 \beta^2}$ and $ \iotaeps =    \lceil \log_2(4 \beta H/\epsilon) \rceil$, and upper bounding $\poly(i)$ factors by $\poly(\iotaeps)$. Let $\iota = \poly \log(d,H,1/\epsilon, \log 1/\delta)$ (the precise setting of which may change from line to line). Then we can bound the sample complexity by:
\begin{align*}
\sum_{h=1}^{H} \sum_{i=1}^{\iotaeps} K_{i} & \le \iota H \sum_{i=1}^{\iotaeps} \left ( 2^i d^4 H^3 \log^{7/2}(1/\delta) + \frac{2^i d}{\bgamma^2_i} \right ) \\
& \le \iota H \sum_{i=1}^{\iotaeps} \left ( 2^i d^4 H^3 \log^{7/2}(1/\delta) + \frac{2^i d H^2 \beta^2}{2^{2i} \epsilon^2} \right ) \\
& \le  \frac{d^4 H^5 \beta \log^{7/2}(1/\delta) \iota}{\epsilon} + \frac{d H^3 \beta^2 \iota}{\epsilon^2} \\
& \le \frac{d^{9/2} H^6 \log^{4}(1/\delta) \iota}{\epsilon} + \frac{d H^5 (d + \log 1/\delta) \iota}{\epsilon^2} . 
\end{align*}

\paragraph{Selecting $\beta$.} 
It remains to show that $\Kmax \ge \Ktot$, which will imply that our setting of $\beta$ in \pacalg satisfies $\beta \ge \betatil$. However, this follows directly by the above complexity bound, which is polynomial in all arguments, justifying our choice of $\Kmax$.

\end{proof}

\subsection{Concentration of Least Squares Estimates}

\begin{lem}[Lemma B.2 of \cite{jin2020provably}]\label{lem:what_norm_bound}
When running \pacalg, for all $h$,
\begin{align*}
\| \what_h \|_2 \le 2H \sqrt{d \Ktot} .
\end{align*}
\end{lem}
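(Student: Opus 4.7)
The plan is to unpack the closed-form ridge-regression solution for $\what_h$ and bound its $\ell_2$ norm by a standard duality/Cauchy--Schwarz argument, combined with the trace identity for the sum of squared Mahalanobis norms.

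First I would write the explicit minimizer of the ridge regression in line~7 of \pacalg. Setting $y_{h,\tau}^i := r_h(s_{h,\tau}^i,a_{h,\tau}^i) + V_{h+1}(s_{h+1,\tau}^i)$, the first-order conditions give
\[
\what_h \;=\; \bLambda_h^{-1} \sum_{i=1}^{\iotaeps} \sum_{\tau=1}^{K_i} \bphi_{h,\tau}^i \, y_{h,\tau}^i,
\qquad
\bLambda_h = I + \sum_{i,\tau} \bphi_{h,\tau}^i (\bphi_{h,\tau}^i)^\top .
\]
Since rewards lie in $[0,1]$ and the truncated value estimates satisfy $V_{h+1}(\cdot) \in [0,H]$ by construction (the $\min\{\cdot,H\}$ clipping in line~8, plus $V_{H+1}\equiv 0$), each $|y_{h,\tau}^i| \le H+1 \le 2H$.

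The main computation is then: for any unit vector $\bv \in \R^d$,
\[
|\bv^\top \what_h| \;\le\; \sum_{i,\tau} |\bv^\top \bLambda_h^{-1} \bphi_{h,\tau}^i| \cdot |y_{h,\tau}^i|
\;\le\; 2H \sum_{i,\tau} \|\bv\|_{\bLambda_h^{-1}} \|\bphi_{h,\tau}^i\|_{\bLambda_h^{-1}},
\]
by Cauchy--Schwarz in the $\bLambda_h^{-1}$ inner product. Applying Cauchy--Schwarz once more over the $\Ktot = \sum_i K_i$ index pairs gives
\[
|\bv^\top \what_h| \;\le\; 2H \|\bv\|_{\bLambda_h^{-1}} \sqrt{\Ktot}\, \Big(\sum_{i,\tau} \|\bphi_{h,\tau}^i\|_{\bLambda_h^{-1}}^2\Big)^{1/2} .
\]

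To finish, I would use the trace identity
\[
\sum_{i,\tau} \|\bphi_{h,\tau}^i\|_{\bLambda_h^{-1}}^2
= \tr\Big(\bLambda_h^{-1} \sum_{i,\tau} \bphi_{h,\tau}^i (\bphi_{h,\tau}^i)^\top \Big)
= \tr\big(\bLambda_h^{-1}(\bLambda_h - I)\big)
= d - \tr(\bLambda_h^{-1}) \le d,
\]
and $\|\bv\|_{\bLambda_h^{-1}} \le \|\bv\|_2 = 1$ since $\bLambda_h \succeq I$. Putting these together yields $|\bv^\top \what_h| \le 2H\sqrt{dK_\mathrm{tot}}$ for every unit $\bv$, so $\|\what_h\|_2 \le 2H\sqrt{dK_\mathrm{tot}}$ after taking the supremum over $\bv$ (equivalently, choosing $\bv = \what_h/\|\what_h\|_2$). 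There is no real obstacle here; the only point to be careful about is the uniform bound $|y_{h,\tau}^i|\le 2H$, which relies on the clipping $V_{h+1}(\cdot) \le H$ enforced explicitly in \pacalg.
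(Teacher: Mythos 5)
Your proof is correct and is precisely the argument behind the cited Lemma B.2 of \cite{jin2020provably}: the paper does not reprove the statement but only observes that $\what_h$ has the identical ridge-regression form and invokes that lemma, whose proof is exactly your Cauchy--Schwarz plus trace computation. One small caveat: the $\min\{\cdot,H\}$ clipping in \pacalg gives $V_{h+1}\le H$ but not $V_{h+1}\ge 0$, so the claim $V_{h+1}(\cdot)\in[0,H]$ (and hence $|y_{h,\tau}^i|\le 2H$) is not literally ``by construction''; this is the same implicit boundedness assumption made in the cited proof, so your derivation matches the intended argument.
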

\begin{proof}
Note that our construction of $\what_h$ is identical to the construction of $\bw_h^K$ in \cite{jin2020provably}, so we can apply Lemma B.2 of \cite{jin2020provably}.
\end{proof}

\begin{lem}\label{lem:V_cov_num}
Consider the function class
\begin{align*}
\Fclass(\bLambda,\alpha) := \left \{ V(\cdot) = \min \left \{ \max_{a \in \cA} \innerb{\bw}{\bphi(\cdot,a)} + \betatil \| \bphi(\cdot,a) \|_{\bLambda^{-1}}, H \right \}, \| \bw \|_2 \le \alpha \right \}
\end{align*}
for some fixed $\Lambda \succ 0$ and $\betatil > 0$, $\alpha > 0$. Then
\begin{align*}
\covnum(\Fclass(\bLambda,\alpha),\dist_\infty,\epsilon) \le d \log(1 + 2 \alpha/\epsilon) .
\end{align*}
\end{lem}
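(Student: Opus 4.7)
The plan is to reduce covering the function class $\Fclass(\bLambda,\alpha)$ to covering the Euclidean ball $\Ball^d(\alpha)$, exploiting the fact that the only free parameter in a function $V \in \Fclass(\bLambda,\alpha)$ is the vector $\bw$ (both $\bLambda$ and $\betatil$ are fixed in the definition of $\Fclass(\bLambda,\alpha)$).

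First I would establish Lipschitzness in $\bw$. Given $\bw_1,\bw_2$ with $\|\bw_1\|_2, \|\bw_2\|_2 \le \alpha$, define $V_1,V_2$ to be the corresponding elements of $\Fclass(\bLambda,\alpha)$. For any state $s$, since $\min\{\cdot,H\}$ and $\max_{a \in \cA}$ are both $1$-Lipschitz operations, and since the bonus term $\betatil \|\bphi(s,a)\|_{\bLambda^{-1}}$ is identical for $V_1$ and $V_2$, we have
\begin{align*}
|V_1(s) - V_2(s)| &\le \max_{a \in \cA} \left | \innerb{\bw_1 - \bw_2}{\bphi(s,a)} \right | \le \|\bw_1 - \bw_2\|_2 \cdot \max_{a \in \cA} \|\bphi(s,a)\|_2 \le \|\bw_1 - \bw_2\|_2,
\end{align*}
using Cauchy--Schwarz and the normalization $\|\bphi(s,a)\|_2 \le 1$ from \Cref{defn:linear_mdp}. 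Taking the supremum over $s$ gives $\dist_\infty(V_1,V_2) \le \|\bw_1 - \bw_2\|_2$.

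Next, let $\calN \subseteq \Ball^d(\alpha)$ be an $\epsilon$-cover of $\Ball^d(\alpha)$ in the Euclidean metric, which by \Cref{lem:euc_ball_cover} can be chosen with $|\calN| \le (1+2\alpha/\epsilon)^d$. For each $\bw \in \calN$, let $V_{\bw}$ denote the corresponding element of $\Fclass(\bLambda,\alpha)$, and let $\mathscr{V} := \{V_{\bw} : \bw \in \calN\}$. By the Lipschitz estimate above, for any $V \in \Fclass(\bLambda,\alpha)$ parameterized by $\bw$, choosing $\bw' \in \calN$ with $\|\bw - \bw'\|_2 \le \epsilon$ yields $\dist_\infty(V,V_{\bw'}) \le \epsilon$. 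Thus $\mathscr{V}$ is an $\epsilon$-cover of $\Fclass(\bLambda,\alpha)$ in $\dist_\infty$, and the covering number satisfies $\covnum(\Fclass(\bLambda,\alpha),\dist_\infty,\epsilon) \le (1+2\alpha/\epsilon)^d$, from which the stated bound follows upon taking logarithms. There is no real obstacle here; the only subtlety is noting that the bonus term is independent of $\bw$, so it drops out of the difference entirely and the argument reduces to covering linear functionals on the unit ball.
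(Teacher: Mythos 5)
Your proof is correct and follows essentially the same route as the paper's: establish that $V \mapsto \bw$ is $1$-Lipschitz (since $\min\{\cdot,H\}$ and $\max_a$ are contractions, the bonus term cancels, and $\|\bphi(s,a)\|_2 \le 1$), then transfer an $\epsilon$-cover of $\Ball^d(\alpha)$ to the function class via \Cref{lem:euc_ball_cover}. No substantive differences.
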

\begin{proof}
Take some $V_1, V_2 \in \Fclass(\bLambda,\alpha)$. Since both $\min \{ \cdot, H \}$ and $\max_a$ are contraction maps, and $\bphi(s,a) \in \Ball^d$ for all $s,a$,
\begin{align*}
\sup_{s} | V_1(s) - V_2(s)| & \le \sup_{\bphi \in \Ball^{d}} | \innerb{\bw_1 - \bw_2}{\bphi} |  \le \| \bw_1 - \bw_2 \|_2 .
\end{align*}
Let $\cN$ denote an $\epsilon$-cover of the $\alpha$-ball, $\Ball^d(\alpha)$. By \Cref{lem:euc_ball_cover}, $\log | \cN | \le d \log(1 + 2 \alpha/\epsilon)$. By the above, we then have that for any $V \in \Fclass(\bLambda,\alpha)$, there exists some $V' \in \cN$ such that $\dist_\infty(V,V') \le \epsilon$, which completes the proof.
\end{proof}

\begin{lem}[Lemma D.4 of \cite{jin2020provably}]\label{lem:self_norm_cover}
Let $\{ s_\tau \}_{\tau = 1}^\infty$ be a stochastic process on state space $\cS$ with corresponding filtration $\{ \cF_\tau \}_{\tau=0}^{\infty}$. Let $\{ \bphi_\tau \}_{\tau = 0}^\infty$ be an $\R^d$-valued stochastic process where $\bphi_\tau \in \cF_{\tau-1}$ and $\| \bphi_\tau \|_2 \le 1$. Let $\bLambda_k = I + \sum_{\tau = 1}^k \bphi_\tau \bphi_\tau^\top$. Then for any $\delta > 0$, with probability at least $1-\delta$, for all $k \ge 0$, and any $V \in \Fclass$ so that $\sup_s | V(s) | \le H$, we have:
\begin{align*}
\left \| \bLambda_k^{-1/2} \sum_{\tau = 1}^k \bphi_\tau ( V(s_\tau) - \Exp[V(s_\tau) | \cF_{\tau - 1}] ) \right \|^2 \le 4 H^2 \left ( \frac{d}{2} \log (1 + k) + \log \frac{|\cN_\epsilon|}{\delta} \right ) + 8 k^2 \epsilon^2
\end{align*}
where $\cN_\epsilon$ is the $\epsilon$-covering of $\Fclass$ with respect to $\dist_\infty$. 
\end{lem}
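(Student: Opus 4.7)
The plan is to combine a self-normalized vector-valued martingale inequality for a fixed $V$ with a uniform covering argument over $\Fclass$.

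First, fix $V$ with $\sup_s|V(s)|\le H$, and define the martingale differences $\eta_\tau(V) := V(s_\tau) - \Exp[V(s_\tau)\mid\cF_{\tau-1}]$. Because $|V(s_\tau)|\le H$, each $\eta_\tau(V)$ is bounded by $2H$ and hence conditionally sub-Gaussian with variance proxy at most $H^2$. Applying the standard vector-valued self-normalized inequality of Abbasi-Yadkori et al.\ (Theorem~1) to the vector martingale $\sum_{\tau=1}^k \bphi_\tau \eta_\tau(V)$, together with the determinant bound $\log\det(\bLambda_k) \le d\log(1+k)$ that follows from $\|\bphi_\tau\|_2\le 1$ and the trace--determinant inequality, I get that with probability at least $1-\delta'$, simultaneously for all $k\ge 0$,
$$\Big\|\sum_{\tau=1}^k \bphi_\tau \eta_\tau(V)\Big\|_{\bLambda_k^{-1}}^2 \le 2H^2\bigl(d\log(1+k) + 2\log(1/\delta')\bigr).$$
The crucial point is that this bound is uniform in $k$ — it is built into Abbasi-Yadkori's inequality via the method of mixtures. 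This uses predictability of $\bphi_\tau$ with respect to $\cF_{\tau-1}$ and the martingale difference property of $\eta_\tau(V)$, both of which hold by assumption.

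Next, I net the class. Let $\cN_\epsilon \subseteq \Fclass$ be an $\epsilon$-cover in $\dist_\infty$. Apply the fixed-$V$ bound above with $\delta' = \delta/|\cN_\epsilon|$ for each $V' \in \cN_\epsilon$, and union bound. With probability at least $1-\delta$, the displayed inequality holds for every $V' \in \cN_\epsilon$ and every $k \ge 0$, with $\log(1/\delta')$ replaced by $\log(|\cN_\epsilon|/\delta)$. On this event, for an arbitrary $V \in \Fclass$ with $\sup_s|V(s)|\le H$, pick $V' \in \cN_\epsilon$ with $\dist_\infty(V,V') \le \epsilon$ and decompose
$$\sum_{\tau=1}^k \bphi_\tau \eta_\tau(V) = \sum_{\tau=1}^k \bphi_\tau \eta_\tau(V') + \sum_{\tau=1}^k \bphi_\tau\bigl[\eta_\tau(V) - \eta_\tau(V')\bigr].$$

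For the approximation piece, $|(V-V')(s_\tau)|\le \epsilon$ gives $|\eta_\tau(V)-\eta_\tau(V')|\le 2\epsilon$ by the triangle inequality. Since $\bLambda_k \succeq I$ implies $\|\cdot\|_{\bLambda_k^{-1}} \le \|\cdot\|_2$, and $\|\bphi_\tau\|_2 \le 1$,
$$\Big\|\sum_{\tau=1}^k \bphi_\tau[\eta_\tau(V)-\eta_\tau(V')]\Big\|_{\bLambda_k^{-1}} \;\le\; \sum_{\tau=1}^k 2\epsilon \;\le\; 2k\epsilon.$$
Using $(a+b)^2 \le 2a^2 + 2b^2$ on the sum of the two pieces and doubling the fixed-$V'$ bound yields exactly $4H^2\bigl(\tfrac{d}{2}\log(1+k) + \log(|\cN_\epsilon|/\delta)\bigr) + 8k^2\epsilon^2$, matching the statement.

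I expect the main obstacles to be cosmetic rather than conceptual: (i) verifying uniformity in $k$ by invoking the correct form of Abbasi-Yadkori (their mixture-of-Gaussians supermartingale) rather than a weaker fixed-$k$ bound; and (ii) tracking the constants carefully — in particular, the factor of $4H^2$ arises from pairing the variance proxy $2H^2$ with the doubling $(a+b)^2\le 2a^2+2b^2$. The approximation error's $k^2$ dependence looks wasteful, but is harmless in applications where $\epsilon$ is taken polynomially small in $k$.
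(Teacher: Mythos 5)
Your proof is correct and follows exactly the argument of Lemma D.4 of Jin et al.\ (2020), which this paper imports by citation rather than reproving: Abbasi-Yadkori's self-normalized bound applied to each element of the $\epsilon$-net, a union bound, and the $(a+b)^2\le 2a^2+2b^2$ decomposition with the approximation term controlled by $2k\epsilon$ via $\bLambda_k\succeq I$ and $\|\bphi_\tau\|_2\le 1$. The only slip is cosmetic: with variance proxy $H^2$ the fixed-$V$ bound reads $2H^2\left(\tfrac{d}{2}\log(1+k)+\log(1/\delta')\right)$ rather than the $2H^2\left(d\log(1+k)+2\log(1/\delta')\right)$ you wrote, and it is the former that doubles to the stated $4H^2\left(\tfrac{d}{2}\log(1+k)+\log(|\cN_\epsilon|/\delta)\right)$.
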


\begin{lem}[Full version of \Cref{lem:self_norm_informal}]\label{lem:self_norm}
Let $\cEw$ denote the event that, for all $h \in [H]$ and $V \in \Fclass_{h+1}$ simultaneously,
\begin{align*}
\left \| \bLambda_h^{-1/2} \sum_{i=1}^{\iotaeps} \sum_{\tau = 1}^{K_i}  \bphi_{h,\tau}^i \left [ V(s_{h+1,\tau}^i) - \Exp_{h}[V](s_{h,\tau}^i,a_{h,\tau}^i) \right ] \right \|_2 \le c H \sqrt{d \log (1 + d H \Ktot) + \log H/\delta}
\end{align*}
for a universal constant $c$ and where $\Ktot = \sum_{i=1}^{\iotaeps} K_i$ and $\Fclass_{h+1} := \Fclass(\bLambda_{h+1},2H \sqrt{d \Ktot})$. Then $\bbP[\cEw] \ge 1-\delta$.
\end{lem}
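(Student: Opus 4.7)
The plan is to leverage the crucial structural property of \rfexp that the data used to form $\bLambda_h$ and the features $\bphi_{h,\tau}^i$ (namely, the trajectories collected in the call to \cdalg for step $h$) is independent of the data used to form $\bLambda_{h+1}$ (collected in the separate call to \cdalg for step $h+1$). This independence is what allows a $\sqrt{d}$ improvement over the $d$ appearing in the analogous bound of \cite{jin2020provably}, where a single trajectory must be used simultaneously for all $h$.

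First I would fix $h \in [H]$ and condition on the $\sigma$-algebra generated by all transitions collected at steps $h' \neq h$ in the execution of \rfexp. Under this conditioning, the matrix $\bLambda_{h+1}$ is a deterministic quantity, and hence the function class $\Fclass_{h+1} = \Fclass(\bLambda_{h+1}, 2H\sqrt{d\Ktot})$ is a fixed (data-independent) class. Next I would linearize the step-$h$ observations $\{(s_{h,\tau}^i, a_{h,\tau}^i, s_{h+1,\tau}^i)\}_{i \in [\iotaeps], \tau \in [K_i]}$ into a single sequence indexed by $t = 1, \ldots, \Ktot$, with the natural filtration $\{\cF_t\}$ generated in the order the transitions were collected by \cdalg across its $\iotaeps$ epochs. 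By construction of \cdalg, the feature $\bphi_{h,t}$ is $\cF_{t-1}$-measurable and $\Exp[V(s_{h+1,t}) \mid \cF_{t-1}] = \Exp_h[V](s_{h,t}, a_{h,t})$ by the Markov property. Moreover, $\|\bphi_{h,t}\|_2 \leq 1$ by \Cref{defn:linear_mdp}, and for $V \in \Fclass_{h+1}$ we have $\sup_s |V(s)| \le H$ by the truncation at $H$ in the definition of $\Fclass(\cdot,\cdot)$.

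I would then apply \Cref{lem:self_norm_cover} to this filtered process with the class $\Fclass_{h+1}$, using the covering number bound of \Cref{lem:V_cov_num}, namely $\log|\cN_\epsilon| \le d \log(1 + 4H\sqrt{d\Ktot}/\epsilon)$. Choosing the cover resolution $\epsilon = 1/\Ktot$ makes the residual $8\Ktot^2 \epsilon^2$ term at most a constant, and combining with the logarithmic term from the covering gives, with probability at least $1 - \delta/H$,
\begin{align*}
\left\| \bLambda_h^{-1/2} \sum_{i=1}^{\iotaeps}\sum_{\tau=1}^{K_i} \bphi_{h,\tau}^i \left[V(s_{h+1,\tau}^i) - \Exp_h[V](s_{h,\tau}^i, a_{h,\tau}^i)\right] \right\|_2^2 \le c' H^2 \Big(d \log(1 + dH\Ktot) + \log(H/\delta)\Big)
\end{align*}
uniformly over all $V \in \Fclass_{h+1}$, for a universal constant $c'$. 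Finally I would union bound over $h \in [H]$ (which incurs the $\log H$ inside the square root) and take square roots to obtain the stated bound. The conditioning argument at the start is reversed by integrating out; since the high-probability statement holds on the conditional event for every realization of the other-step data, it holds unconditionally as well.

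The only real subtlety — and the part I'd expect to be most delicate to write up cleanly — is the construction of the filtration so that (i) the class $\Fclass_{h+1}$ is genuinely fixed before the step-$h$ randomness is revealed, and (ii) the Markov martingale-difference property holds across the concatenation of epochs $i = 1, \ldots, \iotaeps$ within a single call to \cdalg. Both properties are ensured by the fact that \rfexp runs the $H$ calls to \cdalg with disjoint trajectories, but one must spell out the product-$\sigma$-algebra structure explicitly. Once the filtration is in place, the rest is a direct application of \Cref{lem:self_norm_cover} and \Cref{lem:V_cov_num}.
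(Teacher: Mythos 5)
Your proposal is correct and follows essentially the same route as the paper: apply the uniform self-normalized bound (\Cref{lem:self_norm_cover}) with the covering number from \Cref{lem:V_cov_num}, exploit that the step-$h$ data from the separate call to \cdalg is independent of $\bLambda_{h+1}$ so that $\Fclass_{h+1}$ is a fixed class and no union bound over $\bLambda_{h+1}$ is needed, and then union bound over $h$. Your cover resolution $\epsilon = 1/\Ktot$ differs trivially from the paper's $\epsilon = \sqrt{H^2 d/(8\Ktot^2)}$, and your conditioning discussion is in fact more explicit than the paper's one-line remark about the data being uncorrelated across $h$.
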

\begin{proof}
This is a consequence of \Cref{lem:V_cov_num} and \Cref{lem:self_norm_cover}. Fix some $\epsilon$, then by \Cref{lem:V_cov_num} we have
\begin{align*}
\covnum(\Fclass_{h+1},\dist_\infty,\epsilon) \le d \log (1 + 2H \sqrt{d \Ktot}/\epsilon)
\end{align*}
and by \Cref{lem:self_norm_cover}, with probability $1-\delta$, for all $V \in \Fclass_{h+1}$ simultaneously,
\begin{align*}
\bigg \| \bLambda_h^{-1/2} \sum_{i=1}^{\iotaeps} \sum_{\tau = 1}^{K_i}  \bphi_{h,\tau}^i [ V(s_{h+1,\tau}^i) - & \Exp_{h}[V](s_{h,\tau}^i,a_{h,\tau}^i)  ] \bigg \|_2^2  \le 4 H^2 \left ( \frac{d}{2} \log (1 + \Ktot) + \log \frac{|\cN_\epsilon|}{\delta} \right ) + 8 \Ktot^2 \epsilon^2 \\
& \le 4  H^2 \left ( \frac{d}{2} \log (1 + \Ktot) + d \log (1 + \frac{2H \sqrt{d \Ktot}}{\epsilon}) + \log \frac{1}{\delta} \right ) + 8 \Ktot^2 \epsilon^2 .
\end{align*}
Note also that, due to our data collection procedure collecting samples independently for each $h$, we have that $\bLambda_h$ and $\{ \{ (\bphi_{h,\tau}^i, s_{h+1,\tau}^i) \}_{\tau = 1}^{K_i} \}_{i=1}^{\iotaeps}$ are uncorrelated with $\bLambda_{h+1}$, so, unlike in \cite{jin2020provably}, no union bound over possible $\bLambda_{h+1}$ is needed. Choosing $\epsilon = \sqrt{\frac{H^2 d}{8 \Ktot^2}}$, we can bound this as
\begin{align*}
\le c H^2 \left ( d \log(1 + d H \Ktot) + \log \frac{1}{\delta} \right )
\end{align*}
for a universal constant $c$. The result follows by a union bound over all $h$.
\end{proof}

\subsection{Optimism}
As noted above, throughout this section we will let $\Vst,\Vpi,\Qst,\Qpi$ denote value functions defined with respect to a generic reward function $r$, and $V,Q$ the value function estimates maintained by \pacalg when called with reward function $r$. 

\begin{lem}\label{lem:approx_exp}
On the event $\cEw$, for all $s,a,h$ and all $r$ satisfying \Cref{defn:linear_mdp},
\begin{align*}
\left | \innerb{\bphi(s,a)}{\what_h} - r_h(s,a) - \Exp_h[V_{h+1}](s,a) \right | \le \betatil \| \bphi(s,a) \|_{\bLambda_h^{-1}} 
\end{align*}
where $\betatil = c H \sqrt{d \log(1 + dH\Ktot) + \log H/\delta}$. 
\end{lem}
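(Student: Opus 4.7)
The plan is to reduce the claim to (i) the linear-MDP structure, which lets us represent $r_h + \Exp_h[V_{h+1}]$ as $\bphi^\top \bw_h^\star$ for a bounded $\bw_h^\star$, (ii) the closed-form ridge-regression decomposition of $\what_h - \bw_h^\star$, and (iii) the uniform self-normalized concentration event $\cEw$ of \Cref{lem:self_norm}.

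First I would set $\bw_h^\star := \btheta_h + \int V_{h+1}(s')\,\rmd\bmu_h(s')$. By \Cref{defn:linear_mdp}, $r_h(s,a) + \Exp_h[V_{h+1}](s,a) = \innerb{\bphi(s,a)}{\bw_h^\star}$, and since $\|\btheta_h\|_2\le\sqrt{d}$, $\||\bmu_h|(\cS)\|_2\le\sqrt{d}$, and $0\le V_{h+1}\le H$, one gets $\|\bw_h^\star\|_2\le \sqrt{d}+H\sqrt{d}\le 2H\sqrt{d}$. Next, writing out the normal equations for the ridge problem defining $\what_h$ (with regularizer $\|\bw\|_2^2$ and design matrix $\bLambda_h$), and using $\bLambda_h^{-1}\sum_{i,\tau}\bphi^i_{h,\tau}(\bphi^i_{h,\tau})^\top = I - \bLambda_h^{-1}$, I obtain the standard decomposition
\begin{align*}
\what_h - \bw_h^\star \;=\; -\bLambda_h^{-1}\bw_h^\star \;+\; \bLambda_h^{-1}\sum_{i=1}^{\iotaeps}\sum_{\tau=1}^{K_i}\bphi^i_{h,\tau}\,\xi^i_{h,\tau},
\end{align*}
where $\xi^i_{h,\tau} := V_{h+1}(s^i_{h+1,\tau})-\Exp_h[V_{h+1}](s^i_{h,\tau},a^i_{h,\tau})$.

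I would then apply Cauchy--Schwarz in the $\bLambda_h^{-1}$ inner product to each of the two terms after taking the inner product with $\bphi(s,a)$. The deterministic bias term is controlled by $\|\bphi(s,a)\|_{\bLambda_h^{-1}}\cdot\|\bw_h^\star\|_{\bLambda_h^{-1}}\le \|\bphi(s,a)\|_{\bLambda_h^{-1}}\cdot\|\bw_h^\star\|_2 \le 2H\sqrt{d}\,\|\bphi(s,a)\|_{\bLambda_h^{-1}}$, where the first inequality uses $\bLambda_h\succeq I$. For the stochastic term I need the self-normalized bound from \Cref{lem:self_norm}, which on the event $\cEw$ gives
\begin{align*}
\Big\|\sum_{i,\tau}\bphi^i_{h,\tau}\,\xi^i_{h,\tau}\Big\|_{\bLambda_h^{-1}} \;\le\; cH\sqrt{d\log(1+dH\Ktot)+\log(H/\delta)},
\end{align*}
uniformly over $V\in\Fclass_{h+1}$.

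The only non-routine step is checking that $V_{h+1}$ actually lies in the function class $\Fclass_{h+1}=\Fclass(\bLambda_{h+1},2H\sqrt{d\Ktot})$ used in \Cref{lem:self_norm}. But this is immediate: by construction in \pacalg, $V_{h+1}(\cdot)=\min\{\max_a\langle\bphi(\cdot,a),\what_{h+1}\rangle+\betatil\|\bphi(\cdot,a)\|_{\bLambda_{h+1}^{-1}},\,H\}$, and \Cref{lem:what_norm_bound} guarantees $\|\what_{h+1}\|_2\le 2H\sqrt{d\Ktot}$, which is exactly the parameter bound defining $\Fclass_{h+1}$. Crucially, because \rfexp samples disjointly across $h$, the data defining $\bLambda_h$ is independent of $\bLambda_{h+1}$, so no covering over $\bLambda_{h+1}$ is needed and the uniform bound applies directly to our $V_{h+1}$. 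Combining the two Cauchy--Schwarz estimates and absorbing the $2H\sqrt{d}$ bias contribution into the constant in $\betatil$ yields the claimed inequality for every $(s,a,h)$; since $\bw_h^\star$ is the only place the reward enters, the bound is simultaneous in all reward functions $r$ satisfying \Cref{defn:linear_mdp}.
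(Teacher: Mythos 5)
Your proof is correct and follows essentially the same route as the paper: both use the closed-form least-squares solution to split the error into a regularization-bias term $-\bLambda_h^{-1}\bw_h^\star$ (the paper just writes $\bw_h^\star = \btheta_h + \int V_{h+1}\,\rmd\bmu_h$ as two separate pieces) and a martingale term controlled by Cauchy--Schwarz together with the event $\cEw$, after checking $V_{h+1}\in\Fclass_{h+1}$ via \Cref{lem:what_norm_bound}. The only cosmetic difference is that you bound the combined bias by $2H\sqrt{d}\,\|\bphi(s,a)\|_{\bLambda_h^{-1}}$ in one step rather than bounding the $\sqrt{d}$ and $H\sqrt{d}$ contributions separately.
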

\begin{proof}
First, note that for any $r$ satisfying \Cref{defn:linear_mdp}, by \Cref{lem:what_norm_bound}, we will have that $V_h \in \Fclass_h$, where $\Fclass_h$ is defined as in \Cref{lem:self_norm}. Therefore, on $\cEw$, we have for any $r$,
\begin{align}\label{eq:cE_holds_bound}
\left \| \bLambda_h^{-1/2} \sum_{i=1}^{\iotaeps} \sum_{\tau = 1}^{K_i}  \bphi_{h,\tau}^i \left [ V_{h+1}(s_{h+1,\tau}^i) - \Exp_{h}[V_{h+1}](s_{h,\tau}^i,a_{h,\tau}^i) \right ] \right \|_2 \le \betatil.
\end{align}

By definition:
\begin{align*}
\what_h = \bLambda_h^{-1} \sum_{i=1}^{\iotaeps} \sum_{\tau = 1}^{K_i} \bphi_{h,\tau}^i (r(s_{h,\tau}^i,a_{h,\tau}^i) + V_{h+1}(s_{h+1,\tau}^i)) .
\end{align*}
Furthermore, we recall that $r_h(s_{h,\tau}^i,a_{h,\tau}^i) = \innerb{\btheta_h}{\bphi_{h,\tau}^i}$. Thus,
\begin{align*}
\innerb{\bphi(s,a)}{\what_h} & = \innerb{\bphi(s,a)}{\bLambda_h^{-1} \sum_{i=1}^{\iotaeps} \sum_{\tau = 1}^{K_i} \bphi_{h,\tau}^i (r(s_{h,\tau}^i,a_{h,\tau}^i) + V_{h+1}(s_{h+1,\tau}^i))} \\
& = \underbrace{\innerb{\bphi(s,a)}{\bLambda_h^{-1} \sum_{i=1}^{\iotaeps} \sum_{\tau = 1}^{K_i} \bphi_{h,\tau}^i (\bphi_{h,\tau}^{i})^\top \btheta_h}}_{(a)} \\
& \qquad + \underbrace{\innerb{\bphi(s,a)}{\bLambda_h^{-1} \sum_{i=1}^{\iotaeps} \sum_{\tau = 1}^{K_i} \bphi_{h,\tau}^i (V_{h+1}(s_{h+1,\tau}^i) - \Exp[V_{h+1}](s_{h,\tau}^i,a_{h,\tau}^i))}}_{(b)} \\
& \qquad + \underbrace{\innerb{\bphi(s,a)}{\bLambda_h^{-1} \sum_{i=1}^{\iotaeps} \sum_{\tau = 1}^{K_i} \bphi_{h,\tau}^i \Exp[V_{h+1}](s_{h,\tau}^i,a_{h,\tau}^i)}}_{(c)} .
\end{align*}
Now,
\begin{align*}
(a) = \innerb{\bphi(s,a)}{\btheta_h} -  \innerb{\bphi(s,a)}{\bLambda_h^{-1}  \btheta_h}  = r_h(s,a) -  \innerb{\bphi(s,a)}{\bLambda_h^{-1}  \btheta_h} 
\end{align*}
and, using the linear MDP assumption, \Cref{defn:linear_mdp},
\begin{align*}
(c) & = \innerb{\bphi(s,a)}{\bLambda_h^{-1} \sum_{i=1}^{\iotaeps} \sum_{\tau = 1}^{K_i} \bphi_{h,\tau}^i (\bphi_{h,\tau}^i)^\top \int V_{h+1}(s') \rmd \bmu_h(s')} \\
& = \innerb{\bphi(s,a)}{ \int V_{h+1}(s') \rmd \bmu_h(s')} -  \innerb{\bphi(s,a)}{\bLambda_h^{-1}  \int V_{h+1}(s') \rmd \bmu_h(s')} \\
& = \Exp_h[V_{h+1}](s,a) -  \innerb{\bphi(s,a)}{\bLambda_h^{-1}  \int V_{h+1}(s') \rmd \bmu_h(s')} .
\end{align*}
Thus,
\begin{align*}
\left | \innerb{\bphi(s,a)}{\what_h} - r_h(s,a) -  \Exp_h[V_{h+1}](s,a) \right | & \le (b) + \left | \innerb{\bphi(s,a)}{\bLambda_h^{-1}  \btheta_h}  \right | \\
& \qquad + \left |  \innerb{\bphi(s,a)}{\bLambda_h^{-1}  \int V_{h+1}(s') \rmd \bmu_h(s')} \right | .
\end{align*}
By Cauchy-Schwartz and \eqref{eq:cE_holds_bound}:
\begin{align*}
(b) & \le \| \bphi(s,a) \|_{\bLambda_h^{-1}} \left \| \bLambda_h^{-1/2} \sum_{i=1}^{\iotaeps} \sum_{\tau = 1}^{K_i} \bphi_{h,\tau}^i (V_{h+1}(s_{h+1,\tau}^i) - \Exp_h[V_{h+1}](s_{h,\tau}^i,a_{h,\tau}^i)) \right \|_2  \le \betatil \| \bphi(s,a) \|_{\bLambda_h^{-1}} .
\end{align*}
Furthermore, we can bound
\begin{align*}
\left |  \innerb{\bphi(s,a)}{\bLambda_h^{-1}  \btheta_h}  \right | \le \sqrt{ d}  \| \bphi(s,a) \|_{\bLambda_h^{-1}}
\end{align*}
and 
\begin{align*}
\left |  \innerb{\bphi(s,a)}{\bLambda_h^{-1}  \int V_{h+1}(s') \rmd \bmu_h(s')} \right |  & \le  \| \bphi(s,a) \|_{\bLambda_h^{-1}} \| \bLambda_h^{-1/2} \|_\op \|  \int V_{h+1}(s') \rmd \bmu_h(s') \|_2 \\
& \le H \sqrt{ d} \| \bphi(s,a) \|_{\bLambda_h^{-1}} 
\end{align*}
where the final inequality uses the linear MDP assumption, \Cref{defn:linear_mdp}, and that $\bLambda_h \succeq  I$. The result follows by combining these bounds. 
\end{proof}

\begin{lem}\label{lem:optimism}
On the event $\cEw$ and assuming that $\beta \ge \betatil$, it holds that $Q_h(s,a) \ge Q^{\star}_h(s,a)$ for all $s,a,h$.
\end{lem}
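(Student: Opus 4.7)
The plan is to prove the lemma by a standard backward induction on $h$, running from $h = H+1$ down to $h = 1$. The base case is immediate since by convention $Q_{H+1} \equiv Q^\star_{H+1} \equiv 0$ and $V_{H+1} \equiv V^\star_{H+1} \equiv 0$. For the inductive step, I will assume $V_{h+1}(s) \ge V^\star_{h+1}(s)$ for all $s$ and deduce both $Q_h(s,a) \ge Q^\star_h(s,a)$ and $V_h(s) \ge V^\star_h(s)$.

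The key tool is \Cref{lem:approx_exp}, which holds on $\cEw$ and gives
\begin{align*}
\innerb{\bphi(s,a)}{\what_h} \ge r_h(s,a) + \Exp_h[V_{h+1}](s,a) - \betatil \| \bphi(s,a) \|_{\bLambda_h^{-1}}.
\end{align*}
Since $\beta \ge \betatil$ by hypothesis, adding $\beta \|\bphi(s,a)\|_{\bLambda_h^{-1}}$ to both sides yields
\begin{align*}
\innerb{\bphi(s,a)}{\what_h} + \beta \| \bphi(s,a) \|_{\bLambda_h^{-1}} \ge r_h(s,a) + \Exp_h[V_{h+1}](s,a).
\end{align*}
Applying the inductive hypothesis $V_{h+1} \ge V^\star_{h+1}$ pointwise (which is preserved under $\Exp_h$), the right-hand side is at least $r_h(s,a) + \Exp_h[V^\star_{h+1}](s,a) = Q^\star_h(s,a)$ by the Bellman equation for $Q^\star$.

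To finish, I need to handle the truncation at $H$ in the definition $Q_h(s,a) = \min\{ \innerb{\bphi(s,a)}{\what_h} + \beta \|\bphi(s,a)\|_{\bLambda_h^{-1}}, H\}$. This is immediate since $Q^\star_h(s,a) \le H$ always, so taking the minimum with $H$ preserves the inequality: $Q_h(s,a) \ge Q^\star_h(s,a)$. Finally, to close the induction I note $V_h(s) = \max_a Q_h(s,a) \ge \max_a Q^\star_h(s,a) = V^\star_h(s)$, completing the step.

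I do not anticipate any significant obstacle here; the argument is a direct induction once \Cref{lem:approx_exp} is in hand, and essentially mirrors the standard optimism argument in \cite{jin2020provably}. The only slightly subtle point worth spelling out is that the truncation by $H$ does not destroy optimism, which is handled by the trivial observation $Q^\star_h \le H$.
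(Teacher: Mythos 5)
Your proof is correct and follows essentially the same route as the paper's: backward induction using \Cref{lem:approx_exp}, with the inductive hypothesis $V_{h+1}\ge V^\star_{h+1}$ (equivalent to the paper's $Q_{h+1}\ge Q^\star_{h+1}$ since $V_h=\max_a Q_h$), and the observation that truncation at $H$ is harmless because $Q^\star_h\le H$. The only cosmetic difference is that you start the induction at $h=H+1$ rather than $h=H$, which is if anything slightly cleaner.
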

\begin{proof}
We will prove this by induction, starting at step $H$. Since the value function at $H+1$ is 0 by definition, $Q^{\star}_H(s,a) = r_H(s,a)$. Thus, \Cref{lem:approx_exp} gives
\begin{align*}
| \innerb{\bphi(s,a)}{\what_H} - Q^{\star}_H(s,a) | \le \betatil \| \bphi(s,a) \|_{\bLambda_H^{-1}} .
\end{align*}
It follows that, since $\beta \ge \betatil$,
\begin{align*}
Q^{\star}_H(s,a) \le \min \{  \innerb{\bphi(s,a)}{\what_H} + \beta \| \bphi(s,a) \|_{\bLambda_H^{-1}} , H \} = Q_H(s,a).
\end{align*}
Now assume that $Q_{h+1}(s,a) \ge Q^{\star}_{h+1}(s,a)$ for all $s,a$. By the Bellman equation,
\begin{align*}
Q^{\star}_h(s,a) = r_h(s,a) + \Exp_h[V^{\star}_{h+1}](s,a).
\end{align*}
Thus, \Cref{lem:approx_exp} gives
\begin{align*}
| \innerb{\bphi(s,a)}{\what_h} - Q^{\star}_h(s,a) - \Exp_h[V_{h+1} - V^{\star}_{h+1}](s,a) | \le \beta \| \bphi(s,a) \|_{\bLambda_h^{-1}} .
\end{align*}
By the inductive assumption, $\Exp_h[V_{h+1} - V^{\star}_{h+1}](s,a) \ge 0$, so we can rearrange this to get
\begin{align*}
Q^{\star}_h(s,a) \le \min \{ \innerb{\bphi(s,a)}{\what_h} + \betatil \| \bphi(s,a) \|_{\bLambda_h^{-1}}, H \} = Q_h(s,a).
\end{align*}
\end{proof}


\newcommand{\bz}{\bm{z}}

\section{Lower Bound for PAC Reinforcement Learning}

\begin{thm}[Lower Bound on Adaptive Linear Regression]\label{thm:regression_lb}
Let $\Phi = \cS^{d-1}$ and $\Theta = \{ - \mu, \mu \}^d$ for some $\mu \in (0,\frac{1}{20\sqrt{d}}]$. Consider the query model where at every step $t = 1,\ldots,T$ we choose some $\bphi_t \in \Phi$, and observe
\begin{align}\label{eq:reg_lb_query_model}
y_t \sim \bern(1/2 + \innerb{\btheta}{\bphi_t})
\end{align}
for $\btheta \in \Theta$. Then
\begin{align*}
\inf_{\bthetahat, \pi} \max_{\btheta \in \Theta} \Exp_{\btheta}[\| \btheta - \bthetahat \|_2^2] \ge \frac{d \mu^2}{2 } \left ( 1 - \sqrt{\frac{20 T \mu^2 }{d} }\right ).
\end{align*}
where the infimum is over all measurable estimators $\bthetahat$ and measurable (but possibly adaptive) query rules $\pi$, and $\Exp_{\btheta}[\cdot]$ denotes the expectation over the randomness in the observations and decision rules if $\btheta$ is the true instance.
In particular, if $T \ge d^2$, choosing $\mu = \sqrt{d/700T}$, we get
\begin{align*}
\inf_{\bthetahat, \pi} \max_{\btheta \in \Theta} \Exp_{\btheta}[\| \btheta - \bthetahat \|_2^2] \ge 0.00059 \cdot d^2/T  .
\end{align*}
\end{thm}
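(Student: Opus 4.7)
The statement is a standard Assouad-type lower bound over the hypercube $\Theta = \{-\mu,+\mu\}^d$, and the plan is to reduce the estimation problem to $d$ parallel two-point testing problems (one per coordinate), bound each testing error via Pinsker, and then aggregate the resulting KL bounds using the constraint that the query vectors lie on the unit sphere $\mathcal{S}^{d-1}$.

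\textbf{Step 1 (reduction to coordinate-wise testing).} Place the uniform prior on $\Theta$, so that $\max_{\btheta} \Exp_{\btheta}[\|\btheta-\bthetahat\|_2^2] \ge 2^{-d}\sum_{\btheta}\Exp_{\btheta}[\|\btheta-\bthetahat\|_2^2]$. For each coordinate $i$, define $\hat s_i := \sign(\hat\theta_i)$; then since $\theta_i\in\{-\mu,\mu\}$ we have $(\theta_i-\hat\theta_i)^2 \ge \mu^2 \mathbf{1}\{\hat s_i\ne \sign(\theta_i)\}$, so
\begin{equation*}
    2^{-d}\sum_{\btheta}\Exp_{\btheta}[\|\btheta-\bthetahat\|_2^2]
    \;\ge\; \mu^2 \sum_{i=1}^d 2^{-d}\sum_{\btheta}\Pr_{\btheta}[\hat s_i\ne \sign(\theta_i)].
\end{equation*}
For each $i$, let $P^{(i,+)}$ and $P^{(i,-)}$ denote the mixtures over the observation history $(\bphi_1,y_1,\ldots,\bphi_T,y_T)$ induced by a uniform prior on the remaining coordinates with $\theta_i$ fixed at $+\mu$ or $-\mu$; by Le Cam's two-point inequality the $i$-th testing error is at least $\tfrac12(1-\TV(P^{(i,+)},P^{(i,-)}))$.

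\textbf{Step 2 (chain rule and Bernoulli KL bound).} I will bound $\TV$ via Pinsker's inequality. By joint convexity of KL and the chain rule for the adaptive likelihood,
\begin{equation*}
    \KL(P^{(i,+)}\|P^{(i,-)}) \;\le\; \frac{1}{2^{d-1}}\sum_{\btheta:\theta_i=+\mu}\KL(P_{\btheta}\|P_{\btheta^{(i)}}) \;=\; \frac{1}{2^{d-1}}\sum_{\btheta:\theta_i=+\mu}\Exp_{\btheta}\!\left[\sum_{t=1}^T \kl\bigl(\tfrac12+\langle\btheta,\bphi_t\rangle,\tfrac12+\langle\btheta^{(i)},\bphi_t\rangle\bigr)\right],
\end{equation*}
where $\btheta^{(i)}$ flips the $i$-th coordinate. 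The constraint $\mu \le 1/(20\sqrt d)$ together with $\|\bphi_t\|_2=1$ gives $|\langle\btheta,\bphi_t\rangle|\le\sqrt d\,\mu\le 1/20$, so both Bernoulli parameters lie in $[9/20,11/20]$. Using $\kl(p,q)\le (p-q)^2/(q(1-q))$ with $q(1-q)\ge 99/400$ and $(p-q)^2 = 4\mu^2\bphi_{t,i}^2$, each summand is at most $\frac{1600}{99}\mu^2\bphi_{t,i}^2$.

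\textbf{Step 3 (aggregation using the sphere constraint).} Summing the KL bound over $i$ and interchanging sums,
\begin{equation*}
    \sum_{i=1}^d \KL(P^{(i,+)}\|P^{(i,-)}) \;\le\; \frac{1600\,\mu^2}{99}\cdot\max_{\btheta}\Exp_{\btheta}\!\left[\sum_{t=1}^T\sum_{i=1}^d \bphi_{t,i}^2\right] \;=\; \frac{1600\,\mu^2 T}{99},
\end{equation*}
since $\sum_i \bphi_{t,i}^2=1$. By Pinsker, $\TV(P^{(i,+)},P^{(i,-)})\le\sqrt{\KL(P^{(i,+)}\|P^{(i,-)})/2}$, and then Cauchy--Schwarz gives
\begin{equation*}
    \sum_{i=1}^d \TV(P^{(i,+)},P^{(i,-)}) \;\le\; \sqrt{d\sum_{i=1}^d \tfrac12\KL(P^{(i,+)}\|P^{(i,-)})} \;\le\; \sqrt{\tfrac{800}{99}\cdot d T\mu^2} \;\le\; \sqrt{20\,dT\mu^2}.
\end{equation*}
Plugging this into Step 1 yields the claimed bound $\frac{d\mu^2}{2}\bigl(1-\sqrt{20T\mu^2/d}\bigr)$, and the corollary with $\mu^2 = d/(700T)$ follows by plugging in: $\sqrt{20/700}<0.17$ and $(1-0.17)/1400>0.00059$.

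\textbf{Main obstacles.} The only delicate points are (a) correctly handling the \emph{adaptive} choice $\bphi_t = \bphi_t(y_{1:t-1})$, which is why the chain rule must be applied to the joint law of the trajectory (not a product law), and (b) exploiting the sphere constraint $\|\bphi_t\|_2=1$ in the aggregation step---this is what turns a naive bound $\cO(dT\mu^2)$ per coordinate (which would give only $\cO(d/T)$) into the desired $\cO(T\mu^2)$ after summing over $i$, thereby producing the quadratic $d^2/T$ rate rather than $d/T$. Both are handled cleanly by the inequalities above; no further technical innovation is needed.
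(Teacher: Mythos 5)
Your proof is correct and follows essentially the same route as the paper, which simply delegates the Assouad/Pinsker/chain-rule core of your Steps 1--3 to Lemma 4 of \cite{shamir2013complexity} and then applies the identical Bernoulli bound $\kl(p,q)\le (p-q)^2/(q(1-q))$ together with the sphere constraint $\sum_i \bphi_{t,i}^2=1$ to get the $d^2/T$ rate. One small repair in Step 3: you cannot pull a single $\max_{\btheta}$ outside the sum over $i$ (the maximizing $\btheta$ may differ per coordinate, and $\sum_i\max_{\btheta}$ upper-bounds nothing here); summing the per-coordinate averages directly instead gives $\sum_{i}\KL(P^{(i,+)}\|P^{(i,-)})\le \frac{1600\mu^2}{99}\cdot\frac{1}{2^{d-1}}\sum_{\btheta}\Exp_{\btheta}[\sum_t\sum_{i:\theta_i=+\mu}\bphi_{t,i}^2]\le \frac{3200}{99}\mu^2 T$, whence $\sum_i\TV(P^{(i,+)},P^{(i,-)})\le\sqrt{\tfrac{1600}{99}\,dT\mu^2}\le\sqrt{20\,dT\mu^2}$, so all stated constants survive.
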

\begin{proof}
The proof of this result follows closely the proof of Theorem 3 of \cite{shamir2013complexity}. Clearly,
\begin{align*}
 \max_{\btheta \in \Theta} \Exp_{\btheta}[\| \btheta - \bthetahat \|_2^2] & \ge \Exp_{\btheta \sim \unif(\Theta)} \Exp_{\btheta}[\| \btheta - \bthetahat \|_2^2] \\
 & = \Exp_{\btheta \sim \unif(\Theta)} \Exp_{\btheta} \left [ \sum_{i=1}^d ( \btheta_i - \bthetahat_i)^2 \right ] \\
 & \ge \Exp_{\btheta \sim \unif(\Theta)} \Exp_{\btheta} \left [ \mu^2 \sum_{i=1}^d \I \{ \btheta_i \bthetahat_i < 0 \} \right ].
\end{align*}
As in \cite{shamir2013complexity}, we assume that the query strategy is deterministic conditioned on the past: $\bphi_t$ is a deterministic function of $y_1,\ldots,y_{t-1},\bphi_{1},\ldots,\bphi_{t-1}$, which is without loss of generality. We then need the following result.

\begin{lem}[Lemma 4 of \cite{shamir2013complexity}]\label{lem:reg_lb_shamir}
Let $\btheta$ be a random vector, none of whose coordinates is supported on 0, and let $y_1,y_2,\ldots,y_T$ be a sequence of query values obtained by a deterministic strategy returning a point $\bthetahat$ (that is, $\bphi_t$ is a deterministic function of $y_1,\ldots,y_{t-1},\bphi_{1},\ldots,\bphi_{t-1}$, and $\bthetahat$ is a deterministic function of $y_1,\ldots,y_T$). Then we have
\begin{align*}
\Exp_{\btheta \sim \unif(\Theta)}\Exp_{\btheta} \left [  \sum_{i=1}^d \I \{ \btheta_i \bthetahat_i < 0 \} \right ] \ge \frac{d}{2} \left ( 1 - \sqrt{\frac{1}{d} \sum_{i=1}^d \sum_{t=1}^T U_{t,i}} \right )
\end{align*}
where
\begin{align*}
U_{t,i} = \sup_{\btheta_j, j \neq i} \KL \Big (\Pr(y_t | \btheta_i > 0, \{ \btheta_j \}_{j \neq i}, \{ y_s \}_{s=1}^{t-1}) || \Pr(y_t | \btheta_i < 0, \{ \btheta_j \}_{j \neq i}, \{ y_s \}_{s=1}^{t-1}) \Big ) .
\end{align*}
\end{lem}

In our setting, $y_t$ is distributed as in \eqref{eq:reg_lb_query_model}. Thus, we have that
\begin{align*}
U_{t,i} = \sup_{\btheta_j, j \neq i} \KL \Big (\bern(1/2 + \sum_{j \neq i} \btheta_j \bphi_{t,j} + \mu \bphi_{t,i}) || \bern(1/2 + \sum_{j \neq i} \btheta_j \bphi_{t,j} - \mu \bphi_{t,i}) \Big ) .
\end{align*}
Note that:
\begin{lem}[Lemma 2.7 of \cite{tsybakov2009introduction}]\label{lem:bern_kl}
\begin{align*}
\KL(\bern(p) || \bern(q)) \le \frac{(p-q)^2}{q (1 - q)} .
\end{align*}
\end{lem}
Thus, by \Cref{lem:bern_kl}, we can bound
\begin{align*}
U_{t,i} & \le \frac{(2 \mu \bphi_{t,i})^2}{(1/2 + \sum_{j \neq i} \btheta_j \bphi_{t,j} - \mu \bphi_{t,i})(1 - 1/2 - \sum_{j \neq i} \btheta_j \bphi_{t,j} + \mu \bphi_{t,i})}  \le 20 \mu^2 \bphi_{t,i}^2
\end{align*}
where the last inequality follows by our assumption that $\mu \le \frac{1}{20\sqrt{d}}$ and since $\bphi_t \in \cS^{d-1}$, which implies that $1/2 + \sum_{j \neq i} \btheta_j \bphi_{t,j} - \mu \bphi_{t,i} \ge 9/20$ and $1 - 1/2 - \sum_{j \neq i} \btheta_j \bphi_{t,j} + \mu \bphi_{t,i} \ge 9/20$. 

By \Cref{lem:reg_lb_shamir} and this calculation, we can lower bound
\begin{align*}
\Exp_{\btheta \sim \unif(\Theta)} \Exp_{\btheta} \left [ \mu^2 \sum_{i=1}^d \I \{ \btheta_i \bthetahat_i < 0 \} \right ] & \ge \frac{d \mu^2}{2} \left ( 1 - \sqrt{\frac{1}{d} \sum_{i=1}^d \sum_{t=1}^T U_{t,i}} \right ) \\
& \ge  \frac{d \mu^2}{2 } \left ( 1 - \sqrt{\frac{1}{d} \sum_{i=1}^d \sum_{t=1}^T 20 \mu^2 \bphi_{t,i}^2} \right ) \\
& = \frac{d \mu^2}{2 } \left ( 1 - \sqrt{\frac{20 T \mu^2 }{d} }\right )
\end{align*}
where the final equality follows since $\bphi_t \in \cS^{d-1}$. This proves the first conclusion. The second conclusion follows by our choice of $\mu$.
\end{proof}

\begin{proof}[Proof of \Cref{thm:pac_lb}]
We consider the setting of instances in \Cref{thm:regression_lb} with $\Phi = \cS^{d-1}$ and $\Theta = \{ - \mu, \mu \}^d$. We first show how estimation error relates to finding $\epsilon$-good arms. Let $\bphist(\btheta)$ denote the optimal arm for $\btheta$. Clearly, $\bphist(\btheta) = \btheta/\| \btheta \|_2 = \btheta / (\sqrt{d} \mu)$ and $\bphist(\btheta)^\top \btheta = \sqrt{d}\mu$. Now consider a distribution $\pi \in \simplex_{\Phi}$. By definition, we have that $\pi$ is $\epsilon$-optimal if 
\begin{align*}
\Exp_{\bphi \sim \pi}[\btheta^\top \bphi] = \btheta^\top \Exp_{\bphi \sim \pi}[ \bphi]  \ge \sqrt{d} \mu - \epsilon .
\end{align*}
For a policy $\pi$, let $\bphi_\pi := \Exp_{\bphi \sim \pi}[ \bphi]$. Note that by the convexity of norms and Jensen's inequality: 
\begin{align*}
\| \bphi_\pi \|_2^2 \le \Exp_{\bphi \sim \pi}[ \| \bphi \|_2^2] = 1 .
\end{align*}
Write $\bphi_\pi = \bphist(\btheta) + \Delta$. Then,
\begin{align*}
1 \ge \| \bphist(\btheta) + \Delta \|_2^2 = 1 + \| \Delta \|_2^2 + 2 \bphist(\btheta)^\top \Delta \implies \bphist(\btheta)^\top \Delta \le -\frac{1}{2} \| \Delta \|_2^2 \implies \btheta^\top \Delta \le -\frac{\sqrt{d} \mu}{2} \| \Delta \|_2^2 . 
\end{align*}
However, $\bphi_\pi^\top \btheta = \bphist(\btheta)^\top \btheta + \Delta^\top \btheta$, so if $\pi$ is $\epsilon$-optimal, we have $\Delta^\top \btheta \ge -\epsilon$ which implies
\begin{align*}
 -\frac{\sqrt{d} \mu}{2} \| \Delta \|_2^2 \ge -\epsilon .
\end{align*}
In other words, if $\pi$ is $\epsilon$-optimal for $\btheta$, then $\bphi_\pi = \bphist(\btheta) + \Delta$ for some $\Delta$ with $\| \Delta \|_2^2 \le \frac{2 \epsilon}{\sqrt{d} \mu}$, so 
\begin{align*}
\bphi_\pi = \bphist(\btheta) + \Delta = \frac{\btheta}{\sqrt{d} \mu} + \Delta \implies \btheta = \sqrt{d} \mu (\bphi_\pi - \Delta) .
\end{align*}

Now assume that we have a $\pihat$ which is $\epsilon$-optimal, and denote $\bphihat := \bphi_{\pihat}$. Let $\btheta = \sqrt{d} \mu (\bphihat - \Delta)$ as above. Define the following estimator
\begin{align*}
\bthetahat = \left \{ \begin{matrix} \btheta' & \text{if } \exists \btheta' \in \Theta \text{ with } \btheta' = \sqrt{d} \mu(\bphihat - \Delta') \text{ for some } \Delta' \in \R^d, \| \Delta' \|_2^2 \le \frac{2\epsilon}{\sqrt{d} \mu} \\
\text{any } \btheta' \in \Theta & \text{otherwise} \end{matrix} \right .
\end{align*}
If $\bphihat$ is actually $\epsilon$-optimal for some $\btheta \in \Theta$, then the first condition is met and
\begin{align*}
\| \bthetahat - \btheta \|_2 = \| \sqrt{d} \mu ( \bphihat - \Delta') - \sqrt{d} \mu ( \bphihat - \Delta) \|_2 \le 2 \sqrt{d} \mu \sqrt{\frac{2 \epsilon}{\sqrt{d} \mu}} = \sqrt{8 \sqrt{d} \mu \epsilon}. 
\end{align*}
Thus, if we can find an $\epsilon$-good arm for $\btheta$, we can estimate $\btheta$ up to tolerance $\sqrt{8 \sqrt{d} \mu \epsilon}$.

Now set $\mu = \sqrt{d/700K}$. Let $\bthetahat$ denote the estimator constructed from $\bphihat$ as outlined above. Let $\cE$ be the event that $\bphihat$ is $\epsilon$-good for $\btheta$ and note that regardless of whether or not we are on $\cE$, $\| \bthetahat \|_2 \le d/\sqrt{700K}$. Thus,
\begin{align*}
\Exp_{\btheta}[\| \bthetahat - \btheta \|_2^2] & = \Exp_{\btheta}[\| \bthetahat - \btheta \|_2^2 \cdot \I \{ \cE \} + \| \bthetahat - \btheta \|_2^2 \cdot \I \{ \cE^c \} ] \\
& \le \frac{8 d \epsilon}{\sqrt{700K}} + \frac{4d^2}{700 K} \Pr_{\btheta}[\cE^c] 
\end{align*}
where the first inequality follows by our observation above that any $\epsilon$-good $\bphihat$ yields an estimate of $\btheta$ up to tolerance $\sqrt{2d\epsilon/\sqrt{K}}$.

However, by what we have shown above, there exists some $\btheta$ such that if we collect (no more than) $K$ samples,
\begin{align*}
\Exp_{\btheta} [  \| \bthetahat - \btheta \|_2^2] \ge 0.00059 \cdot d^2/K .
\end{align*}

This is a contradiction unless
\begin{align*}
\frac{8 d \epsilon}{\sqrt{700K}} + \frac{4d^2}{700 K} \Pr_{\btheta}[\cE^c] \ge 0.00059 \frac{d^2}{K} \iff \Pr_{\btheta}[\cE^c] \ge 0.10325 - \frac{2\sqrt{700} \epsilon \sqrt{K}}{d}. 
\end{align*}
It follows that if
\begin{align*}
0.10325 - \frac{2\sqrt{700} \epsilon \sqrt{K}}{d}\ge 0.1 \iff (\frac{0.00325}{2\sqrt{700}})^2 \cdot \frac{d^2}{\epsilon^2} \ge K
\end{align*}
we have that $\Pr_{\btheta}[\cE^c] \ge 0.1$.
\end{proof}

\subsection{Mapping to Linear MDPs}\label{sec:lin_bandit_lin_mdp}
We next show that the linear bandit instance of \Cref{thm:pac_lb} with parameter $\btheta$ (for $\btheta \in \Theta$ as in \Cref{thm:pac_lb}) can be mapped to a linear MDP with state space $\cS = \{ s_0, s_1, \sbar_2, \ldots, \sbar_{d+1} \}$, action space $\cA = \cS^{d-1} \cup \{ \be_{d+1}/2 \}$, parameters
\begin{align*}
\btheta_1 = \bm{0}, & \quad \btheta_h =  \be_1, h \ge 2 \\
\bmu_1(s_1) =  [2 \btheta, 1], & \quad \bmu_1(\sbar_i) = \frac{1}{d} [-2 \btheta, 1 ], \quad \bmu_h(s_i) = \be_i, h \ge 2
\end{align*}
 and feature vectors
\begin{align*}
& \bphi(s_0,\be_{d+1}/2) = \be_{d+1}/2, \quad \bphi(s_0,\bphitil) = [\bphitil/2,1/2], \quad \forall \bphitil \in \cS^{d-1} \\
& \bphi(s_1, \bphitil) = \be_1 ,  \quad \bphi(\sbar_i,\bphitil) =  \be_i, i \ge 2, \quad \forall \bphitil \in \cA.
\end{align*}
Note that, if we take action $\bphitil$ in state $s_0$, our expected episode reward is
\begin{align*}
P_1(s_1|s_0,\bphitil) \cdot H  + \sum_{i=2}^{d+1}  P_1(\sbar_i|s_0,\bphitil) \cdot 0 = H(\innerb{\btheta}{\bphitil} + 1/2)
\end{align*}
since we always acquire a reward of 1 in any state $s_1$, and a reward of 0 in any state $\sbar_i$, and the reward distribution is Bernoulli.

\begin{lem}
The MDP constructed above is a valid linear MDP as defined in \Cref{defn:linear_mdp}.
\end{lem}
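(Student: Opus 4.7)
The plan is to verify each of the requirements of \Cref{defn:linear_mdp} in turn, interpreting the construction as a $(d+1)$-dimensional linear MDP: (i) $\|\bphi(s,a)\|_2 \le 1$ for every $(s,a)$; (ii) $P_h(\cdot|s,a) = \innerb{\bphi(s,a)}{\bmu_h(\cdot)}$ defines a valid probability distribution with the total-variation bound $\||\bmu_h|(\cS)\|_2 \le \sqrt{d+1}$; and (iii) $r_h(s,a) = \innerb{\bphi(s,a)}{\btheta_h}$ takes values in $[0,1]$ with $\|\btheta_h\|_2 \le \sqrt{d+1}$.

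Checking feature norms is immediate: $\|\bphi(s_0,\be_{d+1}/2)\|_2 = 1/2$, $\|\bphi(s_0,\bphitil)\|_2 = 1/\sqrt{2}$ for $\bphitil \in \cS^{d-1}$, and $\|\bphi(s_1,\cdot)\|_2 = \|\bphi(\sbar_i,\cdot)\|_2 = 1$. For the transitions at $h=1$, a direct inner-product computation gives $P_1(s_1 \mid s_0,\bphitil) = \bphitil^\top\btheta + 1/2$ and $P_1(\sbar_i \mid s_0,\bphitil) = (1/2 - \bphitil^\top\btheta)/d$, which sum to $1$ and are nonnegative because $|\bphitil^\top\btheta| \le \|\btheta\|_2 = \sqrt{d}\mu$, and by the constraint $\mu \le 1/(20\sqrt{d})$ inherited from \Cref{thm:regression_lb} this is at most $1/20 < 1/2$. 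The analogous check for action $\be_{d+1}/2$ is trivial. For $h \ge 2$, the choice $\bmu_h(s_i) = \be_i$ together with $\bphi(s_1,\cdot) = \be_1$ and $\bphi(\sbar_i,\cdot) = \be_i$ makes $s_1$ and each $\sbar_i$ absorbing, so $P_h$ is clearly valid on reachable states.

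For the reward factorization, $\btheta_1 = \bm{0}$ and $\btheta_h = \be_1$ for $h \ge 2$ give $r_1 \equiv 0$, $r_h(s_1,\cdot) = 1$, and $r_h(\sbar_i,\cdot) = 0$ for $i \ge 2$, matching the intended MDP dynamics; the norm bounds $\|\btheta_h\|_2 \le 1 \le \sqrt{d+1}$ are trivial. For $\bmu_h$, one computes $\||\bmu_1|(\cS)\|_2 = \|[4|\btheta|,2]\|_2 = \sqrt{16d\mu^2 + 4}$, which under $d\mu^2 \le 1/400$ is at most $\sqrt{1/25 + 4} \le \sqrt{d+1}$ for all relevant $d$, and $\||\bmu_h|(\cS)\|_2 = \sqrt{d+1}$ exactly for $h \ge 2$.

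The main technical obstacle is a bookkeeping one: the definition requires $r_h$ and $P_h$ to factor linearly on the \emph{entire} state--action space, whereas the state $s_0$ is unreachable at $h \ge 2$, and a naive extension (e.g.\ $\bmu_h(s_0) = \bm{0}$) leaves $\sum_{s'} P_h(s' \mid s_0,a) < 1$ and $r_h(s_0,\bphitil) = \bphitil_1/2 \notin [0,1]$ at $h \ge 2$. This is resolved by augmenting the state space with an absorbing sink $s_\emptyset$ and choosing $\bmu_h$ so that all otherwise-missing transition mass from $s_0$ is routed to $s_\emptyset$, along with a correspondingly padded reward coordinate; this modification affects neither the feature/reward/measure norms (up to a constant that can be absorbed elsewhere) nor the reduction of the bandit problem to the MDP, since $s_0$ is never revisited under any policy for $h\ge 2$. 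The remainder is then a routine check that all factorizations and norm bounds continue to hold.
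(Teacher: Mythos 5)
Your proof is correct and follows essentially the same direct-verification route as the paper: compute the transition probabilities from the inner products, check nonnegativity via $|\bphitil^\top\btheta| \le \sqrt{d}\,\mu < 1/2$, check that they sum to one, and verify the feature, reward, and measure normalization bounds. You additionally flag (and patch via an absorbing sink state) the fact that the given factorization at the unreachable state $s_0$ for $h \ge 2$ does not literally yield a probability distribution or a $[0,1]$-valued reward---a wrinkle the paper's proof dismisses as ``obvious''---which is a legitimate observation, though your fix is only sketched rather than carried out.
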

\begin{proof}
For $\bphitil \in \cS^{d-1}$ we have,
\begin{align*}
P_1(s_1 | s_0, \bphitil) & = \innerb{\bphi(s_0,\bphitil)}{\bmu_1(s_1)} = \innerb{\btheta}{\bphitil} + 1/2 \ge 0 \\
P_1(\sbar_i | s_0, \bphitil) & = \innerb{\bphi(s_0,\bphitil)}{\bmu_1(\sbar_i)} = \frac{1}{d} ( -\innerb{\btheta}{\bphitil} + 1/2) \ge 0
\end{align*}
where the inequality follows since $|\innerb{\btheta}{\bphitil}| \le \cO(1/d)$ for all $\bphitil \in \cS^{d-1}$, by the choice of $\btheta$ in \Cref{thm:pac_lb}, and our condition that $K \ge d^2$.
In addition,
\begin{align*}
P_1(s_1|s_0,\bphitil)  + \sum_{i=2}^{d+1}  P_1(\sbar_i|s_0,\bphitil) =   \innerb{\bthetast}{\bphitil} + 1/2 + d \cdot \frac{1}{d} ( -\innerb{\bthetast}{\bphitil} + 1/2) = 1.
\end{align*}
Thus, $P_1(\cdot|s_0,\bphitil)$ is a valid probability distribution for $\bphitil \in \cS^{d-1}$. A similar calculation shows the same for action $\be_{d+1}/2$. It is obvious that $P_h(\cdot |s,\bphitil)$ is a valid distribution for all $s$ and $\bphitil \in \cA$.

It remains to check the normalization bounds. Clearly, by our construction of the feature vectors, $\| \bphi(s,a) \|_2 \le 1$ for all $s$ and $a$. It is also obvious that $\| \btheta_1 \|_2 \le \sqrt{d}$ and $\| \btheta_h \|_2 \le \sqrt{d}$. Finally,
\begin{align*}
\| | \bmu_1(\cS) | \|_2 = \left \| \sum_{s \in \cS \backslash s_0} |\bmu_1(s)| \right \|_2 = \| [2 \btheta, 1] + d \cdot \frac{1}{d} [2 \btheta, 1] \|_2 \le \sqrt{d}.
\end{align*}
Thus, all normalization bounds are met, so this is a valid linear MDP. 
\end{proof}

\newcommand{\bphibar}{\bar{\bphi}}
\paragraph{Lower bounding the performance of low-regret algorithms.}

Assume that we have access to the linear bandit instance constructed in  \Cref{thm:pac_lb}. That is, at every timestep $t$ we can choose an arm $\bphitil_t \in \cS^{d-1}$ and obtain and observe reward $y_t \sim \bern(\innerb{\btheta}{\bphitil_t} + 1/2)$. Using the mapping above, we can use this bandit to simulate a linear MDP as follows:
\begin{enumerate}
\item Start in state $s_0$ and choose any action $\bphitil_t \in \cA$ 
\item Play action $\bphitil_t$ in our linear bandit. If reward obtained is $y_t = 1$, then in the MDP transition to any of the states $s_1$. If the reward obtained is $y_t = 0$ transition to any of the states $\sbar_2,\ldots,\sbar_{d+1}$, each with probability $1/d$. If the chosen action was $\bphitil_t = \be_{d+1}/2$, then play any action in the linear bandit and transition to state $s_1$ with probability 1/2 and $\sbar_2,\ldots,\sbar_{d+1}$ with probability $1/2d$, regardless of $y_t$
\item For the next $H-1$ steps, take any action in the state in which you end up, transition back to the same state with probability 1, and receive reward of 1 if you are in $s_1$, and reward of 0 if you are in $\sbar_2,\ldots,\sbar_{d+1}$.
\end{enumerate}
Note that this MDP has precisely the transition and reward structure as the MDP constructed above.

\begin{lem}\label{lem:lin_ban_to_lin_mdp_policy}
Assume $\pi$ is $\epsilon$-optimal in the MDP constructed above. Then $\pi_1(\cdot |s_0)$ is $\epsilon/H$-optimal on the linear bandit instance with parameter $\btheta$ and action set $\cS^{d-1}$. 
\end{lem}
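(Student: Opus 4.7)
The plan is to first compute the value of an arbitrary policy $\pi$ in the constructed MDP and show it depends on $\pi$ only through its step-$1$ distribution $\pi_0 := \pi_1(\cdot \mid s_0)$. By construction $\btheta_1 = \bm{0}$ makes the step-$1$ reward identically zero, and for $h \geq 2$ the feature vector $\bphi(s,\cdot)$ at every post-step-$1$ state $s \in \{s_1, \sbar_2, \ldots, \sbar_{d+1}\}$ is action-independent and equal to $\be_i$ (when $s$ is the $i$-th such state in the natural indexing). Thus the per-step reward $\langle\bphi(s,\cdot), \be_1\rangle$ is $1$ on $s_1$ and $0$ on each $\sbar_i$, and the choice $\bmu_h(s_i) = \be_i$ makes these states absorbing. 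Hence $V_0^\pi$ is proportional to $\Pr_\pi[s_2 = s_1]$; in fact the computation given just before the lemma statement in the excerpt yields $V_0^\pi = H \cdot \Pr_\pi[s_2 = s_1]$.

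I would then expand $\Pr(s_2 = s_1 \mid s_0, a)$ using the linear-MDP identities already verified for this construction: for $a = \bphitil \in \cS^{d-1}$ it equals $\langle\btheta, \bphitil\rangle + 1/2$, while for $a = \be_{d+1}/2$ it equals $1/2$. Substituting,
\begin{align*}
V_0^\pi \;=\; H \left( \tfrac{1}{2} + \Exp_{a \sim \pi_0}\bigl[\langle\btheta, a\rangle \cdot \I\{a \in \cS^{d-1}\}\bigr] \right).
\end{align*}
Since the unique global maximizer over $a \in \cA$ is $\bphitil^{\star} := \btheta/\|\btheta\|_2 \in \cS^{d-1}$, which is also the optimal bandit arm, we get $V_0^{\star} = H(\|\btheta\|_2 + 1/2) = H \bigl(\sup_{\bphi' \in \cS^{d-1}} \langle\btheta,\bphi'\rangle + 1/2\bigr)$. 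Plugging these into the hypothesis $V_0^{\star} - V_0^\pi \leq \epsilon$ and dividing by $H$ yields precisely the bandit $\epsilon/H$-optimality inequality of \Cref{def:lin_bandit} for $\pi_0$.

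The only mild obstacle I anticipate is the action-set mismatch: $\pi_0$ may place mass on $\be_{d+1}/2 \notin \cS^{d-1}$. Since this action contributes exactly $\tfrac{1}{2}$ (the same value as any $\bphitil \in \cS^{d-1}$ with $\langle\btheta, \bphitil\rangle = 0$) to both the MDP and the bandit expected reward under the identification $\langle\btheta, \be_{d+1}/2\rangle := 0$, one can resolve this either by viewing $\pi_0$ as a policy on the extended bandit action set $\cS^{d-1} \cup \{\be_{d+1}/2\}$ (on which \Cref{thm:pac_lb} still applies, as this extra arm is strictly suboptimal) or by redistributing this mass to an arbitrary fixed direction in $\cS^{d-1}$ and absorbing the resulting, at-most-$\sqrt{d}\mu\cdot\pi_0(\be_{d+1}/2)$ discrepancy into the constants of the downstream \Cref{cor:lin_mdp_lb}. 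The rest of the argument is a direct bookkeeping of the value decomposition.
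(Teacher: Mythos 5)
Your proposal is correct and follows essentially the same route as the paper: compute $V_0^\pi = H\bigl(\tfrac12 + \sum_{\bphitil}\pi_1(\bphitil|s_0)\innerb{\btheta}{\bphitil}\bigr)$, compare to $V_0^\star = H(\sqrt{d}\mu + \tfrac12)$, and divide by $H$. Your handling of the extra arm $\be_{d+1}/2$ (which contributes exactly $\tfrac12$, so its mass can be reallocated to $\cS^{d-1}$ without loss) is the same observation the paper makes in its closing parenthetical.
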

\begin{proof}
Note that the value of $\pi$ in the linear MDP is given by 
\begin{align*}
\Vpi_0 = H \cdot \left ( \sum_{\bphitil \in \cS^{d-1}} \pi_1(\bphitil|s_0) (\innerb{\bphitil}{\btheta} + 1/2) + \pi_1(\be_{d+1}/2|s_0)/2 \right ) = H \cdot \left (  \sum_{\bphitil \in \cS^{d-1}} \pi_1(\bphitil|s_0) \innerb{\bphitil}{\btheta} + 1/2 \right )
\end{align*}
and the optimal policy is $\pi_1(\ast |s_0) = 1$, for $\ast = \argmax_{\bphitil \in \cS^{d-1}} \innerb{\bphitil}{\btheta}$, and has value $\Vst_0 = \inner{\ast}{\bthetast} + 1/2$. It follows that if $\pi$ is $\epsilon$-optimal, then 
\begin{align*}
& H \cdot \left (  \sum_{\bphitil \in \cS^{d-1}} \pi_1(\bphitil|s_0) \innerb{\bphitil}{\btheta} + 1/2 \right ) \ge H \cdot \left ( \innerb{\ast}{\btheta} + 1/2 \right ) - \epsilon \\
& \iff   \sum_{\bphitil \in \cS^{d-1}} \pi_1(\bphitil|s_0) \innerb{\bphitil}{\btheta} + 1/2  \ge  \innerb{\ast}{\btheta} + 1/2  - \epsilon/H
\end{align*}
This implies that $\pi_1(\cdot | s_0)$ is $\epsilon/H$-optimal for the linear bandit with parameter $\btheta$ and action set $\cS^{d-1}$ (note that any mass $\pi(\cdot | s_0)$ places on arm $\be_{d+1}/2$ can be instead allocated to any arm in $\cS^{d-1}$ without changing this result).
\end{proof}

\begin{proof}[Proof of \Cref{cor:lin_mdp_lb}]
Consider running the above procedure for some number of steps.
By \Cref{lem:lin_ban_to_lin_mdp_policy}, if we can identify an $\epsilon$-optimal policy in this MDP, we can use it to determine an $\epsilon/H$-optimal policy on our bandit instance. As we have used no extra information other than samples from the linear bandit to construct this, it follows that to find an $\epsilon/H$-optimal policy in the MDP, we must take at least the number of samples prescribed by \Cref{thm:pac_lb} for $\epsilon \leftarrow \epsilon/H$. 
\end{proof}


\section{Well-Conditioned Covariates}

\begin{algorithm}[h]
\caption{Collect Well-Conditioned Covariates}
\begin{algorithmic}[1]
\State \textbf{input:} confidence $\delta$, target minimum reachability $\epsilon$, step $h$, tolerance $\gamma^2$
\State Set $m \leftarrow \log(2/\epsilon)$ and $\lambda \leftarrow \min \{ 1, \frac{\epsilon}{4 m \gamma^2 } \}$
\Statex \algcomment{Use $\bLambda_{h,k-1} = \lambda I + \sum_{\tau = 1}^{k-1} \bphi_{h,\tau} \bphi_{h,\tau}^\top$ in \fes}
	\State $\{ (\cX_{i},\cD_{i},\bLambda_{i}) \}_{i=1}^m \leftarrow \cdalg(h,\delta,\gamma^2, m, \algname)$
\State \textbf{return} $\cD_\text{exp} = \bigcup_{i=1}^m \cD_i $
\end{algorithmic}
\label{alg:well_cond_cov}
\end{algorithm}

\begin{rem}[Handling unknown $\epsilon$]
Note that \Cref{alg:well_cond_cov} requires knowledge of $\epsilon$, a lower bound on the achievable minimum eigenvalue. In general a tight lower bound may not be known. In such situations, one can repeatedly run \Cref{alg:well_cond_cov} starting with $\epsilon = 1/d$ and halving $\epsilon$ each time until covariates satisfying the desired lower bound are obtained. Once $\epsilon$ reaches an achievable level, \Cref{thm:well_cond_cov} will apply and the desired covariates will be collected. Note that this procedure only adds a complexity of $\cO(\log(1/\epsilon_0))$ to the complexity, where $\epsilon_0 = \sup_\pi \lambda_{\min} (\Exp_\pi [ \bphi(s_h,a_h) \bphi(s_h,a_h)^\top])$. 
\end{rem}

\begin{proof}[Proof of \Cref{thm:well_cond_cov}]

\textbf{Step 1: Collected Data is Full-Rank.}
We first argue that $\{\calX_i\}_{i=1}^m$ is well-spread on full $d$ dimension. For the following, let $\cY = \cup_{i=1}^m \cX_i$. 
We have, for any $\bv \in \cS^{d-1}$,
\begin{align*}
    \sup_{\pi} \bv^\top & \left( \E_{\pi}[ \bphi(s_h,a_h) \bphi(s_h,a_h)^\top ]  \right) \bv \\
    &  \le \sup_{\pi} \bv^\top  \left( \E_{\pi}[ \bphi(s_h,a_h) \bphi(s_h,a_h)^\top \cdot \I \{ \bphi(s_h,a_h) \in \cY\} ]  \right) \bv \\
    & \qquad + \sup_{\pi} \bv^\top  \left( \E_{\pi}[ \bphi(s_h,a_h) \bphi(s_h,a_h)^\top \cdot \I \{ \bphi(s_h,a_h) \in \cX_{m+1} \} ]  \right) \bv \\
    & \leq \sup_{\pi} \bv^\top  \left( \E_{\pi}[ \bphi(s_h,a_h) \bphi(s_h,a_h)^\top \cdot \I \{ \bphi(s_h,a_h) \in \cY\} ]  \right) \bv + \epsilon/2, 
\end{align*}
where the last inequality comes from the definition of $\calX_{m+1}$, \Cref{thm:policy_cover}, and our choice of $m$. On the other hand, by assumption we have 
\begin{align*}
\sup_{\pi} \bv^\top & \left( \E_{\pi}[ \bphi(s_h,a_h) \bphi(s_h,a_h)^\top ]  \right) \bv \ge \sup_{\pi} \lambda_{\min}  \left( \E_{\pi}[ \bphi(s_h,a_h) \bphi(s_h,a_h)^\top ]  \right) \ge \epsilon
\end{align*}
Therefore, we must have
\begin{align*}
    \max_{\bphi\in \cY } \|\bphi \|_{\bv \bv^\top}^2 
    \geq \sup_{\pi} \bv^\top  \left( \E_{\pi}[ \bphi(s_h,a_h) \bphi(s_h,a_h)^\top \cdot \I \{ \bphi(s_h,a_h) \in \cY\} ]  \right) \bv
    \geq \epsilon/2
\end{align*}

\textbf{Step 2: Lower Bounding Eigenvalue.}
The next step is to show the lower bound of the eigenvalue of $\sum_{i=1}^m \bLambda_{i}$, which is equivalent to upper bounding the eigenvalue of  $\left(\sum_{i=1}^m \bLambda_{i}\right)^{-1}$.
 
Denote the eigenvectors of $\sum_{i=1}^m \bLambda_{i}$ as $\{\bu_j\}_{j=1}^d$. Now for any $\bu_i$ and any corresponding $\bphi = \argmax_{\bphi'\in \cY} \|\bphi'\|_{\bu_j \bu_j^\top}^2$, we have
\begin{align*}
    \bu_j^\top \left(\sum_{i=1}^m \bLambda_i \right)^{-1} \bu_j
    & =  \frac{1}{\|\bphi\|^2_{\bu_j \bu_j^\top}}(\bu_j^\top \bphi) \bu_j^\top \left(\sum_{i=1}^m \bLambda_i\right)^{-1} (\bu_j^\top \bphi) \bu_i  \\
    &\leq  \frac{1}{\|\bphi\|^2_{\bu_j \bu_j^\top}} \bphi^\top \left(\sum_{i=1}^m \bLambda_i\right)^{-1} \bphi \\
    & \leq 2\epsilon^{-1} \bphi^\top \left(\sum_{i=1}^m \bLambda_i\right)^{-1} \bphi  \\
    & \leq 2\epsilon^{-1} \bphi^\top \left(\sum_{i=1}^m \bLambda_i \one\{\bphi \in \calX_i \}\right)^{-1} \bphi \\
   &  \leq  2 \gamma^2 \epsilon^{-1}
\end{align*}
where the last inequality comes from Theorem~\ref{thm:policy_cover} and the second inequality holds since, writing $\bphi = a \bu_j + b \bv$ for $\bv^\top \bu_j = 0$, we have
\begin{align*}
\bphi^\top \left(\sum_{i=1}^m \bLambda_i\right)^{-1} \bphi & = a^2 \bu_j^\top \left(\sum_{i=1}^m \bLambda_i\right)^{-1} \bu_j + \underbrace{b^2 \bv^\top \left(\sum_{i=1}^m \bLambda_i\right)^{-1} \bv}_{\ge 0} + \underbrace{2 a b \bu_j^\top \left(\sum_{i=1}^m \bLambda_i\right)^{-1} \bv}_{= 0 }.
\end{align*}

\textbf{Step 3: Concluding the Proof.}
Now we are ready for final proof. Using what we have just shown, we have for any $\bu_j$:

\begin{align*}
    \bu_j^\top\left(\sum_{i=1}^m \sum_{(s_{h,\tau},a_{h,\tau}) \in \calD_i} \bphi(s_{h,\tau},a_{h,\tau}) \bphi(s_{h,\tau},a_{h,\tau})^\top\right) \bu_j
    = \bu_j^\top\left(\sum_{i=1}^m \bLambda_i \right) \bu_j - m \lambda
    \geq \left(\frac{\epsilon}{2\gamma^2} - \lambda m \right)    
    \geq \frac{\epsilon}{4 \gamma^2}.
\end{align*}
The last inequality holds since $\lambda = \min \{ 1, \frac{\epsilon}{4 m \gamma^2 } \}$.
Now notice that from \Cref{cor: policy cover}, we have total sample complexity at most
\begin{align*}
\cOtil \bigg ( \sum_{i=1}^m 2^i & \cdot \max \bigg \{ \frac{d}{\gamma^2_i}  \log \frac{2^i/\lambda}{\gamma^2_i},   d^4 H^3 m^3 \log^{3} \frac{1}{\delta}  \bigg \} \bigg ) .
\end{align*}
Using $m = \log(2/\epsilon)$ and $\lambda = \min \{ 1, \frac{\epsilon}{4 m \gamma^2 } \}$, we have the complexity upper bound for any $h$ of
\begin{align*}
\cOtil \bigg ( \frac{1}{\epsilon} \cdot \max \bigg \{ \frac{d}{\gamma^2}, d^4 H^3  \log^{3} \frac{1}{\delta}  \bigg \} \bigg ) .
\end{align*}

\end{proof}

\end{document}